\newtheorem{thm}{Theorem}
\newtheorem{lemma}{Lemma}
\newtheorem{definition}{Definition}
\newtheorem{rmk}{Remark}
\newtheorem{fact}{Fact}
\newtheorem{clm}{Claim}
\newmdtheoremenv{model}{Model}
\algnewcommand\algorithmicinput{\textbf{Input:}}
\algnewcommand\INPUT{\item[\algorithmicinput]}
\algnewcommand\algorithmicoutput{\textbf{Output:}}
\algnewcommand\OUTPUT{\item[\algorithmicoutput]}
\DeclareMathOperator*{\argmin}{argmin}
\DeclareMathOperator{\E}{\mathbb{E}}
\renewcommand{\epsilon}{\varepsilon}
\newcommand{\D}{\mathcal{D}}
\newcommand{\cC}{\mathcal{C}}
\newcommand{\w}{\bold{w}}
\newcommand{\alloc}{\bold{a}}
\newcommand{\cvec}{\bold{c}}
\newcommand{\dvec}{\bold{d}}
\newcommand{\R}{\mathbb{R}}
\newcommand{\ones}{\mathbbm{1}}
\newcommand{\saeed}[1]{}
\newcommand{\juba}[1]{}
\newcommand{\hadi}[1]{}
\newcommand{\emily}[1]{}
\newcommand \travis [1]{}
\newcommand{\ar}[1]{}
\newcommand{\zs}[1]{}
\newcommand{\mk}[1]{}
\newcommand \reals {\mathbb{R}}
\newcommand \expect {\operatorname*{\mathbb{E}}}
\newcommand \prob {\operatorname*{Pr}}
\newcommand \ind [1]{\mathbb{I}\{#1\}}
\newcommand \indinf [1]{\mathbb{I}_\infty \{#1\}}
\newcommand \regret {\operatorname{R}}
\newcommand \optregret {\mathcal{R}}
\newcommand \solspace {\mathcal{I}}
\newcommand \cF {\mathcal{F}}
\newcommand \pdim {\operatorname{PDim}}
\newcommand \Rfairdet {\mathcal{R}_{\text{fair}}}
\newcommand \Rfairrand {\widehat{\mathcal{R}}_{\text{fair}}}
\newcommand{\ceilstep}[1]{\operatorname{ceil}_{#1}}
\newcommand\shortversion[1]{}
\newcommand\longversion[1]{#1}
\title{Algorithms and Learning for Fair Portfolio Design}
\author[]{Emily Diana, Travis Dick, Hadi Elzayn, Michael Kearns, Aaron Roth}
\author[]{Zachary Schutzman, Saeed Sharifi-Malvajerdi, Juba Ziani}
\affil[]{\textit{University of Pennsylvania}}
\begin{document}

\maketitle

\begin{abstract}
We consider a variation on the classical finance problem of optimal portfolio design. In our setting, a large
population of consumers is drawn from some distribution over {\em risk tolerances\/}, and each consumer
must be assigned to a portfolio of lower risk than her tolerance. The consumers may also belong to underlying
groups (for instance, of demographic properties or wealth), and the goal is to design a small number of
portfolios that are fair across groups in a particular and natural technical sense.

Our main results are algorithms for optimal and near-optimal 
portfolio design for both social welfare and fairness objectives, both with and without assumptions on the underlying group structure.
We describe an efficient algorithm based on an internal two-player zero-sum game that learns
near-optimal fair portfolios {\em ex ante\/} and show experimentally that it can be used to obtain a small set of fair
portfolios {\em ex post\/} as well. For the special but natural case in which group structure coincides with
risk tolerances (which models the reality that wealthy consumers generally tolerate greater risk), we give
an efficient and optimal fair algorithm. We also provide generalization guarantees for the underlying
risk distribution that has no dependence on the number of portfolios and illustrate the theory with
simulation results.
\end{abstract}

\longversion{
\clearpage
\tableofcontents
\clearpage
}

\section{Introduction}

In this work we consider algorithmic and learning problems in a model for
the fair design of financial products. We imagine a large population of individual
retail investors or {\em consumers\/}, each of whom has her own tolerance
for investment risk in the form of a limit on the variance of returns. It is well known
in quantitative finance that for any set of financial assets, the optimal expected returns on
investment are increasing in risk.

A large retail investment firm (such as Vanguard or Fidelity) wishes to design 
portfolios to serve these consumers under the common practice of
assigning consumers only to portfolios with lower risks than their tolerances. 
The firm would like to design and offer only a {\em small\/} number of such products ---
much smaller than the number of consumers --- since the execution and maintenance of
portfolios is costly and ongoing.
The
overarching goal is to design the products to minimize consumer {\em regret\/} ---
the loss of returns due to being assigned to lower-risk portfolios compared to the
bespoke portfolios saturating tolerances --- both with and without fairness
considerations.

We consider a notion of group fairness adapted from the literature on fair division.
Consumers belong to underlying groups that may be defined by standard demographic features such as race, gender or age,
or they may be defined by the risk tolerances themselves, as higher risk appetite is
generally correlated with higher wealth. We study 
{\em minmax group fairness}, in which the goal is to minimize the maximum regret across groups --- i.e. to optimize for the \emph{least well-off} group. Compared to the approach of constraining regret to be equal across groups (which is not even always feasible in our setting), minmax optimal solutions have the property that they Pareto-dominate 
regret-equalizing solutions:
every group has regret that can only be smaller than it would be if regret were constrained to be equalized across groups. 

\paragraph{Related Work.}
Our work generally falls within the literature on fairness in machine learning, which is too broad to survey in detail here; see~\cite{cacmsurvey} for a recent overview. We are perhaps closer in spirit to
research in fair division or allocation problems~\cite{maxmin1,maxmin2,maxmin3}, in which a limited resource must be distributed across a
collection of players so as to maximize the utility of the \emph{least} well-off; here, the resource in question is
the small number of portfolios to design. However, we are not aware of any technical connections between
our work and this line of research. Within the fairness in machine learning literature, our work is closest to \emph{fair facility location} problems~\cite{jung2019center,mahabadi2020individual}, which attempt to choose a small number of ``centers'' to serve a large and diverse population and ``fair allocation'' problems that arise in the context of predictive policing~\cite{Ensign,elzayn2019fair,donahue2020fairness}.

There seems to have been relatively little explicit consideration of fairness issues in quantitative finance
generally and optimal portfolio design specifically. An exception is~\cite{iancu2014fairness},
in which the interest is
in fairly amortizing transaction costs of a single portfolio across investors rather than designing
multiple portfolios to meet a fairness criterion.

\paragraph{Our Results and Techniques.}
In Section~\ref{sec:dp}, we provide a dynamic program to find $p$ products that optimize the average regret of a single population. In Section~\ref{sec:fairdp}, we divide the population into different groups and develop techniques to guarantee minmax group fairness: in Section~\ref{sec:separation}, we show a separation between deterministic solutions and randomized solutions (i.e. distributions over sets of $p$ products) for minmax group fairness; in Section~\ref{sec:exante_fair}, we leverage techniques for learning in games to find a distribution over products that optimizes for minmax group fairness; in Section~\ref{sec:pure_minmax_few_groups}, we focus on deterministic solutions and extend our dynamic programming approach to efficiently optimize for the minmax objective when the number of groups is constant; in Section~\ref{sec:interval_minmax}, we study the natural special case in which groups are defined by disjoint intervals on the real line and give algorithms that are efficient even for large numbers of groups. In Section~\ref{sec:generalization}, we show that when consumers' risk tolerances are drawn i.i.d. from an unknown distribution, empirical risk bounds from a sample of consumers generalize to the underlying distribution, and we prove tight sample complexity bounds. Finally, in Section~\ref{sec:experiments}, we provide experiments to complement our theoretical results. \shortversion{All proofs and technical details are in the full version of the paper, provided in the supplement.}

\section{Model and Preliminaries}\label{sec:model}

We aim to create products (portfolios) consisting of weighted collections of assets with differing means and standard deviations (risks). 
Each consumer is associated with a real number $\tau \in \R_{\ge 0}$, which is the {\em risk tolerance\/} of the consumer --- an upper bound on the standard deviation of returns.
We assume that consumers' risk tolerances are bounded.

\shortversion{\paragraph{Bespoke Problem.}
We adopt the standard Markowitz framework \cite{markowitz1952portfolio} for portfolio design in which the \emph{bespoke} 
portfolio achieves the maximum expected return that the consumer can realize, subject to the constraint that the overall risk not exceed her tolerance $\tau$. 
% This is the financial product she would most prefer if there were a product designed specifically for her. 
We defer to the supplement the standard formulation of the bespoke problem, which can be solved by standard convex optimization methods;
in Section \ref{sec:experiments} we provide a detailed example.
Here we only note that the solution is summarized by a function $r(\tau)$, which is non-decreasing (by definition), 
and since consumer tolerances are bounded, we let $B$ denote the maximum value of $r(\tau)$ across all consumers.
}
\longversion{
\subsection{Bespoke Problem} We adopt the standard Markowitz framework \cite{markowitz1952portfolio} for portfolio design. Given a set of $m$ assets with mean $\mu \in \R_+^m$ and covariance matrix $\Sigma \in \R^{m \times m}$, and a consumer risk tolerance $\tau$, the \emph{bespoke} portfolio achieves the maximum expected return that the consumer can realize by assigning weights $\alloc$ over the assets (where the weight of an asset represents the fraction of the portfolio allocated to said asset) subject to the constraint that the overall risk -- quantified as the standard deviation of the mixture $\alloc$ over assets -- does not exceed her tolerance $\tau$. %This is the financial product she would most prefer if there were a product designed specifically for her. 
Finding a consumer's bespoke portfolio can be written down as an optimization problem. We formalize the bespoke problem in Equation~\eqref{eq:return-function} below, and call the solution $r(\tau)$. 
\begin{equation}\label{eq:return-function}
r (\tau) = \max_{\alloc \in \R^m} \left\{ \alloc^\top \mu \, | \, \alloc^\top \Sigma \, \alloc \le \tau^2, \ones^\top \alloc = 1\right\} %\alloc \geq 0\right\}
\end{equation}
Here we note that optimal portfolios are summarized by function $r(\tau)$, which is non-decreasing. Since consumer tolerances are bounded, we let $B$ denote the maximum value of $r(\tau)$ across all consumers.
}

\longversion{\subsection{A Regret Notion}}
\shortversion{\paragraph{A Regret Notion.}}
Suppose there are $n$ consumers to whom we want to offer products. Our goal is to design $p \ll n$ products
that minimize a notion of \emph{regret} for a given set of consumers.
A product has a risk (standard deviation) which we will denote by $c$. 
We assume throughout that in addition to the selected $p$ products, there is always a risk-free product (say cash) available that has zero return;
we will denote this risk-free product by $c_0 \equiv 0$ throughout the paper ($r(c_0) = 0$). 
For a given consumer with risk threshold $\tau$, the regret of the consumer with respect to a set of products $\cvec = (c_1, c_2, \ldots, c_p) \in \R_{\ge 0}^p$ 
is the difference between the return of her bespoke product 
and the maximum return of any product with risk that is 
\emph{less than or equal to} her risk threshold. To formalize this, the regret of products $\cvec$ for a consumer with risk threshold $\tau$ is defined as $\regret_\tau (\cvec) = r(\tau) - \max_{c_j \le \tau} r (c_j)$. Note since $c_0 = 0$ always exists, the $\max_{c_j \le \tau} r (c_j)$ term is well defined. Now for a given set of consumers $S = \{ \tau_i \}_{i=1}^n$, the regret of products $\cvec$ on $S$ is simply defined as the \emph{average regret} of $\cvec$ on $S$:
 \begin{equation}\label{eq:avg-regret}
 \regret_S (\cvec) \triangleq \frac{1}{n} \sum_{i=1}^n \regret_{\tau_i} (\cvec)
 \end{equation}
When $S$ includes the entire population of consumers, we call $\regret_S(\cvec)$ the \emph{population regret}. The following notion for the \emph{weighted regret} of $\cvec$ on $S$, given a vector $\w = (w_1,\ldots,w_n)$ of weights for each consumer, will be useful in Sections~\ref{sec:dp} and \ref{sec:fairdp}:
\shortversion{
$\regret_S (\cvec,\w) = \sum_{i=1}^n w_i \regret_{\tau_i} (\cvec)$.
}
\longversion{
 \begin{equation}\label{eq:weighted-regret}
 \regret_S (\cvec,\w) \triangleq \sum_{i=1}^n w_i \regret_{\tau_i} (\cvec)
 \end{equation}
 }Absent any fairness concern, our goal is to design efficient algorithms to minimize $\regret_S (\cvec)$ for a given set of consumers $S$ and target number of products $p$: $\min_{\cvec} \regret_{S} (\cvec)$. This will be the subject of Section~\ref{sec:dp}. We can always find an optimal set of products as a subset of the $n$ consumer risk thresholds $S = \{\tau_i\}_i$, because if any product $c_j$ is not in $S$, we can replace it by $\min \{ \tau_i \mid \tau_i \geq c_j \}$ without decreasing the return for any consumer\footnote{We consider a more general regret framework in Appendix~\ref{app:two-sided} which allows consumers  to be assigned to products with risk higher than their tolerance, and in this case it is not necessarily optimal to always place products on the consumer risk thresholds.}. We let $C_p (S)$ represent the set of all subsets of size $p$ for a given set of consumers $S$: $C_p (S) = \left\{ \cvec = (c_1, c_2, \ldots, c_p) \subseteq S \right\}$. We therefore can reduce our regret minimization problem to the following problem:
\begin{equation}\label{eq:opt}
\mathcal{R} \left(S,p \right) \triangleq \min_{\cvec \, \in C_p (S)} \regret_{S} (\cvec)
\end{equation}
Similarly, we can reduce the weighted regret minimization problem to the following problem: 
\begin{equation}\label{eq:opt_weighted}
\mathcal{R} \left(S,\w,p \right) \triangleq \min_{\cvec \, \in C_p (S)}  \regret_S (\cvec,\w).
\end{equation}

\longversion{\subsection{Group Fairness: \emph{ex post} and \emph{ex ante}}}
\shortversion{\paragraph{Group Fairness: \emph{ex post} and \emph{ex ante}.}}
Now suppose consumers are partitioned into $g$ groups: $S = \{ G_k \}_{k=1}^g$, e.g. based on their attributes like race or risk levels.  Each $G_k$ consists of the consumers of group $k$ represented by their risk thresholds. We will often abuse notation and write $i \in G_k$ to denote that consumer $i$ has threshold $\tau_i \in G_k$. Given this group structure, minimizing the regret of the whole population absent any constraint might lead to some groups incurring much higher regret than others. With fairness concerns in mind, we turn to the  design of efficient algorithms to minimize the maximum regret over groups (we call this maximum ``group regret''):
\begin{equation}\label{eq:fair-opt-det}
\Rfairdet \left(S,p \right) \triangleq \min_{\cvec \, \in C_p (S)} \left\{ \max_{1 \le k \le g} \regret_{G_k} (\cvec) \right\}
\end{equation}
The set of products $\cvec$ that solves the above minmax problem\footnote{In Appendix~\ref{app:pop_vs_group_optim}, we show that optimizing for population regret can lead to arbitrarily bad group regret in relative terms, and vice-versa.} will be said to satisfy \emph{ex post} minmax fairness (for brevity, we call this ``fairness'' throughout). One can relax problem~\eqref{eq:fair-opt-det} by allowing the designer to \emph{randomize} over sets of $p$ products and output a \emph{distribution} over $C_p (S)$ (as opposed to one deterministic set of products) that minimizes the maximum \emph{expected} regret of groups:
\begin{equation}\label{eq:fair-opt-rand}
\Rfairrand \left(S,p \right) \triangleq \min_{\cC \in \Delta (C_p (S))} \left\{ \max_{1 \le k \le g} \E_{\, \cvec \sim \cC} \left[ \regret_{G_k} (\cvec) \right] \right\}
\end{equation}
where $\Delta (A)$ represents the set of probability distributions over the set $A$, for any $A$. The distribution $\cC$ that solves the above minmax problem will be said to satisfy \emph{ex ante} minmax fairness  --- meaning fairness is satisfied in expectation \emph{before} realizing any set of products drawn from the distribution $\cC$ --- but there is no fairness guarantee on the realized draw from $\cC$. Such a notion of fairness is useful in settings in which the designer has to make repeated decisions over time and has the flexibility to offer different sets of products in different time steps. In Section~\ref{sec:fairdp}, we provide algorithms that solve both problems cast in Equations~(\ref{eq:fair-opt-det}) and (\ref{eq:fair-opt-rand}). We note that while there is a simple integer linear program (ILP) that solves  Equations~(\ref{eq:fair-opt-det}) and (\ref{eq:fair-opt-rand}), such an ILP is often intractable to solve. We use it in our experiments on small instances to evaluate the quality of our efficient algorithms.

\section{Regret Minimization Absent Fairness} \label{sec:dp}

In this section we provide an efficient dynamic programming algorithm for
finding the set of $p$ products that minimizes the (weighted) regret for a
collection of consumers. This dynamic program will be used as a subroutine in our algorithms for finding
optimal products for minmax fairness.

Let $S = \{\tau_i\}_{i=1}^n$ be a collection of consumer risk thresholds and
$\mathbf{w} = (w_i)_{i=1}^n$ be their weights, such that $\tau_1 \leq \dots \leq
\tau_n$ (w.l.o.g). The key idea is as follows: suppose that consumer index $z$ defines the
riskiest product in an optimal set of $p$ products.  Then all consumers $z,
\dots, n$ will be assigned to that product and the consumers $1, \dots, z-1$
will not be. Therefore, if we knew the highest risk product in an optimal
solution, we would be left with a smaller sub-problem in which the goal is to
optimally choose $p-1$ products for the first $z-1$ consumers. Our dynamic
programming algorithm finds the optimal $p'$ products for the first $n'$
consumers for all values of $n' \leq n$ and $p' \leq p$.

\shortversion{
We defer technical details to the supplement and provide the guarantees of our algorithm below:
\begin{thm} \label{thm:dp}
  There exists an algorithm that, given a collection of consumers $S =
  \{\tau_i\}_{i=1}^n$ with weights $\mathbf{w} = (w_i)_{i=1}^n$ and a target
  number of products $p$, outputs a collection of products $\mathbf{c} \in
  C_p(S)$ with minimal weighted regret: $R_S(\mathbf{w}, \mathbf{c}) =
  \optregret(S, \mathbf{w}, p)$. This algorithm runs in time $O(n^2 p)$.
\end{thm}
}

\longversion{
More formally for any $n' \leq n$, let $S[n'] = \{\tau_i\}_{i=1}^{n'}$ and
$\mathbf{w}[n']$ denote the $n'$ lowest risk consumers and their weights. For
any $n' \leq n$ and $p' \leq p$, let $T(n', p') = \optregret(S[n'],
\mathbf{w}[n'], p')$ be the optimal weighted regret achievable in the
sub-problem using $p'$ products for the first $n'$ weighted consumers. We make use of the following recurrence relations:

\begin{lemma}\label{lem:dpRelations}
  The function $T$ defined above satisfies the following properties:
  \begin{enumerate}
  \item For any $1 \leq n' \leq n$, we have $T(n', 0) = \sum_{i=1}^{n'} w_i
  r(\tau_i)$.
  \item For any $1 \leq n' \leq n$ and $0 \leq p' \leq p$, we have
  \[
  T(n', p') =
  \min_{z \in \{p', \dots, n'\}} \left(
  T(z-1, p'-1) + \sum_{i=z}^{n'} w_i\bigl(r(\tau_i) - r(\tau_z)\bigr)
  \right).
  \]
  \end{enumerate}
\end{lemma}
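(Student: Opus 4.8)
The plan is to verify the two recurrence relations directly from the definition of $T(n',p') = \optregret(S[n'], \mathbf{w}[n'], p')$ given in Equation~\eqref{eq:opt_weighted}, using the structural observation already set up in the text: that an optimal solution can be taken to place products on consumer thresholds, and that the \emph{highest-risk} product in the solution determines a clean split of the consumer set.

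For the base case (part 1), when $p' = 0$ the only products available to a consumer are the $p'$ chosen ones plus the free product $c_0 = 0$; with no chosen products, every consumer $i$ is assigned to $c_0$ and earns return $r(c_0) = 0$, so her regret is exactly $r(\tau_i)$. Summing with weights over the first $n'$ consumers gives $T(n',0) = \sum_{i=1}^{n'} w_i\, r(\tau_i)$. This is immediate and needs only a sentence.

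For part 2, fix $n'$ and $p'$ with $p' \geq 1$ (the case $p'=0$ is covered by part 1, and one should check the formula degenerates correctly, e.g.\ that the $z$-range and the empty inner optimization behave as expected; when $p'=0$ the claimed recurrence is vacuous/consistent). I would argue two inequalities. For ``$\leq$'': for any $z \in \{p',\dots,n'\}$, take an optimal $(p'-1)$-product solution $\cvec'$ for the first $z-1$ consumers and form $\cvec = \cvec' \cup \{\tau_z\}$, a valid size-$p'$ subset of $S[n']$. Consumers $1,\dots,z-1$ have threshold below $\tau_z$ so their regret is unchanged, contributing $T(z-1,p'-1)$; consumers $z,\dots,n'$ all have threshold $\geq \tau_z$, so adding the product $\tau_z$ can only weakly help them, and in particular each such consumer $i$ now attains return at least $r(\tau_z)$, giving regret at most $r(\tau_i) - r(\tau_z)$, which is the bound used. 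Hence $T(n',p') \leq T(z-1,p'-1) + \sum_{i=z}^{n'} w_i(r(\tau_i) - r(\tau_z))$ for every such $z$, so also for the minimum. For ``$\geq$'': take an optimal size-$p'$ solution $\cvec^\star$ for $S[n']$, WLOG a subset of thresholds, and let $z$ be the index of its highest-risk product $c_{\max} = \tau_z$ (if $\cvec^\star$ has fewer than $p'$ distinct useful products one can pad it, and one must also handle the case where no product is used by anyone in $\{1,\dots,n'\}$, which reduces to part 1; here $z \geq p'$ since we need $p'-1$ further products among the first $z-1$ consumers — this is the one place to be slightly careful). Consumers $z,\dots,n'$ are all assigned to $\tau_z$ (nothing higher exists and nothing lower helps them more, since $r$ is non-decreasing and $\tau_z$ is itself $\le$ their thresholds), contributing exactly $\sum_{i=z}^{n'} w_i(r(\tau_i) - r(\tau_z))$; the remaining products of $\cvec^\star$ lie among $\{\tau_1,\dots,\tau_{z-1}\}$ (we can push any product strictly between $\tau_{z-1}$ and $\tau_z$ down without harm) and restricted to the first $z-1$ consumers they form a feasible size-$(p'-1)$ solution, contributing at least $T(z-1,p'-1)$. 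Adding these gives $T(n',p') \geq T(z-1,p'-1) + \sum_{i=z}^{n'} w_i(r(\tau_i) - r(\tau_z))$ for this particular $z$, hence $\geq$ the minimum over $z$.

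The main obstacle is not any single calculation but rather handling the boundary and degeneracy cases cleanly: when an optimal solution uses strictly fewer than $p'$ ``effective'' products, when some chosen product is never the best option for any of the first $n'$ consumers, and verifying that the split index $z$ can always be taken in the stated range $\{p',\dots,n'\}$. The cleanest fix is to allow ``dummy'' products colocated with others (or with $c_0$), so that every size-$p'$ solution can be assumed to have a well-defined highest-risk product sitting on some threshold $\tau_z$; this reconciles the recurrence with part~1 as the $p'=0$ base case and makes the two-inequality argument go through uniformly.
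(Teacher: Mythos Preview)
Your proposal is correct and takes essentially the same approach as the paper: both arguments hinge on identifying the highest-risk product $\tau_z$ in an optimal solution and splitting the regret contribution at index $z$. The paper packages this via an auxiliary quantity $T(n',p',z)$ (optimal regret conditional on the top product being $\tau_z$) and then minimizes over $z$, whereas you unfold the same idea into explicit ``$\leq$'' and ``$\geq$'' directions with more attention to degenerate cases; the substance is identical.
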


\begin{proof}
See Appendix~\ref{app:dynamic-program}.
\end{proof}

The running time of our dynamic programming algorithm, which uses the above
recurrence relations to solve all sub-problems, is summarized below.

\begin{thm} \label{thm:dp}
  There exists an algorithm that, given a collection of consumers $S =
  \{\tau_i\}_{i=1}^n$ with weights $\mathbf{w} = (w_i)_{i=1}^n$ and a target
  number of products $p$, outputs a collection of products $\mathbf{c} \in
  C_p(S)$ with minimal weighted regret: $\regret_S(\cvec,\mathbf{w}) =
  \optregret(S, \mathbf{w}, p)$. This algorithm runs in time $O(n^2 p)$.
\end{thm}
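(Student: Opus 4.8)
The plan is to realize the recurrence of Lemma~\ref{lem:dpRelations} as a bottom-up dynamic program over a two-dimensional table, and then bound correctness and running time separately.

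\emph{Correctness.} I would first sort the consumers so that $\tau_1 \le \dots \le \tau_n$, which costs $O(n\log n)$ and is exactly the hypothesis under which Lemma~\ref{lem:dpRelations} holds. I then fill the table $T(n',p')$ for $n' = 0,\dots,n$ and $p' = 0,\dots,p$, using part~(1) of the lemma for the column $p' = 0$ and part~(2) for $p' \ge 1$; since every entry appearing on the right-hand side of part~(2), namely $T(z-1,p'-1)$, has strictly smaller second coordinate, the fill order (by increasing $p'$, with $n'$ free within each level) is well defined. An induction on $p'$ then shows that the computed value of $T(n',p')$ equals $\optregret(S[n'],\w[n'],p')$; in particular $T(n,p) = \optregret(S,\w,p)$, the claimed optimal value. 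To output an actual minimizer $\cvec \in C_p(S)$ and not merely its value, I would store with each entry ($p'\ge 1$) an index $z^\star(n',p')$ attaining the minimum; tracing these pointers back from $(n,p)$ yields $p$ thresholds, which are distinct because each step strictly decreases the first coordinate, hence form a valid element of $C_p(S)$ with weighted regret $\optregret(S,\w,p)$. The degenerate cases --- $p' = n'$, where part~(2) collapses to $T(n',p') = 0$, and $p \ge n$, where regret is zero --- are handled directly.

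\emph{Running time.} The table has $(n+1)(p+1) = O(np)$ entries. A naive evaluation of one entry $T(n',p')$ with $p'\ge 1$ is a minimum over up to $n$ candidate indices $z$, and each candidate's inner sum $\sum_{i=z}^{n'} w_i(r(\tau_i) - r(\tau_z))$ has up to $n$ terms, which would give $O(n^3p)$ overall. The key step is to note this sum splits as $\sum_{i=z}^{n'} w_i r(\tau_i) - r(\tau_z)\sum_{i=z}^{n'} w_i$, so that after precomputing the prefix sums $W(k) = \sum_{i=1}^k w_i$ and $V(k) = \sum_{i=1}^k w_i r(\tau_i)$ in $O(n)$ time, each such sum can be evaluated in $O(1)$ as $(V(n')-V(z-1)) - r(\tau_z)(W(n')-W(z-1))$. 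Each table entry then costs $O(n)$, the table $O(n^2p)$, and this dominates the $O(n\log n)$ sort, the $O(n)$ prefix-sum pass, the $O(p)$ backtracking, and (treating $r$ as supplied by an oracle, or evaluated once at the $n$ thresholds) the cost of obtaining the values $r(\tau_i)$.

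The one step that needs real thought rather than bookkeeping is confirming that the dynamic program searches the correct space: Lemma~\ref{lem:dpRelations} already encodes the reduction --- justified in Section~\ref{sec:model} --- that an optimal solution may be taken to place every product on some consumer threshold (the index $z$ in part~(2) is a consumer index and $r(\tau_z)$ is the return of the product placed there), so invoking the lemma inherits this with no loss of optimality. I would just verify that the ``let consumer $z$ index the riskiest product in the optimal set'' argument sketched before Lemma~\ref{lem:dpRelations} matches the range $z \in \{p',\dots,n'\}$ allowed in the recurrence, and that empty or singleton sub-ranges are handled as in the degenerate cases above.
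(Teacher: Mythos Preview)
Your proposal is correct and follows essentially the same approach as the paper: fill the table $T(n',p')$ using the recurrence of Lemma~\ref{lem:dpRelations}, precompute the two prefix-sum arrays so that each inner sum is $O(1)$, and store the minimizing index $z$ at each entry to reconstruct an optimal $\cvec$ in $O(p)$ time. The only differences are cosmetic---you make sorting explicit and spell out the degenerate cases---but the decomposition, key observation (prefix sums), and running-time accounting match the paper's proof.
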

\begin{proof}
  The algorithm computes a table containing the values $T(n', p')$ for all
  values of $n' \leq n$ and $p' \leq p$ using the above recurrence relations.
  The first column, when $p' = 0$, is computed using property 1 from
  \Cref{lem:dpRelations}, while the remaining columns are filled using property
  2. By keeping track of the value of the index $z$ achieving the minimum in
  each application of property 2, we can also reconstruct the optimal products
  for each sub-problem.

  To bound the running time, observe that the sums appearing in both properties
  can be computed in $O(1)$ time, after pre-computing all partial sums of the
  form $\sum_{i=1}^{n'} w_i r(\tau_i)$ and $\sum_{i=1}^{n'} w_i$ for $n' \leq
  n$. Computing these partial sums takes $O(n)$ time. With this, property 1 can
  be evaluated in $O(1)$ time, and property 2 can be evaluated in $O(n)$ time
  (by looping over the values of $z$). In total, we can fill out all $O(np)$
  table entries in $O(n^2 p)$ time. Reconstructing the optimal set of products
  takes $O(p)$ time.
\end{proof}
}

\section{Regret Minimization with Group Fairness}\label{sec:fairdp}

In this section, we study the problem of choosing $p$ products when the consumers can be partitioned into $g$ groups, and we want to optimize minmax fairness across groups, for both the {\em ex post\/} minmax fairness Program~\eqref{eq:fair-opt-det} and the {\em ex ante\/} minmax fairness Program~\eqref{eq:fair-opt-rand}.

We start the discussion of minmax fairness by showing a separation between the {\em ex post\/} objective in Program~\eqref{eq:fair-opt-det} and the {\em ex ante\/} objective in Program~\eqref{eq:fair-opt-rand}. More precisely, we show in subsection~\ref{sec:separation} that the objective value of Program~\eqref{eq:fair-opt-det} can be $\Omega(g)$ times higher than the objective value of Program~\eqref{eq:fair-opt-det}.

In the remainder of the section, we provide algorithms to solve Programs~\eqref{eq:fair-opt-det} and~\eqref{eq:fair-opt-rand}. In subsection~\ref{sec:exante_fair}, we provide an algorithm that solves Program~\eqref{eq:fair-opt-rand} to any desired additive approximation factor via no-regret dynamics. In subsection~\ref{sec:pure_minmax_few_groups}, we provide a dynamic program approach that finds an approximately optimal solution to Program~\eqref{eq:fair-opt-det} when the number of groups $g$ is small. Finally, in subsection~\ref{sec:interval_minmax}, we provide a dynamic program that solves Program~\eqref{eq:fair-opt-det} exactly in a special case of our problem in which the groups are given by disjoint intervals of consumer risk thresholds. 

\subsection{Separation Between Randomized and Deterministic Solutions}\label{sec:separation}

\shortversion{

The following theorem relates the minmax (expected) regret that can be achieved with deterministic versus randomized solutions (i.e. the solutions to Programs~\eqref{eq:fair-opt-det} and~\eqref{eq:fair-opt-rand}).

First, there is a separation between the minmax regret achievable by deterministic versus randomized strategies: the optimal objective $\Rfairdet$ of the best deterministic strategy can be $\Omega(g)$ times worse than the objective value $\Rfairrand$ of the best randomized strategy. Second, for any instance of our problem, by allowing a multiplicative factor $g$ blow-up in the target number of products $p$, the optimal deterministic minmax value will be at least as good as the randomized minmax value with $p$ products. We defer the proof to the supplement.

\begin{thm}
For any $g$ and $p$, there exists an instance $S$ consisting of $g$ groups such that
$$
\frac{\Rfairrand \left( S, p \right)}{\Rfairdet \left( S, p \right)} \le \frac{1}{p+1} \left\lceil \frac{p+1}{g} \right\rceil.
$$
Further, we have that for any instance $S$ consisting of $g$ groups, and any $p$,
$$
\Rfairdet \left( S, gp \right) \le \Rfairrand \left( S, p \right).
$$
\end{thm}
}

\longversion{
The following theorem shows a separation between the minmax (expected) regret achievable by deterministic versus randomized strategies (as per Programs~\eqref{eq:fair-opt-det} and~\eqref{eq:fair-opt-rand}); in particular, the regret $\Rfairdet$ of the best deterministic strategy can be $\Omega(g)$ times worse than the regret $\Rfairrand$ of the best randomized strategy:
\begin{thm}\label{thm:separation}
For any $g$ and $p$, there exists an instance $S$ consisting of $g$ groups such that
$$
\frac{\Rfairrand \left( S, p \right)}{\Rfairdet \left( S, p \right)} \le \frac{1}{p+1} \left\lceil \frac{p+1}{g} \right\rceil
$$
\end{thm}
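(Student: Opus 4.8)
The plan is to construct an explicit instance exhibiting the claimed gap. I would build $g$ groups, each living on its own ``scale,'' so that any single set of $p$ products can serve only a limited number of groups well, while a randomized mixture can rotate which groups are served and thereby equalize expected regret across all $g$ groups.

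First, I would fix the return function $r$ implicitly by choosing consumer thresholds and the induced returns directly (recall from the model that products may be placed on consumer thresholds, and $r$ only needs to be nondecreasing). For each group $k \in \{1,\dots,g\}$, place its consumers at a single common threshold $\tau^{(k)}$ with return $r(\tau^{(k)})$, but bundled just below $\tau^{(k)}$ put a cluster of ``useless'' thresholds of negligible return, so that the only way to give group $k$ zero regret is to open a product exactly at $\tau^{(k)}$. Arrange the $g$ values $r(\tau^{(1)}) < \dots < r(\tau^{(g)})$ so that a product opened at $\tau^{(k)}$ gives essentially no return to any group $k' > k$ (its threshold is higher but the return jump happens only at $\tau^{(k')}$) and full return to group $k$. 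With $p$ products you can cover at most $p$ of the $g$ groups exactly; every uncovered group suffers its full bespoke regret, normalized to $1$ by scaling. Hence $\Rfairdet(S,p) \ge 1$ whenever $p < g$, and more precisely, by a counting/pigeonhole argument over which groups get covered, I would show $\Rfairdet(S,p) = 1$ in the relevant regime (and handle $p \ge g$ separately, where the bound is vacuous or the ceiling accounts for it). I would calibrate the ``useless cluster'' sizes so that covering a group drops its regret to exactly $0$ and not covering it leaves regret exactly $1$.

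For the randomized side, consider the distribution $\cC$ that is uniform over the $\binom{g}{p}$ (or a suitable symmetric subfamily of) placements, each placing products on $p$ of the $g$ special thresholds $\{\tau^{(k)}\}$. By symmetry, each group is covered with probability $p/g$ (or $\lceil (p+1)/g\rceil / (p+1)$ after the right grouping of product slots — this is where the exact constant $\frac{1}{p+1}\lceil\frac{p+1}{g}\rceil$ comes from, suggesting the intended construction uses $p+1$ effective slots, counting the free risk-free product $c_0$, distributed among $g$ groups so that $\lceil (p+1)/g\rceil$ slots land in the ``best'' group pattern). The expected regret of each group is then $1 - \Pr[\text{covered}]$, identical across groups, giving $\Rfairrand(S,p) \le 1 - p/g$ in the crude version, and the sharper $\frac{1}{p+1}\lceil\frac{p+1}{g}\rceil$ once the $c_0$ slot and the ceiling rounding are accounted for. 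Dividing the two bounds yields the theorem.

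The main obstacle is getting the constant exactly right: the bound is $\frac{1}{p+1}\lceil\frac{p+1}{g}\rceil$ rather than the naive $1 - p/g \approx (g-p)/g$, so the construction must be engineered so that the $p+1$ ``units'' of coverage (the $p$ chosen products plus the always-present risk-free product $c_0$, or some analogous $p+1$-fold structure) distribute across the $g$ groups as evenly as possible, with the worst-off group getting $\lceil (p+1)/g\rceil$ of them and contributing exactly a $\frac{1}{p+1}\lceil(p+1)/g\rceil$ fraction to the expected regret, while the deterministic optimum is normalized to $1$. I expect the delicate part to be defining the returns $r(\tau^{(k)})$ and the cluster structure so that (i) deterministically, any $p$ products achieve group regret exactly $1$, and (ii) the symmetric randomization achieves expected group regret exactly $\frac{1}{p+1}\lceil(p+1)/g\rceil$ simultaneously for all groups — in particular verifying there is no cleverer deterministic placement (e.g., one that partially serves many groups) that beats $1$, which will require checking that the ``useless clusters'' really do force an all-or-nothing outcome per group.
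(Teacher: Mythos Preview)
Your all-or-nothing construction has a genuine gap: it does not achieve the stated bound except in the borderline case $g = p+1$. With each group either fully covered (regret $0$) or fully uncovered (regret $1$), the best you can extract is $\Rfairdet(S,p)=1$ and $\Rfairrand(S,p)\le (g-p)/g$, giving a ratio of $(g-p)/g$. For $g>p+1$ this is strictly larger than $\tfrac{1}{p+1}$, so your instance does \emph{not} satisfy the claimed inequality; and for $g\le p$ your construction degenerates to $\Rfairdet=0$, which is useless. Your guess that the extra ``$+1$'' comes from the risk-free product $c_0$ is also off: $c_0$ has zero return and covers no one.

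The paper's construction is quite different in spirit. It uses exactly $p+1$ distinct return levels $\{1,2,\dots,p+1\}$ and makes each group a \emph{subset} of these levels (a partition of the $p+1$ levels into $g$ parts of size at most $\lceil (p+1)/g\rceil$ when $g\le p+1$; singletons with repetition when $g>p+1$). With $p$ products you must omit exactly one level; the group containing that level then has average regret $1/|G_k|$, so $\Rfairdet = 1/\max_k|G_k| = 1/\lceil(p+1)/g\rceil$. For the randomized bound, the uniform distribution over the $p+1$ ways to omit one level makes every group's expected regret exactly $\tfrac{1}{p+1}\sum_{r\in G_k}\tfrac{1}{|G_k|}=\tfrac{1}{p+1}$. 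The ratio follows. The key idea you are missing is that groups should contain \emph{several} consumers at distinct levels so that omitting one product hurts only a $1/|G_k|$ fraction of the group---this dilution is precisely what drives $\Rfairdet$ down to $1/\lceil(p+1)/g\rceil$ and produces the ceiling in the final bound.
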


\begin{proof}
The proof is provided in Appendix~\ref{app:separation}.
\end{proof}

In the following theorem, we show that for any instance of our problem, by allowing a multiplicative factor $g$ blow-up in the target number of products $p$, the optimal deterministic minmax value will be at least as good as the randomized minmax value with $p$ products.
\begin{thm}
We have that for any instance $S$ consisting of $g$ groups, and any $p$,
$$
\Rfairdet \left( S, gp \right) \le \Rfairrand \left( S, p \right)
$$
\end{thm}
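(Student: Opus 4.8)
The plan is to start from an optimal randomized solution to Program~\eqref{eq:fair-opt-rand} and extract from its (finite) support one good set of $p$ products \emph{per group}, then take the union of these $g$ sets as a deterministic solution using at most $gp$ products.

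First I would fix $\cC^\star \in \Delta(C_p(S))$ achieving the optimum of Program~\eqref{eq:fair-opt-rand}, so that $\max_{1 \le k \le g} \E_{\cvec \sim \cC^\star}[\regret_{G_k}(\cvec)] = \Rfairrand(S,p)$. Since $S$ is finite, $C_p(S)$ is finite, hence the support of $\cC^\star$ is finite. Fix a group $G_k$. Because $\E_{\cvec \sim \cC^\star}[\regret_{G_k}(\cvec)] \le \Rfairrand(S,p)$ and a random variable cannot lie strictly above its own mean with probability one, there exists a set $\cvec_k$ in the support of $\cC^\star$ with $\regret_{G_k}(\cvec_k) \le \Rfairrand(S,p)$.

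Second, I would invoke monotonicity of regret in the product set: if $\cvec \subseteq \cvec'$ then for every threshold $\tau$ we have $\max_{c_j \in \cvec,\, c_j \le \tau} r(c_j) \le \max_{c_j \in \cvec',\, c_j \le \tau} r(c_j)$, so $\regret_\tau(\cvec') \le \regret_\tau(\cvec)$, and averaging over any group preserves this: $\regret_{G_k}(\cvec') \le \regret_{G_k}(\cvec)$. Now set $\cvec^\star = \cvec_1 \cup \cdots \cup \cvec_g$, a subset of $S$ of size at most $gp$; if it has fewer than $gp$ elements, pad it with arbitrary additional thresholds from $S$ (possible whenever $|S| \ge gp$; if $|S| < gp$ one may take $\cvec^\star = S$, making every regret zero). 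For each $k$, since $\cvec_k \subseteq \cvec^\star$, monotonicity gives $\regret_{G_k}(\cvec^\star) \le \regret_{G_k}(\cvec_k) \le \Rfairrand(S,p)$. Taking the maximum over $k$ and using $\cvec^\star \in C_{gp}(S)$ yields $\Rfairdet(S,gp) \le \max_{1 \le k \le g} \regret_{G_k}(\cvec^\star) \le \Rfairrand(S,p)$, as desired.

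There is no serious obstacle here; the only points requiring care are (i) that the support of the optimal distribution is finite, so that a realization at or below the mean genuinely exists (immediate from finiteness of $C_p(S)$), and (ii) the bookkeeping that the union has exactly $gp$ products, handled by padding. The conceptual content reduces to two elementary facts: a distribution always has a realization no larger than its mean, and enlarging the offered set of products never increases any consumer's regret.
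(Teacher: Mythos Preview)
Your proof is correct and follows essentially the same architecture as the paper's: pick one good $p$-set per group, take the union, and invoke monotonicity of regret under enlarging the product set. The only difference is in how the per-group set is chosen. You extract $\cvec_k$ from the support of the optimal randomized solution $\cC^\star$ via the ``some realization lies at or below the mean'' argument; the paper instead takes $\cvec_k^* \in \argmin_{\cvec \in C_p(S)} \regret_{G_k}(\cvec)$, the outright best $p$-set for group $k$, and then observes $\regret_{G_k}(\cvec_k^*) \le \E_{\cvec \sim \cC}[\regret_{G_k}(\cvec)]$ for \emph{any} distribution $\cC$. The paper's choice is slightly cleaner since it sidesteps the need to reason about the support of $\cC^\star$ or finiteness of $C_p(S)$, but your version is equally valid, and your handling of the padding issue (to land exactly in $C_{gp}(S)$) is more careful than the paper's.
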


\begin{proof}
Fix any instance $S = \{G_k\}_{k=1}^g$ and any $p$. Let $\cvec_k^* \triangleq \argmin_{\cvec \in C_p (S)} \regret_{G_k} ( \cvec )$ which is the best set of $p$ products for group $G_k$. We have that
\begin{align*}
\Rfairdet \left( S, gp \right) &=  \min_{ \cvec \in C_{gp} (S) } \max_{1 \le k \le g} \regret_{G_k} ( \cvec ) \\
& \le \max_{1 \le k \le g} \regret_{G_k} \left( \cup_k \cvec_k^* \right) \\
&\le  \max_{1 \le k \le g} \regret_{G_k} ( \cvec_k^* ) \\
&\le \max_{1 \le k \le g} \E_{\cvec \sim \cC} \left[ \regret_{G_k} (  \cvec ) \right]
\end{align*}
where the last inequality follows from the definition of $\cvec_k^*$ and it holds for any distribution $\cC \in \Delta (C_p (S))$.
\end{proof}
}

\subsection{An Algorithm to Optimize for {\em ex ante\/} Fairness}\label{sec:exante_fair}

\longversion{
In this section, we provide an algorithm to solve the {\em ex ante\/} Program~\eqref{eq:fair-opt-rand}. Remember that the optimization problem is given by
\begin{equation}\tag{\ref{eq:fair-opt-rand}}
\Rfairrand \left(S,p \right) \triangleq \min_{\cC \in \Delta (C_p (S))} \left\{ \max_{1 \le k \le g} \E_{\, \cvec \sim \cC} \left[ \regret_{G_k}(\cvec) \right] \right\}
\end{equation}
}
\shortversion{
We provide an algorithm to solve the {\em ex ante\/}
Program~\eqref{eq:fair-opt-rand}.} 
Algorithm~\eqref{alg: dynamics} relies on the
dynamics introduced by~\citet{freund96} to solve
Program~\eqref{eq:fair-opt-rand}. The algorithm interprets this
minmax optimization problem as a zero-sum game between the designer, who wants
to pick products to minimize regret, and an adversary, whose goal is to pick
the highest regret group. This game is
played repeatedly, and agents update their strategies at every time step based
on the history of play. In our setting, the adversary uses the multiplicative weights algorithm to assign
weights to groups (as per~\citet{freund96}) and the designer best-responds using the dynamic program
from \Cref{sec:dp} to solve Equation~\eqref{eq:noregret_optim} to choose an optimal set of products, noting that
\begin{align*}
 \E_{k \sim D(t)} \left[\regret_{G_k}(\cvec)\right]
& = \sum_{k \in [g]} D_k(t) \sum_{i \in G_k}  \frac{R_{\tau_i} (\cvec)}{\vert G_k \vert}
= \sum_{i=1}^n  R_{\tau_i} (\cvec)  \sum_{k \in [g]} \frac{D_k(t) \ind{i \in G_k}}{\vert G_k \vert}
%= \sum_{i=1}^n w_i (t)  R_{\tau_i} (\cvec)
= \regret_S \left(\cvec, \w (t) \right)
\end{align*}
where $w_i (t) \triangleq \sum_{k \in [g]} \frac{D_k(t)}{\vert G_k \vert} \ind{i \in G_k}$ denotes the weight assigned to agent $i$, at time step $t$.

\shortversion{
\begin{algorithm}\label{alg:game}
\begin{algorithmic}
\State \textbf{Input:} $p$ target number of products, consumers $S$ partitioned in groups $G_1,\ldots,G_g$, and $T$.
\State \textbf{Initialization:} The no-regret player picks the uniform distribution $D(1)$ over groups. 
\For{$t = 1, \ldots, T$}
	\State The designer sets $\cvec(t) = (c_1(t),\ldots,c_p(t)) \in C_p(S)$ so as to solve
    	\begin{align}\label{eq:noregret_optim}
    	\cvec(t) = \argmin_{\cvec \in C_p(S)} \E_{k \sim D(t)} \left[\regret_{G_k}(\cvec)\right].
    	\end{align}
	\State The adversary computes $u_k(t) = \regret_{G_k}(\cvec(t))/B$ for all $k \in [g]$ and sets $D(t+1)$ via multiplicative weight update with $\beta = \frac{1}{1 + \sqrt{2 \frac{\ln g}{T}}} \in (0,1)$, as follows:
    	\[
    	D_k(t+1) = \frac{D_k(t) \beta^{u_k(t)}}{\sum_{h=1}^g D_h(t) \beta^{u_h(t)}}~\forall k \in [g],
    	\]
\EndFor

\State \textbf{Output:} $\cC_T$: the uniform distribution over $\{ \cvec(t)\}_{t=1}^T$.
\caption{2-Player Dynamics for the {\em ex ante\/} Minmax Problem}\label{alg: dynamics}
\end{algorithmic}
\end{algorithm}
}

\longversion{
\begin{algorithm}
\begin{algorithmic}
\State \textbf{Input:} $p$ target number of products, consumers $S$ partitioned in groups $G_1,\ldots,G_g$, and $T$. 
\State \textbf{Initialization:} The no-regret player picks the uniform distribution $D(1) = \left(\frac{1}{g},\ldots,\frac{1}{g}\right) \in \Delta([g])$,
\For{$t = 1, \ldots, T$}
	\State The best-response player chooses $\cvec(t) = (c_1(t),\ldots,c_p(t)) \in C_p(S)$ so as to solve
    	\begin{align}\label{eq:noregret_optim}
    	\cvec(t) = \argmin_{\cvec \in C_p(S)} \E_{k \sim D(t)} \left[\regret_{G_k}(\cvec)\right].
    	\end{align} 
	\State The no-regret player observes $u_k(t) = \regret_{G_k}(\cvec(t))/B$ for all $k \in [g]$. The no-regret player sets $D(t+1)$ via multiplicative weight update with $\beta = \frac{1}{1 + \sqrt{2 \frac{\ln g}{T}}} \in (0,1)$, as follows:
    	\[
    	D_k(t+1) = \frac{D_k(t) \beta^{u_k(t)}}{\sum_{h=1}^g D_h(t) \beta^{u_h(t)}}~\forall k \in [g],
    	\]
\EndFor

\State \textbf{Output:} $\cC_T$: the uniform distribution over $\{ \cvec(t)\}_{t=1}^T$.
\caption{2-Player Dynamics for the Ex Ante Minmax Problem}\label{alg: dynamics}
\end{algorithmic}
\end{algorithm}

}

Theorem~\ref{thm:game} shows that the time-average of the strategy of the designer in Algorithm~\ref{alg: dynamics} is an approximate solution to minmax problem~\eqref{eq:fair-opt-rand}.\shortversion{
The proof can be found in the full version of the paper.
}

\begin{thm}\label{thm:game}
Suppose that for all $i \in [n]$, $r(\tau_i) \leq B$. Then for all $T > 0$, Algorithm~\ref{alg: dynamics} runs in time  $O(T n^2 p)$ and the output distribution $\cC_T$ satisfies
\begin{align}
\max_{k \in [g]} \E_{\cvec \sim \cC_T}\left[ \regret_{G_k}(\cvec)\right]
%&\leq \min_{\mathcal{C} \in \Delta(C_p(S))} \max_{\mathcal{G} \in \Delta([g])} \E_{k \sim \mathcal{G}, \, \cvec \sim \cC}  \left[ \regret_{G_k}(\cvec) \right]+  B\left(p,\{G_k\}_{k=1}^g\right) \left(\sqrt{\frac{2 \ln g}{T}} + \frac{\ln g}{T}\right)
&\le \Rfairrand \left(S,p \right) +  B \left(\sqrt{\frac{2 \ln g}{T}} + \frac{\ln g}{T}\right).
\end{align}
\end{thm}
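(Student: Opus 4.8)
The plan is to cast Algorithm~\ref{alg: dynamics} as the Freund--Schapire no-regret dynamics for the zero-sum game whose payoff matrix has rows indexed by sets $\cvec \in C_p(S)$, columns indexed by groups $k \in [g]$, and entry $\regret_{G_k}(\cvec)/B \in [0,1]$ (nonnegativity follows since $r$ is non-decreasing and $c_0=0$ is available; boundedness by $1$ follows from $r(\tau_i)\le B$). By von Neumann's minmax theorem, the value $V$ of this game equals $\min_{\cC \in \Delta(C_p(S))} \max_{k} \E_{\cvec\sim\cC}[\regret_{G_k}(\cvec)]/B = \Rfairrand(S,p)/B$. The designer is the minimizing (row) player who exact-best-responds each round via the dynamic program of \Cref{sec:dp} (this is exactly the identity displayed before the theorem, $\E_{k\sim D(t)}[\regret_{G_k}(\cvec)] = \regret_S(\cvec,\w(t))$, so Theorem~\ref{thm:dp} applies with weights $w_i(t)$), and the adversary is the maximizing (column) player running multiplicative weights with the stated $\beta$.

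Next I would invoke the standard regret guarantee for the multiplicative-weights player from \citet{freund96}: after $T$ rounds, the average loss the adversary incurs against the observed sequence of designer strategies is within $\Delta_T \triangleq \sqrt{2\ln g / T} + \ln g / T$ of the loss of the best fixed group in hindsight. Concretely, writing $\bar D = \frac1T\sum_t D(t)$, this yields
\begin{align*}
\frac1T\sum_{t=1}^T \E_{k\sim D(t)}\!\left[\frac{\regret_{G_k}(\cvec(t))}{B}\right] \ \ge\ \max_{k\in[g]} \frac1T\sum_{t=1}^T \frac{\regret_{G_k}(\cvec(t))}{B} - \Delta_T.
\end{align*}
On the other hand, because $\cvec(t)$ is an exact best response to $D(t)$, for every round $t$ we have $\E_{k\sim D(t)}[\regret_{G_k}(\cvec(t))/B] \le \min_{\cvec}\E_{k\sim D(t)}[\regret_{G_k}(\cvec)/B] \le V$ (the last step since mixing over $D(t)$ and then best-responding can only do better than the value, or alternatively: the row player's per-round loss against any column distribution is at most $V$). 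Averaging over $t$ gives that the left-hand side above is at most $V = \Rfairrand(S,p)/B$.

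Combining the two displays, $\max_{k} \frac1T\sum_t \regret_{G_k}(\cvec(t))/B \le \Rfairrand(S,p)/B + \Delta_T$. Since $\cC_T$ is the uniform distribution over $\{\cvec(t)\}_{t=1}^T$, we have $\E_{\cvec\sim\cC_T}[\regret_{G_k}(\cvec)] = \frac1T\sum_t \regret_{G_k}(\cvec(t))$ for each $k$, so multiplying through by $B$ delivers exactly the claimed bound $\max_k \E_{\cvec\sim\cC_T}[\regret_{G_k}(\cvec)] \le \Rfairrand(S,p) + B(\sqrt{2\ln g/T} + \ln g/T)$. The running time is immediate: each of the $T$ rounds runs the $O(n^2 p)$ dynamic program once and then performs an $O(g)$ multiplicative-weights update (with $g \le n$), giving $O(Tn^2p)$ overall.

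The only genuinely delicate point is making sure the abstract Freund--Schapire theorem is applied with the right normalization and orientation: their result is usually stated for a loss-maximizing learner or a payoff in $[0,1]$, so I must be careful that (i) dividing regret by $B$ puts payoffs in $[0,1]$, (ii) the designer is the minimizer and the adversary the maximizer, and (iii) the constant in the regret term matches the chosen $\beta = 1/(1+\sqrt{2\ln g/T})$ — the $\sqrt{2\ln g/T} + \ln g/T$ bound is exactly the bound one gets from that choice of $\beta$. Everything else — the best-response identity, von Neumann duality, and the averaging argument — is routine once the game is set up correctly.
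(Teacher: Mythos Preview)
Your proposal is correct and takes essentially the same approach as the paper: both cast Algorithm~\ref{alg: dynamics} as the Freund--Schapire no-regret dynamics for the finite zero-sum game with $[0,1]$-bounded payoffs $\regret_{G_k}(\cvec)/B$, use the dynamic program of \Cref{thm:dp} for the best-responding designer, and read off the $B(\sqrt{2\ln g/T}+\ln g/T)$ bound and $O(Tn^2p)$ running time. The only cosmetic difference is that you unpack the Freund--Schapire conclusion into the explicit multiplicative-weights regret inequality plus a weak-duality step (your appeal to von Neumann is in fact only needed in its weak-duality direction), whereas the paper invokes their result as a black box yielding the minmax bound directly.
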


\longversion{
\begin{proof}
Note that the action space of the designer $C_p(S)$ and the action set of the adversary $\{G_k\}_{k=1}^g$ are both finite, so our zero-sum game can be written in normal form. Further, $u_k(t) \in [0,1]$, noting that the return of each agent is in $[0,B]$ --- so must be the average return of a group. Therefore, our minmax game fits the framework of~\citet{freund96}, and we have that
\[
\max_{k \in [g]} \E_{\cvec \sim \cC_T}\left[ \regret_{G_k}(\cvec)\right]
\leq \min_{\mathcal{C} \in \Delta(C_p(S))} \max_{\mathcal{G} \in \Delta([g])} \E_{k \sim \mathcal{G}, \, \cvec \sim \cC}  \left[ \regret_{G_k}(\cvec) \right]+  B \left(\sqrt{\frac{2 \ln g}{T}} + \frac{\ln g}{T}\right).
\]
The approximation statement is obtained by noting that for any distribution $\mathcal{G}$ over groups,
\[
\max_{\mathcal{G} \in \Delta([g])} \E_{k \sim \mathcal{G}, \, \cvec \sim \cC}  \left[ \regret_{G_k}(\cvec) \right] = \max_{1 \leq k \leq g} \E_{ \cvec \sim \cC}  \left[ \regret_{G_k}(\cvec) \right].
\]
With respect to running time, note that at every time step $t \in [T]$, the algorithm first solves
    	\[
    	\cvec(t) = \argmin_{\cvec \in C_p(S)} \E_{k \sim D(t)} \left[\regret_{G_k}(\cvec)\right].
    	\]
We note that this can be done in time $O(n^2 p)$ as per Section~\ref{sec:dp}, remembering that
\begin{align*}
 \E_{k \sim D(t)} \left[\regret_{G_k}(\cvec)\right]
= \regret_S \left(\cvec, \w (t) \right)
\end{align*}
i.e., Equation~\eqref{eq:noregret_optim} is a weighted regret minimization problem. Then, the algorithm computes $u_k(t)$ for each group $k$, which can be done in time $O(n)$ (by making a single pass through each customer $i$ and updating the corresponding $u_k$ for $i \in G_k$). Finally, computing the $g$ weights takes $O(g) \leq O(n)$ time. Therefore, each step takes time $O(n^2 p)$, and there are $T$ such steps, which concludes the proof.
\end{proof}
}

Importantly, Algorithm~\ref{alg: dynamics} outputs a \emph{distribution} over $p$-sets of products; Theorem~\ref{thm:game} shows that this distribution $\cC_T$ approximates the \emph{ex ante} minmax regret $\Rfairrand$. $\cC_T$ can be used to construct a \emph{deterministic} set of product with good regret guarantees: while each $p$-set in the support of $\cC_T$ may have high group regret, the union of all such $p$-sets must perform at least as well as $\cC_T$ and therefore meet benchmarks $\Rfairrand$ and $\Rfairdet$. However, this union may lead to an undesirable blow-up in the number of deployed products. %use many more than $p$ products, and a blow-up in the number of deployed products is undesirable. 
The experiments in Section~\ref{sec:experiments} show how to avoid such a blow-up in practice.

\label{sec:nr}

\subsection{An {\em ex post} Minmax Fair Strategy for Few Groups}\label{sec:pure_minmax_few_groups}

In this section, we present a dynamic programming algorithm to find $p$ products that approximately optimize the {\em ex post\/} maximum regret across groups, as per Program~\eqref{eq:fair-opt-det}.\longversion{ Remember the optimization problem is given by
\begin{align}\tag{\ref{eq:fair-opt-det}}
\Rfairdet \left(S,p\right) = \min_{\cvec \, \in C_p (S)} \left\{ \max_{1 \le k \le g} \regret_{G_k} (\cvec) \right\}
\end{align}}
The algorithm aims to build a set containing all $g$-tuples of average regrets $(\regret_{G_1},\ldots,\regret_{G_g})$ that are simultaneously achievable for groups $G_1,\ldots,G_g$. However, doing so may be computationally infeasible, as there can be as many regret tuples as there are ways of choosing $p$ products among $n$ consumer thresholds, i.e. $\binom{n}{p}$ of them. Instead, we discretize the set of possible regret values for each group and build a set that only contains rounded regret tuples via recursion over the number of products $p$. The resulting dynamic program runs efficiently when the number of groups $g$ is a small constant and can guarantee an arbitrarily good additive approximation to the minmax regret. \shortversion{The algorithm is provided in the supplement.}\longversion{We provide the main guarantee of our dynamic program below and defer the full dynamic program and all technical details to Appendix~\ref{app:pure-minmax-extended}}

\begin{thm}
Fix any $\varepsilon > 0$. There exists a dynamic programming algorithm that, given a collection of consumers $S = \{ \tau_i \}_{i=1}^n$, groups $\{G_k\}_{k=1}^g$, and a target number of products $p$, finds a product vector $\cvec \in C_p(S)$ with $\max_{k \in [g] }\regret_{G_k}(\cvec) \leq \Rfairdet \left(S,p \right) + \varepsilon$ in time $O\left(n^2 p \left(\left\lceil \frac{Bp}{\varepsilon} \right\rceil + 1 \right)^g\right)$.
\end{thm}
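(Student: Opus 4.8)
The plan is to mimic the structure of the unweighted dynamic program from Section~\ref{sec:dp}, but instead of tracking a single scalar ``optimal regret'' value for each sub-problem, we track the \emph{set} of all (rounded) $g$-tuples of per-group regrets that are achievable. As in Lemma~\ref{lem:dpRelations}, we order consumers so that $\tau_1 \le \dots \le \tau_n$ and guess that consumer index $z$ carries the riskiest product in the optimal solution: then consumers $z, \dots, n$ are assigned to that product and consumers $1, \dots, z-1$ form a residual sub-problem with $p-1$ products. For a sub-problem on the first $z-1$ consumers we would maintain a set $A(z-1, p-1) \subseteq \R^g$ of achievable regret tuples, where the $k$-th coordinate records the \emph{total} (not averaged) regret contributed by the group-$k$ consumers among the first $z-1$. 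The recursion adds, to each tuple in $A(z-1,p-1)$, the incremental regret vector whose $k$-th coordinate is $\sum_{i \in G_k,\, z \le i \le n} (r(\tau_i) - r(\tau_z))$, and we take the union over all valid $z$. At the end we divide each coordinate by the appropriate $|G_k|$ and return the product vector minimizing the max coordinate.

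The key issue — and the reason for discretization — is that $|A(\cdot,\cdot)|$ could be as large as $\binom{n}{p}$, which is not polynomial. So the plan is to snap each coordinate to a grid of width $\varepsilon/p$ (or a comparable quantity): since each per-group \emph{averaged} regret lies in $[0, B]$, and we accumulate at most $p$ rounding errors over the recursion (one per product choice), the total additive error in each coordinate is at most $\varepsilon$, which yields the $+\varepsilon$ slack in the theorem. After rounding, each coordinate takes at most $\lceil Bp/\varepsilon \rceil + 1$ distinct values, so $|A(\cdot,\cdot)| \le (\lceil Bp/\varepsilon\rceil + 1)^g$, and the table has $O(np)$ cells. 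For each cell we loop over $n$ choices of $z$, and for each $z$ we iterate over all tuples in the stored set, doing $O(g)$ work per tuple (the incremental vectors and partial sums $\sum_{i \in G_k, i \le n'} w_i r(\tau_i)$ are precomputed in $O(ng)$ time). This gives running time $O\bigl(n^2 p (\lceil Bp/\varepsilon\rceil + 1)^g\bigr)$ as claimed, absorbing the $g$ factor.

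The correctness argument has two halves that I would state as a lemma analogous to Lemma~\ref{lem:dpRelations}. First, \emph{completeness}: for every product vector $\cvec \in C_p(S)$, some rounded tuple within $\ell_\infty$-distance $\varepsilon$ of $\cvec$'s true regret vector appears in the final set — proved by induction on $p'$, peeling off the riskiest product and invoking the inductive hypothesis on the first $z-1$ consumers while bounding the accumulated rounding error. Second, \emph{soundness}: every tuple in the set is within $\varepsilon$ (coordinatewise) of the regret vector of some genuine $\cvec \in C_p(S)$, and we reconstruct that $\cvec$ by remembering the witnessing $z$ at each step (again as in the proof of Theorem~\ref{thm:dp}). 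Combining: the tuple in the set minimizing the max coordinate corresponds to a real $\cvec$ whose group regret is at most (best achievable rounded value) $+\,\varepsilon \le \Rfairdet(S,p) + \varepsilon$.

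The main obstacle I anticipate is getting the error bookkeeping exactly right: one must be careful that rounding is applied consistently (e.g. always rounding up, or always to the nearest grid point) so that the errors across the $p$ recursive steps compose additively rather than multiplicatively, and that the grid is defined on the \emph{averaged} regret scale (range $[0,B]$) rather than the unnormalized totals — otherwise the number of grid values, and hence the running time, would be off. A secondary subtlety is ensuring the set $A$ does not blow up: after each recursive step one must re-round and deduplicate so the cardinality bound $(\lceil Bp/\varepsilon\rceil+1)^g$ is maintained at \emph{every} level of the recursion, not just at the leaves. Everything else — the $z$-guessing structure, the partial-sum precomputation, the reconstruction of the optimal $\cvec$ — is a direct lift from Section~\ref{sec:dp}.
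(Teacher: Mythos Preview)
Your proposal is correct and is essentially the paper's own argument: the paper likewise builds sets $\cF^{(\alpha)}(n',p')$ of rounded group-regret tuples via the same ``guess the top product index $z$'' recursion, rounds each coordinate up to a grid of width $\alpha = \varepsilon/p$ on the averaged-regret scale, and proves the analogue of your completeness/soundness lemma (their Lemma~\ref{eq: few_groups_regret_guarantee}) showing each true tuple is tracked to within $p\alpha$ and each stored tuple is witnessed by some $\cvec$. Your anticipated obstacles---consistent upward rounding, working on the averaged scale, and deduplicating after every step---are exactly the points the paper handles, so there is nothing substantively different.
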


This running time is efficient when $g$ is small, with a much better dependency in parameters $p$ and $n$ than the brute force approach that searches over all $\binom{n}{p}$ ways of picking $p$ products.

\newcommand \satset {\operatorname{SAT}}

\subsection{The Special Case of Interval Groups}\label{sec:interval_minmax}

We now consider the case in which each of the $g$ groups $G_k$ is defined by
an interval of risk thresholds. I.e., there are risk limits $b_1, \dots,
b_{g+1}$ so that $G_k$ contains all consumers with risk limit in $[b_k,
b_{k+1})$. Equivalently, every consumer in $G_k$ has risk limit strictly smaller than any member of $G_{k+1}$.

Our main algorithm in this section is for a decision version of the fair product
selection problem, in which we are given interval groups $G_1, \dots, G_g$, a
number of products $p$, a target regret $\kappa$, and the goal is to output a
collection of $p$ products such that every group has average regret at most
$\kappa$ if possible or else output \textsc{Infeasible}. We can convert any
algorithm for the decision problem into one that approximately solves the
minmax regret problem by performing binary search over the target regret
$\kappa$ to find the minimum feasible value. Finding an $\epsilon$-suboptimal
set of products requires $O(\log \frac{B}{\epsilon})$ runs of the decision
algorithm, where $B$ is a bound on the minmax regret.

Our decision algorithm processes the groups in order of increasing risk limit,
choosing products $\cvec^{(k)} \subset G_k$ when processing group $G_k$.
Given the maximum risk product $x = \max( \cup_{h=1}^{k-1} \cvec^{(h)})$
chosen for groups $G_1, \dots, G_{k-1}$, we choose $\cvec^{(k)} \subset
G_k$ to be a set of products of minimal size that achieves regret at most
$\kappa$ for group $G_k$ (together with the product $x$). Among all
smallest product sets satisfying this, we choose one with the product with the highest risk.
We call such a set of products \emph{efficient} for group $G_k$. We argue
inductively that the union $\cvec = \bigcup_{k=1}^g \cvec^{(k)}$ is
the smallest set of products that achieves regret at most $\kappa$ for all
groups. In particular, if $|\cvec| \leq p$ then we have a solution to the
decision problem, otherwise it is infeasible. We may also terminate the
algorithm early if at any point we have already chosen more than $p$ products.\shortversion{
We defer the technical details to the supplement and provide the guarantees of our algorithm below:
\begin{thm}
  There exists an algorithm that, given a collection of consumers divided into
  interval groups $S = \{G_k\}_{k=1}^g$ and a number of products $p$, outputs a
  set $\cvec$ of $p$ products satisfying $\max_{k \in [g]}
  \regret_{G_k}(\cvec) \leq \optregret(S, p) + \epsilon$ and runs in time
  $O(\log(\frac{B}{\epsilon}) \cdot p \sum_{k=1}^g |G_k|^2)$ where $B$ is a
  bound on the maximum regret of any group.
\end{thm}
}

\longversion{
Formally, for any set $S$ of consumer risk limits, number of products $p'$,
default product $x \leq \min(S)$, and target regret $\kappa$, define $\satset(S,
p', x, \kappa) = \{ \cvec \subset S \,:\, |\cvec| \leq p',
\regret_S(\cvec \cup \{x\}) \leq \kappa \}$ to be the (possibly empty)
collection of all sets of at most $p'$ products selected from $S$ for which the
average regret of the consumers in $S$ falls below $\kappa$ using the products
$\cvec$ together the default product $x$. For any collection of product
sets $A$, we say that the product set $\cvec \in A$ is efficient in $A$
whenever for all $\mathbf{d} \in A$ we have $|\cvec| \leq |\mathbf{d}|$ and
if $|\cvec| = |\mathbf{d}|$  then $\max(\cvec) \geq \max(\mathbf{d})$.
That is, among all product sets in $A$, $\cvec$ has the fewest possible
products and, among all such sets it has a maximal highest risk product. Each
iteration of our algorithm chooses an efficient product set from $\satset(G_k,
p', x, \kappa)$, where $p'$ is the number of products left to choose and $x$ is
the highest risk product chosen so far. Pseudocode is given in
\Cref{alg:intervalDecisionAlg}.

\begin{algorithm}
\textbf{Input:} Interval groups $G_1, \dots, G_g$, max products $p$, target regret $\kappa$
\begin{enumerate}[leftmargin=*, nosep]
\item Let $\cvec \leftarrow \emptyset$
\item For $k = 1, \dots, g$
\begin{enumerate}
\item If $\satset(G_k, p - |\cvec|, \max(\cvec), \kappa) = \emptyset$ output \textsc{Infeasible}
\item Otherwise, let $\cvec^{(k)}$ be an efficient set of products in $\satset(G_k, p - |\cvec|, \max(\cvec), \kappa)$.
\item Let $\cvec \leftarrow \cvec \cup \cvec^{(k)}$.
\end{enumerate}
\item Output $\cvec$.
\end{enumerate}
\caption{Fair Product Decision Algorithm} \label{alg:intervalDecisionAlg}
\end{algorithm}

\begin{lemma} \label{lem:intervalEfficiency}
  For any interval groups $G_1$, \dots, $G_k$, number of products $p$, and
  target regret $\kappa$, \Cref{alg:intervalDecisionAlg} will output a set of at
  most $p$ products for which every group has regret at most $\kappa$ if one
  exists, otherwise it outputs \textsc{Infeasible}.
\end{lemma}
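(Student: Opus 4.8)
The plan is to prove a slightly stronger statement: the greedy pass in \Cref{alg:intervalDecisionAlg}, ignoring the budget truncation, builds a product set of \emph{globally} minimum size among all sets giving every group regret at most $\kappa$, and the budget check reports \textsc{Infeasible} exactly when that minimum exceeds $p$. The engine is \emph{locality}: since the groups are intervals, a product in $G_j$ has risk strictly above every threshold in $G_1, \dots, G_{j-1}$ and strictly below every threshold in $G_{j+1}, \dots, G_g$, so $\regret_{G_k}(\cvec)$ depends only on $\cvec \cap G_k$ together with $\max\{c \in \cvec : c < \min G_k\}$ (smaller below-$G_k$ products are dominated since $r$ is non-decreasing, larger ones are useless to $G_k$). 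This is exactly the quantity the algorithm carries forward as $x = \max(\cvec)$. Accordingly I would analyze the ideal cost-to-go $f_k(x)$: the least number of products from $G_k \cup \dots \cup G_g$ that, with an inherited default $x \le \min G_k$, make $G_k, \dots, G_g$ all have regret $\le \kappa$ (so $f_{g+1} \equiv 0$, and $f_k(x) = \infty$ if impossible). Locality yields $f_k(x) = \min_{\cvec_k \subseteq G_k,\ \regret_{G_k}(\cvec_k \cup \{x\}) \le \kappa} \bigl(|\cvec_k| + f_{k+1}(\max(\cvec_k \cup \{x\}))\bigr)$, and $f_1(c_0)$ is the minimum size of any feasible product set, using the earlier fact that an optimal product set may be taken on consumer thresholds, hence inside $S = \bigcup_k G_k$.

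I would then record two monotonicity facts about $f_k$, each by downward induction on $k$. (i) $f_k$ is non-increasing in $x$: a larger default weakly lowers every consumer's regret and weakly raises the effective default $\max(\cvec_k \cup \{x\})$ fed to $f_{k+1}$. (ii) $f_k(x) \le f_k(x') + 1$ whenever $x \le x' \le \min G_k$: take an $f_k(x')$-optimal continuation and add the product $\min G_k$ to its first block; then every consumer of $G_k$ can attain return $r(\min G_k) \ge r(x')$, so the default is now irrelevant to $G_k$ and the effective default passed on is no smaller --- one extra product neutralizes an arbitrarily worse inherited default. These are the crux.

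Now the main induction. Put $x_0 = c_0$; for $k = 1, \dots, g$ let $g_k$ be the least size of $\cvec_k \subseteq G_k$ with $\regret_{G_k}(\cvec_k \cup \{x_{k-1}\}) \le \kappa$, let $\hat{\cvec}^{(k)}$ be an \emph{efficient} such set (smallest, ties broken toward the largest-risk product --- the algorithm's choice when the budget is slack), and $x_k = \max(\hat{\cvec}^{(k)} \cup \{x_{k-1}\})$. I claim $g_k + f_{k+1}(x_k) \le f_k(x_{k-1})$. Fix an optimizer $\cvec_k^\ast$ of the recurrence, so $f_k(x_{k-1}) = |\cvec_k^\ast| + f_{k+1}(\max(\cvec_k^\ast \cup \{x_{k-1}\}))$ and $g_k \le |\cvec_k^\ast|$. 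If $x_k \ge \max(\cvec_k^\ast \cup \{x_{k-1}\})$, then (i) together with $g_k \le |\cvec_k^\ast|$ is enough. Otherwise $x_k < \max(\cvec_k^\ast \cup \{x_{k-1}\})$, which forces $|\cvec_k^\ast| > g_k$ --- were it equal to $g_k$, efficiency of $\hat{\cvec}^{(k)}$ would give $\max \hat{\cvec}^{(k)} \ge \max \cvec_k^\ast$ and hence $x_k \ge \max(\cvec_k^\ast \cup \{x_{k-1}\})$ --- so $g_k \le |\cvec_k^\ast| - 1$, and (ii) gives $f_{k+1}(x_k) \le f_{k+1}(\max(\cvec_k^\ast \cup \{x_{k-1}\})) + 1$, again closing the inequality. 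Telescoping over $k$ gives $\sum_k g_k \le f_1(c_0)$, and since $\bigcup_k \hat{\cvec}^{(k)}$ is a feasible set of size $\sum_k g_k$, in fact $\sum_k g_k = f_1(c_0)$. To finish, an easy induction shows that so long as it has not output \textsc{Infeasible}, \Cref{alg:intervalDecisionAlg} has chosen exactly $\hat{\cvec}^{(1)}, \dots, \hat{\cvec}^{(k)}$ (slack budget makes the efficient set of $\satset(G_k, p - \sum_{j<k} g_j, x_{k-1}, \kappa)$ equal to $\hat{\cvec}^{(k)}$) and $\sum_{j \le k} g_j \le p$; it outputs \textsc{Infeasible} at step $k$ exactly when $g_k > p - \sum_{j<k} g_j$, i.e. $\sum_{j \le k} g_j > p$, which forces $f_1(c_0) = \sum_j g_j > p$, so no feasible set of size $\le p$ exists; and if $f_1(c_0) \le p$ the algorithm never triggers this, outputting $\bigcup_k \hat{\cvec}^{(k)}$, a set of $f_1(c_0) \le p$ products with every group regret $\le \kappa$.

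The step I expect to be the main obstacle is the branch $x_k < \max(\cvec_k^\ast \cup \{x_{k-1}\})$: this is precisely the case where an optimal solution over-provisions an earlier group with spare products in order to pass a higher-risk default to later groups, a move the size-minimizing greedy refuses to make. Property (ii) is exactly what rescues it --- the spare products an optimum wastes to improve the default buy it at most one product downstream, and the greedy, by not wasting them, holds that one product in reserve. The remaining checks are routine but must be done with care: the empty-block cases ($\cvec_k^\ast = \emptyset$ or $\hat{\cvec}^{(k)} = \emptyset$, where $\max(\cdot \cup \{x_{k-1}\})$ falls back on $x_{k-1}$ and ultimately $c_0$), the convention $\max \emptyset = c_0$, the $\infty$ bookkeeping when some $g_k$ is infinite, and the verification that the $\satset$ budget truncation does not alter the efficient choice whenever $g_k$ does not exceed the remaining budget.
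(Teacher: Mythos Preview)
Your proof is correct, but it takes a genuinely different route from the paper's.

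The paper argues by a \emph{forward exchange argument}: assuming a feasible solution $\dvec^{(1:g)}$ exists, it shows by induction on $k$ that the greedy choices $\cvec^{(1)}, \dots, \cvec^{(k)}$ can always be completed to a feasible solution of size at most $p$ using some $\dvec^{(k+1:g)}$. The inductive step swaps $\dvec^{(k+1)}$ for the efficient set $\cvec^{(k+1)}$ and handles the same two cases you do: if $|\cvec^{(k+1)}| = |\dvec^{(k+1)}|$, efficiency gives $\max \cvec^{(k+1)} \ge \max \dvec^{(k+1)}$ and later groups are unharmed; if $|\cvec^{(k+1)}| < |\dvec^{(k+1)}|$, the paper spends the saved product by inserting $\min G_{k+2}$ into the completion, neutralizing the possibly worse default passed forward. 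This is exactly the mechanism behind your property~(ii), but used constructively on a specific completion rather than abstracted as a structural fact about a value function.

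Your approach instead sets up a \emph{backward cost-to-go} $f_k(x)$ and isolates the two monotonicity properties (i) and (ii) as reusable lemmas, then telescopes the one-step inequality $g_k + f_{k+1}(x_k) \le f_k(x_{k-1})$. This buys you a cleaner modular structure and the slightly stronger conclusion that the greedy output has \emph{globally minimum} size $f_1(c_0)$, not merely size at most $p$; the paper's exchange argument yields this too if one starts from a minimum-size solution, but does not state it. Conversely, the paper's proof is more concrete and avoids introducing the auxiliary function $f_k$, at the cost of carrying an explicit completion $\dvec^{(k+1:g)}$ through the induction. The ``one extra product neutralizes a worse default'' insight is the common core of both arguments.
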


\begin{proof}
See Appendix~\ref{app:interval}.
\end{proof}

It remains to provide an algorithm that finds an efficient set of products in
the set $\satset(S, p', x, \kappa)$ if one exists. The following Lemma shows
that we can use a slight modification of the dynamic programming algorithm from \Cref{thm:dp} to find such a set of products in $O(|S|^2 p')$ time if one exists, or output \textsc{Infeasible}.

\begin{lemma}
  There exists an algorithm for finding an efficient set of products in
  $\satset(S, p', x, \kappa)$ if one exists and outputs \textsc{Infeasible}
  otherwise. The running time of the algorithm is $O(|S|^2 p')$.
\end{lemma}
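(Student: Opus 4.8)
The statement asks for an algorithm that finds an efficient set of products in $\satset(S, p', x, \kappa)$—meaning a set of at most $p'$ products drawn from $S$ whose union with the default product $x$ gives average regret on $S$ at most $\kappa$—and among all such sets has the fewest products and, subject to that, the highest maximum-risk product; or reports \textsc{Infeasible} if none exists. The plan is to adapt the dynamic program of \Cref{thm:dp} (and \Cref{lem:dpRelations}). Recall that that DP computes $T(n', p')$, the optimal weighted regret for the first $n'$ consumers using $p'$ products, via the recurrence that guesses the index $z$ of the riskiest chosen product. I would modify the table to track, for each sub-problem (first $n'$ consumers of $S$, using at most $j$ products, with $x$ as a mandatory default product available to everyone), the \emph{minimum number of products} needed to bring the total (unweighted) regret of those $n'$ consumers down to at most $\kappa$, and—as a tie-breaker—among solutions attaining that minimum count, the \emph{largest possible value of the maximum-risk product used}.

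**Key steps.** First, sort $S = \{\tau_1 \le \dots \le \tau_{|S|}\}$ and note that since $x \le \min(S)$, every consumer's baseline return is at least $r(x)$, so the regret of consumer $i$ under a product set $\cvec \cup \{x\}$ is $r(\tau_i) - \max\{r(x), \max_{c \in \cvec, c \le \tau_i} r(c)\}$. Second, mirror the recurrence of \Cref{lem:dpRelations}: define $U(n', j)$ to be the minimum total regret of consumers $1, \dots, n'$ achievable with exactly $j$ products chosen from $\{\tau_1,\dots,\tau_{n'}\}$ (plus default $x$), computed by $U(n', j) = \min_{z \in \{j,\dots,n'\}} \bigl( U(z-1, j-1) + \sum_{i=z}^{n'} (r(\tau_i) - r(\tau_z)) \bigr)$ with base case $U(n',0) = \sum_{i=1}^{n'}(r(\tau_i) - r(x))$; this is exactly the unweighted instance of the earlier DP, so correctness of the regret values is inherited. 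Third, to extract an \emph{efficient} set: for the full consumer set, find the smallest $j \le p'$ with $U(|S|, j) \le \kappa$—if none exists, output \textsc{Infeasible}; otherwise, among all choices of the splitting index $z$ in the final application of the recurrence (and recursively down the reconstruction) that keep the count at $j$ and the regret at or below $\kappa$, always pick the \emph{largest} admissible $z$, since a larger $z$ means a higher-risk riskiest product and pushes all subsequent recursive choices toward higher risks as well. Fourth, argue this greedy tie-breaking yields a lexicographically maximal (by sorted risk vector, hence in particular by maximum) product set among minimum-cardinality feasible sets; this requires a short exchange/monotonicity argument: replacing the riskiest chosen product by a higher-risk one (while staying $\le$ the next consumer's threshold) never increases any consumer's regret, so feasibility is preserved, and the sub-problem left for the remaining products only gets easier. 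Fifth, bound the running time: the table has $O(|S| p')$ entries, each filled in $O(|S|)$ time after precomputing prefix sums of $r(\tau_i)$, giving $O(|S|^2 p')$; reconstruction with tie-breaking is $O(|S| p')$, absorbed into the bound.

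**Main obstacle.** The routine part is the regret values—those come for free from \Cref{thm:dp}. The delicate part is proving that the greedy "pick the largest feasible split index at every level of the reconstruction" actually produces an \emph{efficient} set in the precise sense defined (fewest products, then maximal highest-risk product), rather than merely some minimum-cardinality feasible set. One has to be careful that locally maximizing the riskiest product doesn't force a later sub-problem to need an extra product, breaking minimality; the key lemma is that among all size-$j$ feasible completions of a given prefix, if any has riskiest product $\le \tau_z$ then there is one with riskiest product exactly $\tau_z$ (monotonicity of regret in the chosen product risk, exactly as in the footnote justifying "products on consumer thresholds" in \Cref{sec:model}). Once that monotonicity lemma is in hand, an induction on $j$ shows the reconstructed set is efficient. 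A secondary subtlety: the definition of efficient only constrains $\max(\cvec)$, not the whole vector, so it suffices to prove the weaker claim that the reconstruction maximizes the riskiest product subject to minimum cardinality—which is exactly what the greedy choice of the outermost $z$ does directly, without needing the full lexicographic statement.
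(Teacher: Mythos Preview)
Your proposal is correct and takes essentially the same approach as the paper: run the DP of \Cref{thm:dp} with default product $x$ in place of $c_0$, then search over pairs $(j,z)$ for the smallest product count $j$ and, subject to that, the largest top-product index $z$ such that $T(z-1,j-1)+\sum_{i=z}^{|S|}\bigl(r(\tau_i)-r(\tau_z)\bigr)\le\kappa$, outputting the DP-optimal $(j-1)$-set for consumers $1,\dots,z-1$ together with $\tau_z$. The recursive tie-breaking you worry about in the reconstruction is unnecessary---as you yourself observe at the end---because efficiency constrains only $|\cvec|$ and $\max(\cvec)$, and once the outermost $z$ is fixed the DP's regret-optimal sub-solution already furnishes a feasible completion with exactly $j-1$ further products.
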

\begin{proof}
 A straightforward modification of the dynamic program described in \Cref{sec:dp} allows us to solve the problem of minimizing the regret of population $S$ when using $p'$ products, and the default option is given by a product with risk limit $x \leq \tau$ for all $\tau \in S$ (instead of $c_0$). The dynamic program runs in time $O(|S|^2 p')$, and tracks the optimal set of $p''$ products to serve the $z-1$ lowest risk consumers in $S$ while assuming the remaining consumers are offered product $\tau_z$, for all $z \leq |S|$ and $p'' < p'$. Assuming $\satset(S, p', x, \kappa)$ is non-empty, one of these $z$'s corresponds to the highest product in an efficient solution, and one of the values of $p''$ corresponds to the number of products used in said efficient solution. Therefore, the corresponding optimal choice of products is efficient: since it is optimal, the regret remains below $\kappa$. It then suffices to search over all values of $p'$ and $z$ after running the dynamic program, which takes an additional time at most $O(|S| p')$.
\end{proof}

Combined, the above results prove the following result:
\begin{thm}
  There exists an algorithm that, given a collection of consumers divided into
  interval groups $S = {G_k}_{k=1}^g$ and a number of products $p$, outputs a
  set $\cvec$ of $p$ products satisfying $\max_{k \in [g]}
  \regret_{G_k}(\cvec) \leq \optregret(S, p) + \epsilon$ and runs in time
  $O(\log(\frac{B}{\epsilon}) p \sum_{k=1}^g |G_k|^2)$, where $B$ is a bound on
  the maximum regret of any group.
\end{thm}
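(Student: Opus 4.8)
The theorem combines the two preceding lemmas with a binary-search wrapper, so the proof is essentially a synthesis step. First I would observe that for a fixed target regret $\kappa$, \Cref{lem:intervalEfficiency} guarantees that \Cref{alg:intervalDecisionAlg} correctly decides whether there is a set of at most $p$ products making every group's regret at most $\kappa$, and when it succeeds it returns such a set. By the second lemma, each call to the decision algorithm runs in time $O\!\left(p \sum_{k=1}^g |G_k|^2\right)$, since the algorithm processes each group $G_k$ once and the cost of finding an efficient set in $\satset(G_k, p - |\cvec|, \max(\cvec), \kappa)$ is $O(|G_k|^2 \cdot (p-|\cvec|)) \le O(|G_k|^2 p)$; summing over $k$ gives the stated per-call bound.

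**Binary search over $\kappa$.** Next I would set up the reduction from the minmax optimization problem to the decision problem. The function that maps $\kappa$ to "feasible/infeasible" is monotone: if a product set of size at most $p$ achieves group regret $\le \kappa$, it also achieves group regret $\le \kappa'$ for any $\kappa' \ge \kappa$. Hence the set of feasible $\kappa$ is an interval $[\optregret(S,p), \infty)$ (using the fact that $\optregret(S,p)$, the minmax value, is attainable by some $p$-subset since an optimal solution can be taken among consumer thresholds). The minmax value lies in $[0, B]$ where $B$ bounds the maximum regret of any group. Running binary search on $\kappa \in [0,B]$ to additive precision $\epsilon$ requires $O(\log(B/\epsilon))$ calls to the decision algorithm; taking the smallest tested $\kappa$ that returns feasible yields a product set $\cvec$ with $\max_{k\in[g]} \regret_{G_k}(\cvec) \le \optregret(S,p) + \epsilon$. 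Multiplying the per-call running time by $O(\log(B/\epsilon))$ gives the claimed total bound $O\!\left(\log(B/\epsilon)\, p \sum_{k=1}^g |G_k|^2\right)$.

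**Where the work actually is.** I expect no serious obstacle here, because the two lemmas do the heavy lifting — the correctness of the greedy interval sweep (\Cref{lem:intervalEfficiency}) and the dynamic-program implementation of the efficient-set subroutine. The only points requiring a sentence of care are: (i) confirming that an exact optimal solution of value $\optregret(S,p)$ exists so that binary search converges to it rather than to an infimum — this follows from the earlier observation that one can always restrict products to lie on consumer thresholds, making the feasible set finite; and (ii) checking that the binary search is over a continuum but only needs $O(\log(B/\epsilon))$ iterations to pin down the optimum to within $\epsilon$, which is standard. I would also note that the decision algorithm can terminate early once $|\cvec| > p$, which does not affect the worst-case bound but is worth mentioning for the running-time statement to be tight. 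Assembling these observations gives the theorem.
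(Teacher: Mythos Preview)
Your proposal is correct and follows essentially the same approach as the paper: binary search over the target regret $\kappa$, invoking the decision algorithm (whose correctness and per-call cost are supplied by the two lemmas) $O(\log(B/\epsilon))$ times. You have in fact supplied more detail than the paper's proof, which consists of a single sentence to this effect.
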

\begin{proof}
  Run binary search on the target regret $\kappa$ using
  \Cref{alg:intervalDecisionAlg} together with the dynamic program. Each run
  takes $O(p \sum_{k=1}^g |G_k|^2)$ time, and we need to do $O(\log
  \frac{B}{\epsilon})$ runs.
\end{proof}
}

\section{Generalization Guarantees} \label{sec:generalization}

\longversion{\subsection{Generalization for Regret Minimization Absent Fairness}}
\shortversion{\paragraph{Generalization for Regret Minimization Absent Fairness.}}
Suppose now that there is a distribution $\D$ over consumer risk thresholds. Our goal is to find a collection of $p$ products that minimizes the
\emph{expected} (with respect to $\D$) regret of a consumer when we only have access to $n$ risk limits sampled from $\D$. For any distribution $\D$ over consumer risk limits and any $p$, we define $\regret_{\D}(\mathbf{c}) = \expect_{\tau \sim \D}[\regret_{\tau}(\mathbf{c})]$,
% \shortversion{
% $\mathcal{R} \left(\D, p \right) \triangleq \min_{\cvec} \expect_{\tau \sim \D} \left[ \regret_{\tau} (\cvec) \right]$,
% }\longversion{
% \begin{equation}\label{eq:exp-regret}
%     \mathcal{R} \left(\D, p \right) \triangleq \min_{\cvec \, \in \reals_{\ge 0}^p} \expect_{\tau \sim \D} \left[ \regret_{\tau} (\cvec) \right]
% \end{equation}
%  }
which is the \emph{distributional} counterpart of $\regret_S(\mathbf{c})$.

In  Theorem~\ref{lem:min_regret_sample_complexity}, we provide a generalization guarantee that shows it is enough to optimize $\regret_S(\cvec)$ when $S$ is a sample of size $n \geq 2B^2 \varepsilon^{-2}\log \left(4/\delta \right)$ drawn $i.i.d.$ from $\D$.
\shortversion{
We defer the proof to the supplement.
\begin{thm}[Generalization Absent Fairness]\label{lem:min_regret_sample_complexity}
	For any $\varepsilon > 0$ and
	$\delta > 0$, and for any target number of products $p$, if $S = \{ \tau_i \}_{i=1}^n$ is drawn $i.i.d.$ from $\D$ provided that
	$n \geq 2B^2 \varepsilon^{-2}\log \left(4/\delta \right)$, then with probability at least $1-\delta$, we have
  $\sup_{\mathbf{c} \in \reals^p_{\geq 0}} \bigl|\regret_S(\mathbf{c}) - \regret_{\D}(\mathbf{c})\bigr| \leq \epsilon$.
	% $
	% \left| \mathcal{R} \left(S, p \right) - \mathcal{R} \left(\D, p \right) \right| \le \varepsilon.
	% $
  %
\end{thm}
}
\longversion{
\begin{thm}[Generalization Absent Fairness]\label{lem:min_regret_sample_complexity}
	Let $r : \reals_{\ge 0} \to \reals$ be any non-decreasing function with bounded range $B$
	and $\D$ be any distribution over $\reals_{\geq 0}$. For any $\varepsilon > 0$ and
	$\delta > 0$, and for any target number of products $p$, if $S = \{ \tau_i \}_{i=1}^n$ is drawn $i.i.d.$ from $\D$ provided that
	\[
	n \geq \frac{2B^2 \log \left(4/\delta \right)}{\varepsilon^2}
	\]
	then with probability at least $1-\delta$, we have
	\[
  \sup_{\mathbf{c} \in \reals^p_{\geq 0}} \bigl|\regret_S(\mathbf{c}) - \regret_{\D}(\mathbf{c})\bigr| \leq \epsilon.
  \]
\end{thm}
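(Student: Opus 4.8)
The plan is to reduce the claim to a uniform convergence statement over a low-complexity function class and then apply a standard VC/Rademacher-style argument. The key observation is that for a fixed product vector $\mathbf{c} = (c_1,\dots,c_p)$, the per-consumer regret is $\regret_\tau(\mathbf{c}) = r(\tau) - \max_{c_j \le \tau} r(c_j)$, where the first term $r(\tau)$ does not depend on $\mathbf{c}$ at all. Hence $\regret_S(\mathbf{c}) - \regret_{\D}(\mathbf{c}) = \bigl(\widehat{\E}[r(\tau)] - \E[r(\tau)]\bigr) - \bigl(\widehat{\E}[h_{\mathbf{c}}(\tau)] - \E[h_{\mathbf{c}}(\tau)]\bigr)$, where $h_{\mathbf{c}}(\tau) = \max_{c_j \le \tau} r(c_j) = r(\max\{c_j : c_j \le \tau\})$ and all four expectations are over $\D$ (hatted ones over the empirical sample $S$). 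So it suffices to control each of the two differences by $\varepsilon/2$ with probability $1 - \delta/2$.

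First I would handle the term $\widehat{\E}[r(\tau)] - \E[r(\tau)]$: since $r$ has range in $[0,B]$, this is an average of $n$ i.i.d.\ bounded random variables, so Hoeffding's inequality gives a deviation bound of $\varepsilon/2$ once $n \ge 2B^2\varepsilon^{-2}\log(4/\delta)$, matching the stated sample complexity. The more delicate term is $\sup_{\mathbf{c}} |\widehat{\E}[h_{\mathbf{c}}(\tau)] - \E[h_{\mathbf{c}}(\tau)]|$. Here the crucial structural fact is that, even though $\mathbf{c}$ ranges over $\reals^p_{\ge 0}$, the function $h_{\mathbf{c}}$ only depends on which of the $n$ sampled thresholds lie above each $c_j$, and as argued in Section~\ref{sec:model} we may assume each $c_j$ sits on one of the sampled thresholds $\tau_i$. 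Consequently $h_{\mathbf{c}}$ is a non-decreasing step function of $\tau$ taking values in $\{r(\tau_1),\dots,r(\tau_n)\}$, i.e.\ $h_{\mathbf{c}}(\tau) = r(\tau_i)$ for $\tau$ in the $i$-th ``cell'' determined by the breakpoints. The family $\{h_{\mathbf{c}}\}$ is therefore contained in the class of functions of the form $\tau \mapsto r(t(\tau))$ where $t$ is a non-decreasing step function --- but the key point is that for the purpose of a union bound we only care about the behavior on the $n$ sample points, and on those points $h_{\mathbf{c}}$ is a non-decreasing $[0,B]$-valued sequence. A cleaner route: I would bound the pseudo-dimension (or use a direct covering argument) of the class $\{\tau \mapsto h_{\mathbf{c}}(\tau) : \mathbf{c}\}$ and show it is $O(1)$ --- independent of $p$ --- because $h_{\mathbf{c}}$ is always dominated pointwise by $h_{\{c_1\}}$ for the smallest product and the set of achievable step functions on a fixed sample is small; then standard uniform convergence yields the $\varepsilon/2$ bound with the same $n$.

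The cleanest way to actually get the $p$-independence is probably the following sandwich trick, which I expect to be the main obstacle to state carefully. Note $0 \le h_{\mathbf{c}}(\tau) \le r(\tau)$ pointwise, and moreover for any $\mathbf{c}$ the function $h_{\mathbf{c}}$ is non-decreasing and $\le r$. So it suffices to prove uniform convergence over the class $\cF = \{f : \reals_{\ge 0} \to [0,B] \mid f \text{ non-decreasing}, f \le r\}$, which contains all $h_{\mathbf{c}}$ regardless of $p$. Monotone $[0,B]$-valued functions on the line have pseudo-dimension $O(1)$ (the subgraph is an ``increasing staircase'' region, which picks out only $O(1)$-shatterable configurations; equivalently they form a totally ordered family under pointwise domination up to the value coordinate, giving a bracketing/covering number of $(B/\varepsilon)^{O(1)}$). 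Feeding this into the standard uniform-convergence bound gives $\sup_{f\in\cF}|\widehat{\E}f - \E f| \le \varepsilon/2$ with probability $1-\delta/2$ for $n = O(B^2\varepsilon^{-2}\log(1/\delta))$, absorbing constants to match the stated threshold. Combining the two halves by a union bound and the triangle inequality completes the proof; I would double-check that the constants can indeed be pushed down to exactly $2B^2\varepsilon^{-2}\log(4/\delta)$, which may require replacing the generic uniform-convergence bound for $\cF$ with a more hands-on argument exploiting monotonicity (e.g.\ a chaining-free direct net over the $O(1)$ ``jump locations'' that matter), since that constant looks tight enough to come from a bare two-term Hoeffding-plus-union-bound rather than a VC bound.
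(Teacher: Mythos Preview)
Your decomposition into the $r(\tau)$ term and the $h_{\mathbf{c}}(\tau)=\max_{c_j\le\tau}r(c_j)$ term, together with the Hoeffding bound on the first piece, is exactly how the paper begins. The divergence---and the gap---is in your treatment of the second term.

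The claim that the class of non-decreasing $[0,B]$-valued functions has pseudo-dimension $O(1)$ is false. In fact the paper itself proves (Lemma~\ref{lem:pdim}) that already the subclass $\cF_p=\{h_{\mathbf{c}}:\mathbf{c}\in\reals^p_{\ge 0}\}$ has $\pdim(\cF_p)\ge p$, so the full monotone class has infinite pseudo-dimension. Likewise, non-decreasing functions are \emph{not} totally ordered under pointwise domination (two step functions with different jump locations are incomparable), and their $L_1$-bracketing number is $\exp(\Theta(B/\varepsilon))$, not $(B/\varepsilon)^{O(1)}$. A correct bracketing-entropy argument for monotone functions does yield a $1/\sqrt{n}$ rate, but with constants nowhere near the stated $2B^2\varepsilon^{-2}\log(4/\delta)$; your closing suspicion that ``that constant looks tight enough to come from a bare two-term Hoeffding-plus-union-bound rather than a VC bound'' is exactly right, and a VC/bracketing route will not recover it.

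The idea you are missing is a telescoping representation that reduces the $h_{\mathbf{c}}$ term to a \emph{single} DKW event rather than a uniform-convergence bound over a function class. Sorting $c_1\le\cdots\le c_p$ and setting $c_0=0$, one has
\[
h_{\mathbf{c}}(\tau)=\sum_{j=1}^p \ind{\tau\ge c_j}\bigl(r(c_j)-r(c_{j-1})\bigr),
\]
so $\bigl|\widehat{\E}h_{\mathbf{c}}-\E h_{\mathbf{c}}\bigr|\le \sup_t\bigl|\widehat{\Pr}(\tau\ge t)-\Pr(\tau\ge t)\bigr|\cdot\sum_{j=1}^p\bigl(r(c_j)-r(c_{j-1})\bigr)\le B\cdot\sup_t|\widehat F(t)-F(t)|$, uniformly in $\mathbf{c}$ and in $p$ because the weights telescope to at most $B$. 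One application of the DKW inequality then gives $\varepsilon/2$ with probability $1-\delta/2$ at exactly the stated sample size. This is both where the $p$-independence and the sharp constant come from.
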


\begin{proof}
See Appendix~\ref{app:generalization}.
\end{proof}
}

\shortversion{
The number of samples needed to be able to approximately optimize  $\regret_{\D}(\mathbf{c})$ to an additive $\varepsilon$ factor has a standard square dependency in $1/\varepsilon$ but is independent of the number of products $p$. This result may come as a surprise, especially given that standard measures of the sample complexity of our function class \emph{do} depend on $p$. Indeed, in the supplement, we examine uniform
convergence guarantees over $\cF_p$ via \emph{Pollard's pseudo-dimension} ($\pdim$)
\citep{Pollard84:Pdim} and show
that the complexity of this class measured by $\pdim$
grows with the target number of products $p$: $\pdim (\cF_p) \ge p$. 
}

\longversion{
The number of samples needed to be able to approximately optimize  $\regret_{\D}(\mathbf{c})$ to an additive $\varepsilon$ factor has a standard square dependency in $1/\varepsilon$ but is independent of the number of products $p$. This result may come as a surprise, especially given that standard measures of the sample complexity of our function class \emph{do} depend on $p$. Indeed, we examine uniform convergence guarantees over the function class $\cF_p = \{ f_{\cvec} (\tau) \, | \, \cvec \in \reals_{\ge 0}^p \}$ where $f_\cvec (\tau) = \max_{c_j \le \tau} r(c_j)$\footnote{Note that uniform convergence over $\cF_p$ is equivalent to uniform convergence over the class of $R_{\tau}(\cvec)$ functions.} via \emph{Pollard's pseudo-dimension} ($\pdim$)
\citep{Pollard84:Pdim} and show
that the complexity of this class measured by Pollard's pseudo-dimension
grows with the target number of products $p$: $\pdim (\cF_p) \ge p$.

\begin{definition}[Pollard's Pseudo-dimension]
  A class $\cF$ of real-valued functions P-shatters a set of points $\tau_1,
  \dots, \tau_n$ if there exist a set of ``targets'' $\gamma_1, \dots, \gamma_n$
  such that for every subset $T \subset [n]$ of point indices, there exists a
  function, say $f_T \in \cF$ such that $f_T(\tau_i) \geq \gamma_i$ if and only
  if $i \in T$. In other words, all $2^n$ possible above/below patterns are
  achievable for the targets $\gamma_1, \dots, \gamma_n$. The pseudo-dimension
  of $\cF$, denoted by $\pdim(\cF)$, is the size of the largest set of points
  that it P-shatters.
\end{definition}

\begin{lemma}\label{lem:pdim}
	Let $r : \reals \to \reals$ be any function that is strictly increasing on
	some interval $[a, b] \subset \reals$\footnote{While not always true, most natural instances of the choice of assets are such that $r(\tau)$ is strictly increasing on some interval $[a,b]$.}. Then for any $p$, the corresponding
	class of functions $\cF_p$ has $\pdim(\cF_p) \geq p$.
\end{lemma}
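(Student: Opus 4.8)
The plan is to exhibit $p$ points together with targets that are $P$-shattered by $\cF_p$. Since $r$ is strictly increasing on $[a,b]$, I would place $p$ points $\tau_1 < \tau_2 < \dots < \tau_p$ inside $(a,b)$, spaced out enough that I have room to insert auxiliary product risks strictly between consecutive points. The natural choice of targets is $\gamma_i = r(\tau_i)$ — that is, each point's target is the value the ``full'' bespoke return function takes there. The key observation is that $f_{\cvec}(\tau_i) = \max_{c_j \le \tau_i} r(c_j) \le r(\tau_i) = \gamma_i$ always holds (monotonicity of $r$), with equality if and only if some product $c_j$ is placed in the interval $[\,\cdot\,, \tau_i]$ at a point where $r$ equals $r(\tau_i)$ — and because $r$ is strictly increasing on $[a,b]$, the only such placement that helps at $\tau_i$ without over-helping is $c_j = \tau_i$ itself (any $c_j \in (\tau_i, \tau_{i+1})$ exceeds $\tau_i$ so is excluded by the $c_j \le \tau_i$ constraint, and any $c_j < \tau_i$ in $[a,b]$ gives strictly smaller return). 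So $f_{\cvec}(\tau_i) \ge \gamma_i$ holds precisely when $\tau_i \in \cvec$.

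**Realizing an arbitrary subset.** Given a target subset $T \subseteq [p]$, I would choose the product vector that includes $\tau_i$ for every $i \in T$, and fills the remaining slots (if $|T| < p$) with dummy products placed at risks strictly between consecutive $\tau_i$'s — for concreteness, at points in $(a,b)$ distinct from all $\tau_i$, or below $a$ — anywhere that cannot raise $f_{\cvec}(\tau_i)$ up to $\gamma_i$ for $i \notin T$. This requires a small amount of care: a dummy product at risk $c \in (\tau_{i-1}, \tau_i)$ does get counted in $f_{\cvec}(\tau_i)$, but contributes only $r(c) < r(\tau_i) = \gamma_i$ by strict monotonicity, so it does not flip the above/below pattern at $\tau_i$. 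Thus for this $\cvec$ we get $f_{\cvec}(\tau_i) \ge \gamma_i \iff i \in T$, exactly as required for $P$-shattering. Since $T$ was arbitrary, all $2^p$ patterns are achievable, so $\pdim(\cF_p) \ge p$.

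**Main obstacle.** The only genuinely delicate point is ensuring that the ``padding'' products for indices outside $T$ never accidentally achieve the target at some $\tau_i$ — i.e., confirming $f_{\cvec}(\tau_i) < \gamma_i$ strictly for $i \notin T$. This is where strict monotonicity on $[a,b]$ is essential (a merely non-decreasing $r$ could be flat near $\tau_i$, letting a nearby lower-risk product tie $\gamma_i$), and it is why the lemma's hypothesis is stated the way it is. Once the points $\tau_1, \dots, \tau_p$ and all padding products are kept inside $[a,b]$ and distinct, the argument is immediate from the definition of $f_{\cvec}$ and the strict inequality $r(c) < r(\tau_i)$ for $c < \tau_i$ in that interval. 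I would also note in passing that one needs $p$ genuinely distinct points in $(a,b)$, which is fine since $[a,b]$ is a nondegenerate interval; no upper bound on $\pdim(\cF_p)$ is claimed here, so I don't need to worry about tightness in this direction.
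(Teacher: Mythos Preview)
Your proof is correct and follows essentially the same strategy as the paper's: pick $p$ points in the strictly increasing region and toggle each coordinate's above/below pattern independently by including or omitting a dedicated product. The only cosmetic difference is that you set $\gamma_i = r(\tau_i)$ and use $\tau_i$ itself as the ``on'' product (hitting the target with equality), whereas the paper places $\gamma_i$ strictly between $r(\tau_{i-1})$ and $r(\tau_i)$ and uses a separate product $c_i \in (\tau_{i-1},\tau_i)$ together with a fixed default $c_0$ for padding; both work under the paper's $\geq$ convention for P-shattering, though you should drop the ``or below $a$'' padding option since nothing is assumed about $r$ outside $[a,b]$.
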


\begin{proof}
See Appendix~\ref{app:generalization}.
\end{proof}

}

\longversion{\subsection{Generalization for Fairness Across Several Groups}}
\shortversion{\paragraph{Generalization for Fairness Across Several Groups.}}

In the presence of $g$ groups, we can think of $\D$ as being a mixture over $g$ distributions, say $\{ \D_k \}_{k=1}^g$, where $\D_k$ is the distribution of group $G_k$, for every $k$. We let the weight of this mixture on component $\D_k$ be $\pi_k$. Let $\pi_\text{min} = \min_{1 \le k \le g} \pi_k$.
\longversion{
In this framing, a sample $\tau \sim \D$ can be seen as first drawing $k \sim \pi$ and then $\tau \sim \D_k$. Note that we have sampling access to the distribution $\D$ and cannot directly sample from the components of the mixture: $\{ \D_k \}_{k=1}^g$.
}

\shortversion{
\begin{thm}[Generalization under Fairness]\label{lem:fair_sample_complexity}
	For any $\varepsilon > 0$ and $\delta > 0$, and for any target number of products $p$, if $S = \{ \tau_i \}_{i=1}^n$ consisting of $g$ groups is drawn $i.i.d.$ from $\D$ provided 	that
	$
	n \geq 2 \pi_\text{min}^{-1} \left( 4 B^2 \varepsilon^{-2} \log \left( 8 g / \delta \right) + \log \left(2g / \delta \right) \right)
	$,
	then with probability at least $1-\delta$, we have
	$
	\sup_{\cvec \in \reals_{\ge 0}^p, \, k \in [g]} \left| \regret_{G_k} (\cvec) - \regret_{\D_k} (\cvec) \right| \le \epsilon
	$.
\end{thm}
We extend \Cref{lem:fair_sample_complexity} to the {\em ex ante\/} case with mixtures of products  $\mathcal{C} \in \Delta(\reals^p_{\geq 0})$ in the supplement.
}
\longversion{
\begin{thm}[Generalization with Fairness]\label{lem:fair_sample_complexity}
	Let $r : \reals_{\ge 0} \to \reals$ be any non-decreasing function with bounded range $B$
	and $\D$ be any distribution over $\reals_{\geq 0}$. For any $\varepsilon > 0$ and
	$\delta > 0$, and for any target number of products $p$, if $S = \{ \tau_i \}_{i=1}^n$ consisting of $g$ groups $\{ G_k \}_{k=1}^g$ is drawn $i.i.d.$ from $\D$ provided that
	\[
	n \geq \frac{2}{\pi_\text{min}} \left( \frac{4 B^2 \log \left( 8 g / \delta \right) }{\varepsilon^2 } + \log \left(2g / \delta \right) \right)
	\]
	then with probability at least $1-\delta$, we have
	\[
	\sup_{\cC \in \Delta \left(\reals_{\ge 0}^p \right), \, k \in [g]} \left| \expect_{\cvec \sim \cC} \left[ \regret_{G_k} (\cvec) \right] - \expect_{\cvec \sim \cC} \left[ \regret_{\D_k} (\cvec) \right] \right| \le \epsilon.
	\]
\end{thm}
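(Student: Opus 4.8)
The plan is to reduce the \emph{ex ante} generalization statement (Theorem~\ref{lem:fair_sample_complexity}, the version with mixtures $\cC \in \Delta(\reals^p_{\ge 0})$) to the deterministic per-group uniform convergence bound, which is itself a group-wise version of Theorem~\ref{lem:min_regret_sample_complexity}. First I would establish the deterministic statement: with probability at least $1-\delta$, simultaneously for all $k \in [g]$ and all $\cvec \in \reals^p_{\ge 0}$, $|\regret_{G_k}(\cvec) - \regret_{\D_k}(\cvec)| \le \epsilon$. The only subtlety relative to the ungrouped case is that we do not directly sample from each $\D_k$: we sample $n$ points from the mixture $\D$, and group $k$ receives a random number $n_k = |G_k|$ of them, where $n_k \sim \text{Binomial}(n, \pi_k)$. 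So the first step is a two-part concentration argument. (i) A multiplicative Chernoff bound on each $n_k$ shows that if $n \ge 2\pi_{\min}^{-1}\log(2g/\delta)$ then, except with probability $\delta/2$ (after a union bound over the $g$ groups), every group has $n_k \ge n\pi_k/2 \ge n\pi_{\min}/2$ samples. (ii) Conditioned on the realized counts $n_k$, the points in $G_k$ are i.i.d.\ draws from $\D_k$, so we may apply the ungrouped argument from Theorem~\ref{lem:min_regret_sample_complexity} with sample size $n_k$, failure probability $\delta/(2g)$, and accuracy $\epsilon$: this requires $n_k \ge 2B^2\epsilon^{-2}\log(8g/\delta)$. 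Combining (i) and (ii) and union-bounding over the $g$ groups, the stated sample size $n \ge 2\pi_{\min}^{-1}(4B^2\epsilon^{-2}\log(8g/\delta) + \log(2g/\delta))$ suffices — the factor $4$ absorbing the loss from $n_k \ge n\pi_{\min}/2$ — and gives the deterministic uniform convergence bound with probability $\ge 1-\delta$.

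The second step is the lift from deterministic products $\cvec$ to distributions $\cC$ over products. This is where the $p$-independence of the bound really pays off: on the good event above, for \emph{every} fixed $\cvec \in \reals^p_{\ge 0}$ we have $|\regret_{G_k}(\cvec) - \regret_{\D_k}(\cvec)| \le \epsilon$, so for any $\cC \in \Delta(\reals^p_{\ge 0})$,
\[
\left| \E_{\cvec \sim \cC}[\regret_{G_k}(\cvec)] - \E_{\cvec \sim \cC}[\regret_{\D_k}(\cvec)] \right|
\le \E_{\cvec \sim \cC}\left[ \left| \regret_{G_k}(\cvec) - \regret_{\D_k}(\cvec) \right| \right]
\le \epsilon,
\]
by Jensen's inequality (convexity of $|\cdot|$) and monotonicity of expectation. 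Taking the supremum over $\cC$ and over $k$ preserves the bound because the good event is a single event that does not depend on $\cC$. Hence $\sup_{\cC, k} |\E_{\cvec\sim\cC}[\regret_{G_k}(\cvec)] - \E_{\cvec\sim\cC}[\regret_{\D_k}(\cvec)]| \le \epsilon$ with probability $\ge 1-\delta$.

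For the ungrouped primitive invoked in (ii), I would not re-derive it from scratch but recall its structure, since I am allowed to assume Theorem~\ref{lem:min_regret_sample_complexity}: the key observation is that $\regret_\tau(\cvec) = r(\tau) - \max_{c_j \le \tau} r(c_j)$, and $r(\tau)$ does not depend on $\cvec$, so $\sup_{\cvec}|\regret_S(\cvec) - \regret_\D(\cvec)| = \sup_{\cvec}|\frac1n\sum_i \max_{c_j \le \tau_i} r(c_j) - \E_\tau[\max_{c_j \le \tau} r(c_j)]|$. The crucial combinatorial fact is that although $\cvec$ is continuous, the function $\tau \mapsto \max_{c_j \le \tau} r(c_j)$ is a step function whose value, for any finite sample $S$, depends on $\cvec$ only through which consumers each product ``covers'' — and covering is monotone in $\tau$ — so only $O(n^p)$ distinct behaviors arise; but in fact one can do much better and get a $p$-free bound by a clever argument (e.g.\ identifying the empirical quantity with a single bounded random variable or exploiting the monotone structure), which is exactly what Theorem~\ref{lem:min_regret_sample_complexity} asserts via a Hoeffding-type bound with the $\log(4/\delta)$ and $2B^2/\epsilon^2$ dependence. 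The main obstacle in the whole argument is thus not the mixture decomposition (steps (i)--(ii) are routine Chernoff plus union bound) nor the lift to distributions (one line of Jensen), but rather making sure the random-sample-size conditioning in (ii) is handled rigorously: conditioned on the vector $(n_1, \dots, n_g)$, the samples within each group are genuinely i.i.d.\ from $\D_k$ and independent across groups, so Theorem~\ref{lem:min_regret_sample_complexity} applies conditionally and the conditional failure probabilities integrate back up — this exchangeability/conditioning point is the only place one must be careful, and I would state it explicitly.
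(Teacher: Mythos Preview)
Your proposal is correct and follows the same two-step structure as the paper's proof: first establish per-group uniform convergence by combining a multiplicative Chernoff bound on the group sizes $n_k \sim \text{Bin}(n,\pi_k)$ with the ungrouped Theorem~\ref{lem:min_regret_sample_complexity} applied to each $G_k$ (union-bounded over the $g$ groups), then lift from deterministic $\cvec$ to distributions $\cC$ via $|\E_{\cvec\sim\cC}[\cdot]| \le \E_{\cvec\sim\cC}|\cdot|$. You are in fact more explicit than the paper about the conditioning on $(n_1,\dots,n_g)$ that justifies treating the samples in $G_k$ as i.i.d.\ from $\D_k$, which the paper leaves implicit.
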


\begin{proof}
See Appendix~\ref{app:generalization}.
\end{proof}
}

\section{Experiments}\label{sec:experiments}

In this section, we present experiments that aim to complement our theoretical results. The code for these experiments can be found at \url{https://github.com/TravisBarryDick/FairConsumerFinance}.

\paragraph{Data.} The underlying data for our experiments consists of time 
series of daily closing returns for 50 publicly
traded U.S. equities over a 15-year period beginning in 2005 and ending
in 2020; the equities chosen were those with the highest liquidity during this period. 
From these time series,
we extracted the average daily returns and covariance matrix, which
we then annualized by the standard practice of multiplying returns and covariances by 252, the number of trading days in a calendar year.
The mean annualized return across the 50 stocks is 0.13, and all but one are positive due
to the long time period spanned by the data. The correlation between returns and risk (standard deviation of returns) is
0.29 and significant at $P = 0.04$.
The annualized returns and covariances are then the basis for the computation of optimal portfolios given a specified risk limit $\tau$ as per Section~\ref{sec:model}.\footnote{For ease of understanding, here we 
consider a restriction of the Markowitz portfolio model of~\cite{markowitz1952portfolio} and Equation~\eqref{eq:return-function} in which short sales are not allowed, i.e. the weight assigned to each asset must be non-negative.}

In Figure~\ref{fig:data}(a), we show a scatter plot of risk vs. returns for the 50 stocks.
For sufficiently small risk values,
the optimal portfolio has almost all of its weight in cash, since all of the equities have higher risk
and insufficient independence. At intermediate values of risk, the optimal portfolio concentrates its
weight on just the 7 stocks\footnote{These 7 stocks are Apple, Amazon, Gilead Sciences, Monster Beverage Corporation,
	Netflix, NVIDIA, and Ross Stores.}
highlighted in red
in Figure~\ref{fig:data}(a).
This figure also plots the optimal risk-return frontier, which generally
lies to the northwest (lower risk and higher return) of the stocks themselves, due to the optimization's exploitation of independence.
The black dot highlights the optimal return for risk tolerance 0.1, for which we show the optimal portfolio weights in Figure~\ref{fig:data}(b).
Note that once the risk tolerance reaches that of the single stock with highest return
(red dot lying on the optimal frontier, representing Netflix), the frontier
becomes flat, since at that point the portfolio puts all its weight on this stock.

\paragraph{Algorithms and Benchmarks.}

We consider both population and group regret and a number of algorithms:
the integer linear program (ILP) for optimizing group regret; 
an implementation of the no-regret dynamics (NR) for group regret
described in Algorithm \ref{alg: dynamics};
a ``sparsified'' NR (described below); 
the dynamic program (DP) of Section~\ref{sec:dp} for optimizing population regret; 
and a greedy heuristic,
which iteratively chooses the product that reduces population regret the most.\footnote{In Appendix~\ref{app:greedy},
we show that the average population return $f_S(\cvec) = \frac{1}{n} \sum_{i} \max_{c_j \leq \tau_i} r(c_j)$ is submodular, and thus the greedy algorithm, which has the advantage of
$O(np)$ running time compared to the $O(n^2 p)$ of the DP, also enjoys the standard approximate submodular performance
guarantees.} We will evaluate each of these algorithms on both types of regret (and provide further details in Appendix~\ref{app:exp}).

Note that Algorithm \ref{alg: dynamics} outputs a {\em distribution\/} over sets of $p$ products; a natural way of
extracting a fixed set of products is to take the union of the support (which we refer to as algorithm NR below), 
but in principle this could lead to far more than $p$ products.
We thus sparsify this union with
a heuristic that extracts only $p+s$ total products 
(here $s \geq 0$ is a number of additional ``slack'' products allowed)
by iteratively removing the higher
of the two products closest to each other until we have reduced to $p+s$ products. This
heuristic is motivated by the empirical observation that the NR dynamics often produce
clumps of products close together, and we will see that it performs well for small values of $s$.

\iffalse
This sparsification may be considered a heuristic in that our theoretical guarantees do not apply to it directly, but it performs well in practice.
\fi

\begin{figure}[h]
\centering
        \subcaptionbox{Risk vs. Returns: Scatterplot and Optimal Frontier.}{
        \includegraphics[scale=0.33]{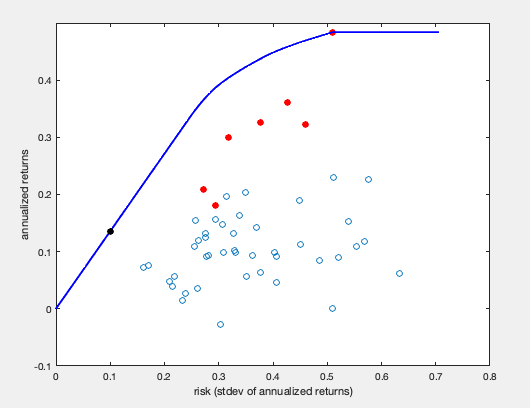}}
        \subcaptionbox{Optimal Portfolio Weights at Risk = 0.1.}{
        \includegraphics[scale=0.33]{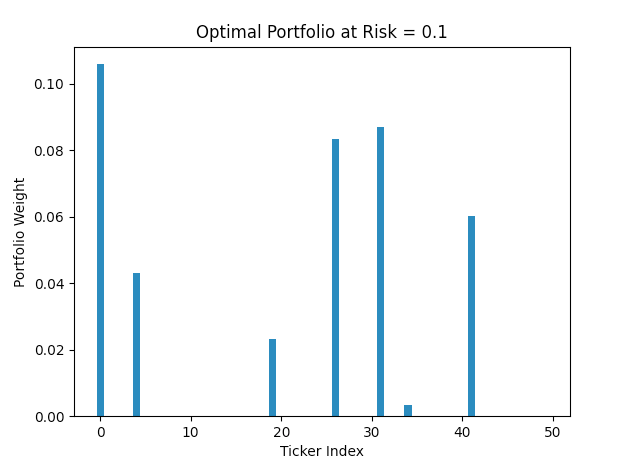}}
        \caption{Asset Risks and Returns, Optimal Frontier and Portfolio Weights}\label{fig:data}
\end{figure}

\paragraph{Experimental Design and Results.}

Our first experiment compares the empirical performance of the algorithms we have discussed on both population and group regret. We focus on a setting with $p=5$ desired products and 50 consumers drawn with uniform probability from 3 groups. Each group is defined by a Gaussian distribution of risk tolerances with $(\mu,\sigma)$ of $(0.02,0.002),(0.03,0.003)$ and $(0.04,0.004)$  (truncated at $0$ if necessary; as per the theory,
we add a cash option with zero risk and return). 
Thus the groups tend to be defined by risk levels, as in the interval groups case, but are noisy and therefore overlap in risk space.
The algorithms we compare include the NR algorithm run for $T=500$ steps; the NR sparsified to contain from 0 to 4 extra products; the ILP; the DP, and greedy. We compute population and group regret and average results over $100$ instances.

Figure \ref{fig:perf} displays the results\footnote{We provide additional design details and considerations in Appendix \ref{app:exp}.}. For both population and group regret, 
NR performs significantly better than ILP, DP, and greedy but also uses considerably larger numbers of products 
(between 8 and 19, with a median of 13).
The sparsified NR using $s = 0$ additional products results
in the highest regret 
but improves rapidly with slack allowed. 
By allowing $s = 2$ extra products, 
the sparsified NR achieves lower population regret than the DP with $p = 5$ products 
and lower group regret as the ILP with 5 products.  
While we made no attempt to optimize the integer program 
(besides using one of the faster solvers available), 
we found sparsified NR to be about two orders of magnitude 
faster than solving the ILP (0.3 vs. 14 seconds on average per instance).

\begin{figure}[h]\centering
	\subcaptionbox{Average Population Regret}{
		\includegraphics[scale=0.33]{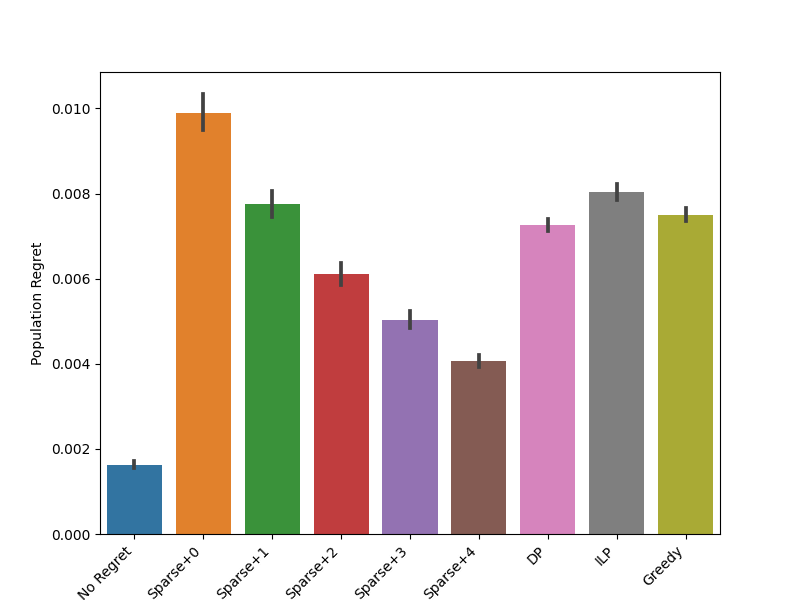}}
	\subcaptionbox{Average Group Regret}{
		\includegraphics[scale=0.33]{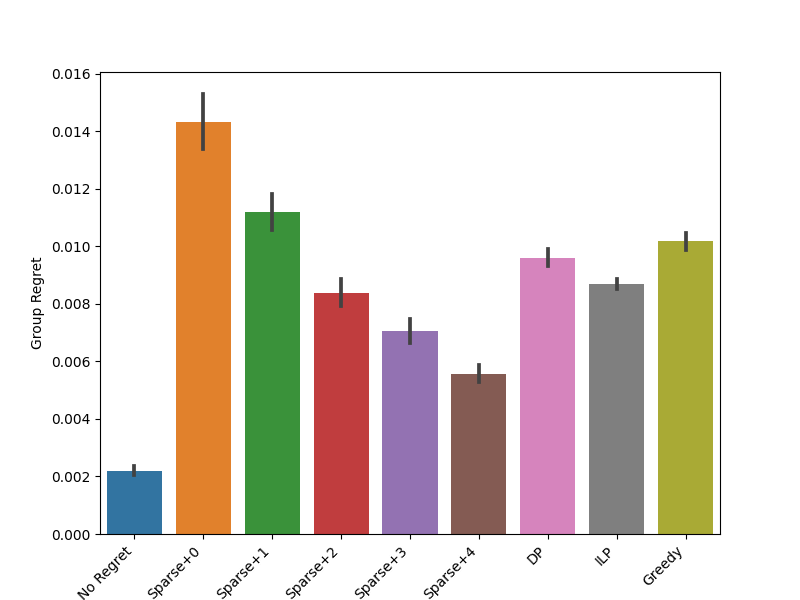}}
	\caption{\label{fig:perf} Algorithm Performance}
\end{figure}

Our second experiment explores generalization. 
We fix $p=5$ and use sparsified NR with a slack of $s = 4$ for a total of 9 products. 
We draw a test set of 5000 consumers from the same distribution described above. 
For sample sizes of $\{25,50,...,500\}$ consumers, we 
obtain product sets using sparsified NR and calculate the incurred regret using 
these products on the test set. We repeat this process 
100 times for each number of consumers and average them. 
This is plotted in Figure \ref{fig:gen}; we observe that measured both 
by population regret as well as by group regret, the test regret decreases as sample size increases. The decay rate is roughly $1/\sqrt{n}$, %\saeed{let's use $n$ to be consistent?}, 
as suggested by theory, 
but our theoretical bound is significantly worse due to sub-optimal 
constants\footnote{The theoretical bound is roughly an order 
of magnitude higher than the experimental bound; we do not plot it as it makes the empirical errors difficult to see.}. 
Training regret increases with sample size because, for a fixed number of products, it is harder to satisfy a larger number of consumers.

\begin{figure}[!h]\centering
	\subcaptionbox{Population Regret}{
		\includegraphics[scale=0.33]{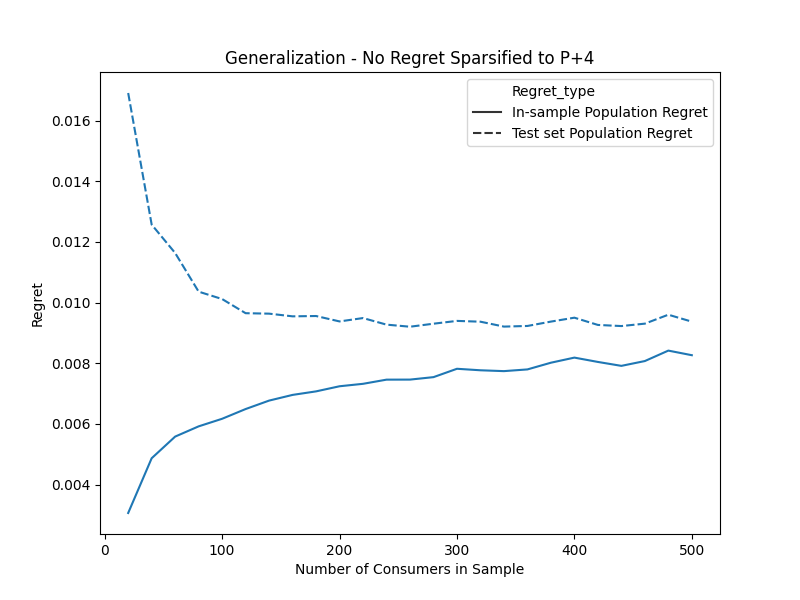}}
	\subcaptionbox{Group Regret}{
		\includegraphics[scale=0.33]{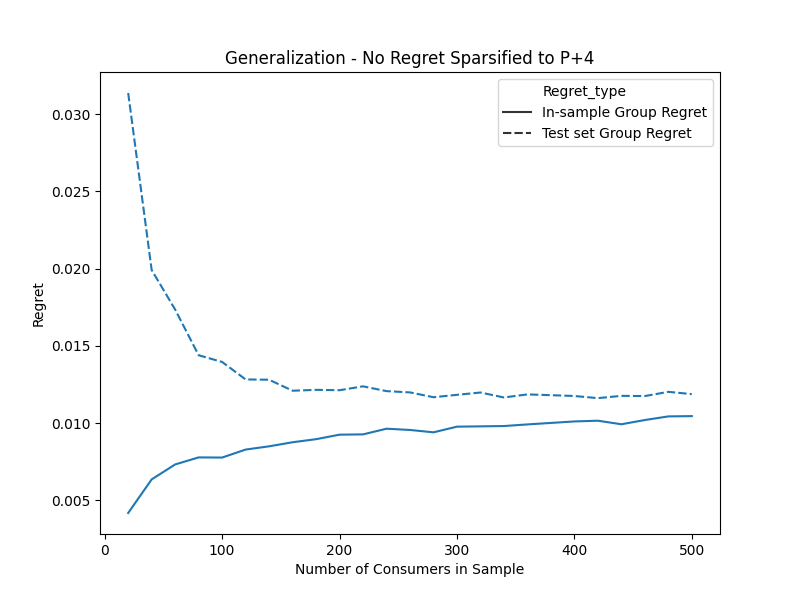}}
	\caption{\label{fig:gen} Generalization for No-Regret with $p=5$ Sparsified to $p + s = 9$ Products}
\end{figure}

\bibliographystyle{plainnat}
\bibliography{refs}
\newpage

\longversion{
\section*{Appendix}
\appendix

\section{Probabilistic Tools}
\begin{lemma}[Additive Chernoff-Hoeffding Bound]\label{lem:chernoff}
Let $X_1, \ldots, X_n$ be a sequence of $i.i.d.$ random variables with $a \le X_i \le b$ and $\E\left[ X_i \right] = \mu$ for all $i$. Let $B \triangleq b -a$. We have that for all $s > 0$,
$$
\prob \left[ \left\vert \frac{\sum_i X_i}{n} - \mu \right\vert \ge s \right] \le 2 \exp{\left( \frac{-2ns^2}{B^2} \right)}
$$
\end{lemma}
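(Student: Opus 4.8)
The plan is to establish this two-sided concentration bound by the standard exponential-moment (Chernoff) method. Writing $\bar X_n \triangleq \frac1n \sum_{i=1}^n X_i$, I would first apply a union bound to reduce to one-sided tails: $\prob[|\bar X_n - \mu| \ge s] \le \prob[\bar X_n - \mu \ge s] + \prob[\mu - \bar X_n \ge s]$. Since replacing each $X_i$ by $-X_i$ leaves the support-interval width $B$ unchanged while flipping the sign of the deviation (and of $\mu$), the lower tail is just the upper tail for the variables $-X_i$; it therefore suffices to prove that each one-sided tail is at most $\exp(-2ns^2/B^2)$, and the factor of $2$ in the statement comes from adding the two symmetric bounds.

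For the upper tail, I would fix a parameter $\lambda > 0$, apply Markov's inequality to the nonnegative random variable $\exp\!\bigl(\lambda \sum_{i=1}^n (X_i - \mu)\bigr)$, and use independence to factor the moment generating function:
\[
\prob[\bar X_n - \mu \ge s] = \prob\left[\sum_{i=1}^n (X_i - \mu) \ge ns\right] \le e^{-\lambda n s}\prod_{i=1}^n \E\left[e^{\lambda (X_i - \mu)}\right].
\]
The key technical ingredient — and the only nonroutine step — is Hoeffding's lemma: for a random variable $Y$ with $\E[Y]=0$ supported on an interval of length $B$ (here $Y = X_i - \mu \in [a-\mu,\, b-\mu]$), one has $\E[e^{\lambda Y}] \le \exp(\lambda^2 B^2 / 8)$. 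I would prove this lemma using convexity of $y \mapsto e^{\lambda y}$: bound $e^{\lambda y}$ above on the interval by its secant line, take expectations (using $\E[Y]=0$ to kill the linear term), and then verify that the resulting explicit function of $\lambda$ is dominated by $\exp(\lambda^2 B^2/8)$ via a second-order Taylor expansion in $\lambda$ — its value and first derivative at $\lambda=0$ vanish, and its second derivative is uniformly bounded by $B^2/4$.

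Combining these, $\prob[\bar X_n - \mu \ge s] \le \exp\bigl(-\lambda n s + n \lambda^2 B^2/8\bigr)$ for every $\lambda > 0$. Minimizing the exponent over $\lambda$ — the optimizer is $\lambda = 4s/B^2$ — gives $\prob[\bar X_n - \mu \ge s] \le \exp(-2ns^2/B^2)$, and the symmetric argument gives the same bound for $\prob[\mu - \bar X_n \ge s]$. Adding the two and invoking the union bound from the first step yields $\prob[|\bar X_n - \mu| \ge s] \le 2\exp(-2ns^2/B^2)$, as claimed. I expect the main (indeed only) obstacle to be the proof of Hoeffding's lemma; everything else is bookkeeping, and one could alternatively just cite Hoeffding's original inequality.
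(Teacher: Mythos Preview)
Your proposal is correct and is the standard proof of Hoeffding's inequality. Note, however, that the paper does not actually prove this lemma: it is stated in the appendix as a known probabilistic tool and invoked without proof, so there is no ``paper's approach'' to compare against. Your argument supplies strictly more than the paper does, and citing Hoeffding's original result (as you note at the end) would also be entirely acceptable here.
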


\begin{lemma}[Standard DKW-Inequality]\label{lem:DKW}
Let $\D$ be any distribution over $\reals$ and $X_1, \dots, X_n$ be an $i.i.d.$
  sample drawn from $\D$. Define
  $
  N(t) \triangleq \prob_{X \sim \D}(X \le t)
  $
  and
  $
  \hat N_n(t) \triangleq n^{-1} \sum_{i=1}^n \ind{X_i \le t}
  $
  to be the cumulative density functions of $\D$ and the drawn sample,
  respectively. We have that for all $s > 0$,
  $$
  \prob \left[ \sup_{t \in \reals} \left| \hat N_n(t) - N(t) \right| \ge s \right] \le 2 \exp{\left( -2ns^2 \right)}
  $$

\end{lemma}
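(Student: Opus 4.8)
The plan is to recognize this as the Dvoretzky--Kiefer--Wolfowitz inequality in the form carrying Massart's sharp constant, and to reduce the statement for a general distribution $\D$ to the case of the uniform distribution on $[0,1]$, for which the bound is classical.

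\textbf{Step 1: reduction to the uniform distribution.} Let $N^{-1}(u) = \inf\{t : N(t) \ge u\}$ be the generalized inverse of the CDF $N$. If $U_1,\dots,U_n$ are i.i.d.\ uniform on $[0,1]$, then $\bigl(N^{-1}(U_i)\bigr)_{i=1}^n$ has the same joint law as $(X_i)_{i=1}^n$ (the probability integral transform). Using the identity $N^{-1}(u)\le t \iff N(t)\ge u$, under this coupling $\hat N_n(t) = \frac{1}{n}\sum_{i=1}^n \ind{U_i \le N(t)} = \hat U_n\bigl(N(t)\bigr)$, where $\hat U_n$ is the empirical CDF of $U_1,\dots,U_n$. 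Hence $\sup_{t\in\reals}\bigl|\hat N_n(t)-N(t)\bigr| = \sup_{v\in\mathrm{range}(N)}\bigl|\hat U_n(v)-v\bigr| \le \sup_{v\in[0,1]}\bigl|\hat U_n(v)-v\bigr|$ (atoms of $\D$ only shrink the range of $N$, hence shrink the left-hand supremum), so it suffices to bound the deviation of the empirical CDF of $n$ i.i.d.\ uniforms.

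\textbf{Step 2: the uniform empirical process.} Here I would invoke the classical tail bound $\prob\bigl[\sup_{v\in[0,1]}|\hat U_n(v)-v| \ge s\bigr] \le 2e^{-2ns^2}$, which follows from the one-sided bound $\prob\bigl[\sup_v (\hat U_n(v)-v)\ge s\bigr]\le e^{-2ns^2}$ (Dvoretzky, Kiefer and Wolfowitz 1956, with the optimal leading constant due to Massart 1990) by a union bound over the two signs. For a fully self-contained treatment one could instead observe that the class of one-sided thresholds $\{v\mapsto \ind{v\le t} : t\in\reals\}$ has VC dimension $1$, so symmetrization together with a bounded-differences estimate already yields a bound of the form $Ce^{-cns^2}$; since the inequality is used only inside the sample-complexity results of Section~\ref{sec:generalization}, such a weaker constant would suffice there, changing the required $n$ only by a constant factor.

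\textbf{Main obstacle.} The delicate part is the \emph{sharp} constant --- getting the leading factor $2$ and the exponent $-2ns^2$ rather than a weaker $Ce^{-cns^2}$. This is exactly Massart's refinement of DKW, which rests on a reflection-type identity for the uniform empirical process and a careful treatment of the extreme order statistics; I would cite it rather than reprove it, with Step~1 being the only component specific to the present setting.
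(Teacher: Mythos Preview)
Your proposal is correct as a proof sketch: the reduction via the probability integral transform is the standard way to pass from an arbitrary $\D$ to the uniform case, and the sharp constant is indeed Massart's result, which one cites rather than reproves. The paper, however, does not prove this lemma at all --- it is listed in the ``Probabilistic Tools'' appendix as a standard fact without proof, alongside the Chernoff--Hoeffding bounds, and is simply invoked later (via Lemma~\ref{lem:strictDKW}) in the generalization arguments of Section~\ref{sec:generalization}. So your write-up is strictly more than what the paper provides; if you want to match the paper, a one-line citation to Dvoretzky--Kiefer--Wolfowitz (1956) with Massart's (1990) constant would suffice.
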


The following Lemma is just a sanity check to make sure that we can apply the
DKW inequality to the \emph{strict} version of a cumulative density function.

\begin{lemma}[DKW-Inequality for Strict CDF]\label{lem:strictDKW}
  Let $\D$ be any distribution over $\reals$ and $X_1, \dots, X_n$ be an $i.i.d.$
  sample drawn from $\D$. Define
  $
  P(t) \triangleq \prob_{X \sim \D}(X < t)
  $
  and
  $
  \hat P_n(t) \triangleq n^{-1} \sum_{i=1}^n \ind{X_i < t}
  $
  to be the strict cumulative density functions of $\D$ and the drawn sample,
  respectively. We have that for all $s > 0$,
  $$
  \prob \left[ \sup_{t \in \reals} \left| \hat P_n(t) - P(t) \right| \ge s \right] \le 2 \exp{\left( -2ns^2 \right)}
  $$
\end{lemma}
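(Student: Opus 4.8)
The plan is to avoid any new concentration estimate and instead deduce the strict‐CDF statement directly from the ordinary DKW inequality (Lemma~\ref{lem:DKW}) via a reflection trick. The point is that the strict CDF of a random variable is, up to a sign flip, the non‐strict CDF of its negative, and DKW already handles non‐strict CDFs.

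Concretely, I would introduce the reflected sample $Y_i \triangleq -X_i$, which is i.i.d. from the law of $-X$, and write $G(t) \triangleq \prob(Y \le t)$ and $\hat G_n(t) \triangleq n^{-1}\sum_{i=1}^n \ind{Y_i \le t}$ for its (non‐strict) CDF and empirical CDF; Lemma~\ref{lem:DKW} applies verbatim to this pair. The next step is the elementary bookkeeping relating $(G,\hat G_n)$ to $(P,\hat P_n)$: for every $t$, $G(t) = \prob(-X \le t) = \prob(X \ge -t) = 1 - \prob(X < -t) = 1 - P(-t)$, where the \emph{strict} inequality appears precisely because $\{X \ge -t\}$ is the complement of $\{X < -t\}$; likewise $\ind{Y_i \le t} = \ind{X_i \ge -t} = 1 - \ind{X_i < -t}$, so $\hat G_n(t) = 1 - \hat P_n(-t)$. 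Subtracting gives $|\hat G_n(t) - G(t)| = |\hat P_n(-t) - P(-t)|$ for every $t \in \reals$.

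Since $t \mapsto -t$ is a bijection of $\reals$, taking suprema yields $\sup_{t\in\reals}|\hat G_n(t) - G(t)| = \sup_{u\in\reals}|\hat P_n(u) - P(u)|$, and applying Lemma~\ref{lem:DKW} to $Y_1,\dots,Y_n$ then gives, for every $s>0$, $\prob[\sup_{u}|\hat P_n(u) - P(u)| \ge s] \le 2\exp(-2ns^2)$, which is the claim.

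As the surrounding text indicates, this is a sanity check and there is no genuine obstacle; the only point needing a little care is the indicator/complement bookkeeping that converts the non‐strict CDF of the reflected variable into one minus the strict CDF of the original, together with the observation that reflection preserves the supremum over $\reals$. (An alternative that avoids reflection would write $P(t) = \lim_{\epsilon\downarrow 0} N(t-\epsilon)$ and $\hat P_n(t) = \lim_{\epsilon\downarrow 0}\hat N_n(t-\epsilon)$ and pass the DKW bound through these limits inside the supremum, but the reflection argument is cleaner and requires no limiting step.)
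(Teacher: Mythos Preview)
Your proposal is correct and takes essentially the same approach as the paper: both apply the standard DKW inequality (Lemma~\ref{lem:DKW}) to the reflected variables $-X_i$, using the complement/indicator bookkeeping to identify $|\hat P_n(t)-P(t)|$ with $|\hat N_n(-t)-N(-t)|$ and then taking the supremum.
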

\begin{proof}
  The key idea is to apply the standard DKW inequality to the (non-strict) CDF
  of the random variable $-X_i$. Towards that end,
  define $N(t) \triangleq \prob_{X \sim D}(-X \leq t)$ and $\hat N_n(t) \triangleq
  n^{-1} \sum_{i=1}^n \ind{-X_i \leq t}$. Then we have the following connection to $P$
  and $\hat P_n$:
  \begin{align*}
  \left|\hat P_n(t) - P(t) \right|
  &= \left |1 - \frac{1}{n} \sum_{i=1}^n \ind{X_i \ge t} - 1 + \prob_{X \sim D}(X \ge t) \right| \\
  &= \left | \frac{1}{n} \sum_{i=1}^n \ind{-X_i \le -t} - \prob_{X \sim D} (-X \le -t) \right| \\
  &= \left|\hat N_n(-t) - N(-t) \right|
  \end{align*}
  The proof is complete by the standard DKW inequality (see Lemma~\ref{lem:DKW}).
\end{proof}

\begin{lemma}[Multiplicative Chernoff Bound]\label{lem:mult-chernoff}
Let $X_1, \ldots, X_n$ be a sequence of $i.i.d.$ random variables with $0 \le X_i \le 1$ and $\E\left[ X_i \right] = \mu$ for all $i$. We have that for all $s > 0$,
$$
\prob \left[  \frac{\sum_i X_i}{n}  \le (1-s) \mu \right] \le \exp{\left( \frac{-n \mu s^2}{2} \right)}
$$
\end{lemma}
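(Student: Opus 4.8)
The plan is to use the standard exponential-moment (Chernoff) method for the lower tail. Fix $t > 0$. Since $\exp\bigl(-t \sum_{i=1}^n X_i\bigr)$ is nonnegative, Markov's inequality together with independence of the $X_i$ gives
\[
\prob\left[ \sum_{i=1}^n X_i \le (1-s) n \mu \right]
= \prob\left[ e^{-t \sum_{i=1}^n X_i} \ge e^{-t (1-s) n \mu} \right]
\le e^{t(1-s)n\mu} \prod_{i=1}^n \E\!\left[ e^{-t X_i} \right].
\]

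First I would bound each factor $\E[e^{-tX_i}]$. Since $x \mapsto e^{-tx}$ is convex and $X_i \in [0,1]$, it lies below the chord through $(0,1)$ and $(1, e^{-t})$, so $e^{-tX_i} \le 1 - (1 - e^{-t})X_i$ pointwise; taking expectations and then using $1 + y \le e^{y}$ yields $\E[e^{-tX_i}] \le 1 - \mu(1-e^{-t}) \le \exp\bigl(-\mu(1-e^{-t})\bigr)$. Multiplying over $i$ and substituting back gives, for every $t > 0$,
\[
\prob\left[ \sum_{i=1}^n X_i \le (1-s) n \mu \right] \le \exp\Bigl( n\mu\bigl[\, t(1-s) - (1 - e^{-t}) \,\bigr] \Bigr).
\]

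Finally I would optimize the exponent in $t$. For $s \in (0,1)$, differentiating $t(1-s) - 1 + e^{-t}$ shows the minimum is attained at $t = \ln\frac{1}{1-s}$, where the bracket equals $-\phi(s)$ with $\phi(s) := (1-s)\ln(1-s) + s$. It then remains to invoke the elementary inequality $\phi(s) \ge s^2/2$ on $[0,1)$: writing $h(s) = \phi(s) - s^2/2$, one checks $h(0) = h'(0) = 0$ and $h''(s) = \frac{s}{1-s} \ge 0$, so $h \ge 0$. This gives $\prob[\sum_i X_i \le (1-s)n\mu] \le \exp(-n\mu\,\phi(s)) \le \exp(-n\mu s^2/2)$. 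The regime $s \ge 1$ is immediate: then $(1-s)n\mu \le 0 \le \sum_i X_i$, so the event either forces $\sum_i X_i = 0$ (when $s = 1$, bounded via $\prob[X_i = 0] \le 1-\mu$ and independence) or is empty (when $s > 1$, $\mu > 0$), and the probability is at most the claimed bound in each case. There is no real obstacle here; the only steps needing more than bookkeeping are the chord bound from convexity and the short second-derivative argument for $\phi(s) \ge s^2/2$.
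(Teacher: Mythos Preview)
The paper does not prove this lemma; it is stated without proof in the ``Probabilistic Tools'' appendix as a standard result to be invoked later. Your argument is the standard exponential-moment derivation of the multiplicative Chernoff lower-tail bound and is correct, including the convexity/chord step, the optimization in $t$, the second-derivative verification of $\phi(s) \ge s^2/2$, and the handling of the boundary cases $s \ge 1$.
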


\section{A More General Regret Notion}\label{app:two-sided}

In this section we propose a family of regret functions parametrized by a positive real number that captures the regret notion we have been using throughout the paper as a special case. We will show how minor tweaks allow us to extend the dynamic program for whole population regret minization of Section~\ref{sec:dp}, as well as the no-regret dynamics for {\em ex ante\/} fair regret minimization of Section~\ref{sec:exante_fair}, to this more general notion of regret. For a given consumer with risk threshold $\tau$, and for any $\alpha \in (0, \infty]$, the regret of the consumer when assigned to a single product with risk threshold $c$ is defined as follows:
\begin{equation}\label{eq:new-regret-one-prod}
\regret_{\tau}^\alpha ( c ) = \begin{cases} r(\tau) - r(c) & c \le \tau 
\\
\alpha \left( c - \tau \right) & c > \tau
\end{cases}
\end{equation}
When offering more than one product, say $p$ products represented by $\cvec = (c_1, c_2, \ldots, c_p)$, the regret of a consumer with risk threshold $\tau$ is the best regret she can get using these products. Concretely,
\begin{equation}\label{eq:new-regret}
\regret_{\tau}^\alpha ( \cvec ) = \min_{1 \le i \le p} \regret_{\tau}^\alpha ( c_i )
\end{equation}
We note that our previous regret notion is recovered by setting $\alpha = \infty$. When $\alpha \neq \infty$, a consumer may be assigned to a product with higher risk threshold, and the consumer's regret is then measured by the difference between her desired risk and the product's risk, scaled by a factor of $\alpha$. We note that in practice, consumers may have different behavior as to whether they are willing to accept a higher risk product, i.e. different consumers may have different values of $\alpha$; in this section, we use a unique value of $\alpha$ for all consumers for simplicity of exposition and note that our insights generalize to different consumers having different $\alpha$'s. The regret of a set of consumers $S = \{\tau_i\}_{i=1}^n$ is defined as the average regret of consumers, as before. In other words,
\begin{equation}
\regret_{S}^\alpha ( \cvec ) = \frac{1}{n} \sum_{i=1}^n \regret_{\tau_i}^\alpha ( \cvec )
\end{equation}
We first show in Lemma~\ref{lem:one-prod} how we can find one single product to minimize the regret of a set of consumers represented by a set $S = \{ \tau_i \}_{i=1}^n$. Note this general formulation of regret will allow choosing products that do not necessarily fall onto the consumer risk thresholds. In Lemma~\ref{lem:one-prod} we will show that given access to the derivative function $r' (\tau)$, the problem $\min_{c \in \R_+} \regret_{S}^\alpha (c)$ can be reduced to an optimization problem that can be solved exactly in $O(n)$ time. Before that, we first observe the following property of function $r(\tau)$ (Equation~\eqref{eq:return-function}) which will be used in Lemma~\ref{lem:one-prod}:

\begin{clm}\label{clm:concave}
$r$ is a concave function.
\end{clm}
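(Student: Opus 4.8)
The plan is to prove concavity directly from the definition in Equation~\eqref{eq:return-function} by building, for an intermediate risk tolerance, a feasible portfolio out of the convex combination of optimal portfolios for the two endpoints. Fix $\tau_1, \tau_2 \ge 0$ and $\lambda \in [0,1]$, and let $\alloc_1, \alloc_2$ attain $r(\tau_1), r(\tau_2)$ respectively (I would first remark that the maximum is attained, or else run the same argument with $\eta$-approximate maximizers and send $\eta \to 0$). Set $\alloc = \lambda \alloc_1 + (1-\lambda)\alloc_2$. Then $\ones^\top \alloc = \lambda \ones^\top \alloc_1 + (1-\lambda)\ones^\top \alloc_2 = 1$, and linearity of the objective gives $\alloc^\top \mu = \lambda\, \alloc_1^\top\mu + (1-\lambda)\, \alloc_2^\top\mu = \lambda r(\tau_1) + (1-\lambda) r(\tau_2)$.

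The crux is the variance constraint. Since $\Sigma$ is a covariance matrix it is positive semidefinite, so writing $\Sigma = M^\top M$ the map $\alloc \mapsto \sqrt{\alloc^\top \Sigma \alloc} = \lVert M\alloc\rVert_2$ is a seminorm: it is positively homogeneous and obeys the triangle inequality. Hence, using feasibility of $\alloc_1,\alloc_2$ in the last step,
\[
\sqrt{\alloc^\top \Sigma \alloc} = \lVert M(\lambda \alloc_1 + (1-\lambda)\alloc_2)\rVert_2 \le \lambda \lVert M\alloc_1\rVert_2 + (1-\lambda)\lVert M\alloc_2\rVert_2 \le \lambda \tau_1 + (1-\lambda)\tau_2 .
\]
Squaring gives $\alloc^\top \Sigma \alloc \le \bigl(\lambda\tau_1 + (1-\lambda)\tau_2\bigr)^2$, so $\alloc$ is a feasible point for the optimization problem defining $r\bigl(\lambda\tau_1 + (1-\lambda)\tau_2\bigr)$.

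Combining the two observations, $r\bigl(\lambda\tau_1 + (1-\lambda)\tau_2\bigr) \ge \alloc^\top\mu = \lambda r(\tau_1) + (1-\lambda)r(\tau_2)$, which is precisely concavity of $r$. The only point requiring any care is the attainment of the maximum: if $\Sigma$ is singular the feasible set need not be compact, which I would handle either by restricting to the affine subspace on which $\Sigma$ is positive definite (where boundedness is immediate) or via the $\eta$-approximate-maximizer argument noted above; everything else reduces to the triangle inequality for $\lVert M\cdot\rVert_2$.
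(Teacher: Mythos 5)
Your proof is correct and follows essentially the same route as the paper's: form the convex combination of the two optimal portfolios, check the budget constraint by linearity, and verify the risk constraint to conclude feasibility. The only difference is cosmetic --- you bound $\sqrt{\alloc^\top \Sigma\, \alloc}$ via the triangle inequality for the seminorm $\lVert M \cdot \rVert_2$, whereas the paper expands the quadratic form and applies Cauchy--Schwarz to the cross-covariance term, which is the same inequality in different packaging; your added remark about attainment of the maximum is a reasonable extra care point that the paper elides.
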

\begin{proof}[Proof of Claim~\ref{clm:concave}]
Let $X$ be the vector of random variables representing $m$ assets and recall $\mu$ is the mean of $X$ and $\Sigma$ is its covariance matrix. Fix $\tau_1, \tau_2 \ge 0$, and $\beta \in (0,1)$. For $i \in \{1,2\}$, let $\w_i$ be an optimal solution to the optimization problem for $r(\tau_i)$, i.e., $\w_i$ is such that $r(\tau_i) = \w_i^\top \mu$. We want to show that
\begin{equation}\label{eq:concav}
r \left(\beta \tau_1 + \left(1-\beta \right) \tau_2 \right) \ge \beta r(\tau_1) + (1-\beta) r(\tau_2) = \left( \beta \w_1 + \left(1-\beta \right) \w_2 \right)^\top \mu
\end{equation}
So all we need to show is that $\w \triangleq \beta \w_1 + (1-\beta) \w_2$ is feasible in the corresponding optimization problem for $r \left(\beta \tau_1 + \left(1-\beta \right) \tau_2 \right)$. Then, by definition, Equation~(\ref{eq:concav}) holds. We have that
\begin{align*}
\ones^\top \w = \beta ( \ones^\top \w_1 ) + (1-\beta) (\ones^\top \w_2) = 1
\end{align*}
and
\begin{align*}
\w^\top \Sigma \, \w &= \beta^2 \w_1^\top \Sigma \, \w_1 + (1-\beta)^2 \w_2^\top \Sigma \, \w_2 + 2 \beta (1-\beta) \w_1^\top \Sigma \, \w_2 \\
&\le \beta^2 \tau_1^2 + (1-\beta)^2 \tau_2^2 + 2 \beta (1-\beta) \cdot Cov \left( \w_1^\top X, \w_2^\top X\right) \\
&\le  \beta^2 \tau_1^2 + (1-\beta)^2 \tau_2^2 + 2 \beta (1-\beta) \cdot \sqrt{ Var(\w_1^\top X) Var(\w_2^\top X)} \\
& =\beta^2 \tau_1^2 + (1-\beta)^2 \tau_2^2 + 2 \beta (1-\beta) \cdot \sqrt{ (\w_1^\top \Sigma \, \w_1)(\w_2^\top \Sigma \, \w_2) } \\
& \le \beta^2 \tau_1^2 + (1-\beta)^2 \tau_2^2 + 2 \beta (1-\beta) \tau_1 \tau_2 \\
& = \left( \beta \tau_1 + (1-\beta) \tau_2 \right)^2
\end{align*}
where the second inequality follows from Cauchy-Schwarz inequality. Note also that $Cov \left( \w_1^\top X, \w_2^\top X\right) = \w_1^\top \Sigma \, \w_2$ and $Var(\w_i ^\top X) = \w_i^\top \Sigma \, \w_i$, for $i \in \{1,2\}$.
\end{proof}

\begin{lemma}\label{lem:one-prod}
Let $S = \{\tau_i\}_{i=1}^n$ where $\tau_1 \le \tau_2 \le \ldots \le \tau_n$. We have that $\regret_{S}^\alpha (c)$ is a convex function and
\begin{equation}\label{eq:one-prod}
\min_{c \in \R_{\ge 0}} \regret_{S}^\alpha (c) = \min \left\{ \min_{1 \le i \le n} \regret_{S}^\alpha ( \tau_i ), \min_{1 \le i \le n-1} \regret_{S}^\alpha ( c_i ) \cdot \indinf{ \tau_{i} < c_i < \tau_{i+1}} \right\}
\end{equation}
where for every $1 \le i \le n-1$, $c_i = (r')^{-1} \left(\frac{ \alpha i }{ n - i}\right)$ ($c_i = \infty$ if $(r')^{-1} \left(\frac{ \alpha i }{ n - i}\right)$ does not exist) and
\[
\indinf{ \tau_{i} < c_i < \tau_{i+1}} \triangleq \begin{cases} 1 & \tau_{i} < c_i < \tau_{i+1} \\ \infty & \text{otherwise} \end{cases}
\]
\end{lemma}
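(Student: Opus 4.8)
The plan is to prove the two assertions of the lemma in turn: first that $c \mapsto \regret_S^\alpha(c)$ is convex on $\R_{\ge 0}$, and then — using convexity — that its minimum over $\R_{\ge 0}$ is attained at one of the finitely many candidate points appearing in~\eqref{eq:one-prod}. For convexity, since $\regret_S^\alpha = \tfrac1n \sum_{i=1}^n \regret_{\tau_i}^\alpha$ is an average, it suffices to show each single-consumer map $c \mapsto \regret_\tau^\alpha(c)$ is convex. On $\{c \le \tau\}$ it equals $r(\tau) - r(c)$, convex because $r$ is concave (Claim~\ref{clm:concave}), with (one-sided) derivative $-r'(c)$ that is non-decreasing since $r'$ is non-increasing; on $\{c \ge \tau\}$ it equals the linear function $\alpha(c-\tau)$ with derivative $\alpha$. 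The two pieces agree (both $=0$) at $c=\tau$, so the function is continuous there, and the left derivative $-r'(\tau)\le 0$ does not exceed the right derivative $\alpha>0$ (using $r'\ge 0$ as $r$ is non-decreasing). Hence the one-sided derivative of $\regret_\tau^\alpha$ is non-decreasing on all of $\R_{\ge 0}$, so $\regret_\tau^\alpha$, and therefore $\regret_S^\alpha$, is convex.

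For the minimizer, note $\regret_S^\alpha(c)\to\infty$ as $c\to\infty$ (for $c>\tau_n$ it equals $\tfrac1n\sum_i\alpha(c-\tau_i)$), so the minimum is attained, and $c^*$ is a minimizer iff $0$ lies in the subdifferential of $\regret_S^\alpha$ at $c^*$. The only non-differentiable points are the thresholds $\tau_1,\dots,\tau_n$; aside from those and the left endpoint $c=0$, any minimizer must be a stationary point inside some open interval $(\tau_i,\tau_{i+1})$. On such an interval the consumers $1,\dots,i$ have $c>\tau_j$ (contributing $\alpha(c-\tau_j)$) and the consumers $i+1,\dots,n$ have $c<\tau_j$ (contributing $r(\tau_j)-r(c)$), so
\begin{equation*}
\frac{d}{dc}\,\regret_S^\alpha(c)=\frac1n\bigl(i\alpha-(n-i)\,r'(c)\bigr),
\end{equation*}
which vanishes exactly when $r'(c)=\tfrac{i\alpha}{n-i}$, i.e.\ at $c=c_i=(r')^{-1}\!\bigl(\tfrac{i\alpha}{n-i}\bigr)$. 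If $\tfrac{i\alpha}{n-i}$ is outside the range of $r'$ then the derivative has constant sign on $(\tau_i,\tau_{i+1})$ and that interval contributes no interior minimizer — exactly the case excluded by the convention $c_i=\infty$ together with the factor $\indinf{\tau_i<c_i<\tau_{i+1}}$. Finally, $c=0$ can be dropped because $\regret_S^\alpha$ is non-increasing on $[0,\tau_1)$ (there it equals $\tfrac1n\sum_i r(\tau_i)-r(c)$ and $r$ is non-decreasing), so its value at $0$ is at least its value at $\tau_1$. Collecting the surviving candidates — the thresholds $\tau_i$ for $1\le i\le n$ and the stationary points $c_i$ that genuinely lie strictly inside $(\tau_i,\tau_{i+1})$ — yields exactly the claimed identity.

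The conceptual content is light; I expect the main obstacle to be the degenerate cases. Ties among the $\tau_i$ make some intervals $(\tau_i,\tau_{i+1})$ empty, which is handled automatically since then no finite $c_i$ lies strictly between $\tau_i$ and $\tau_{i+1}$. When $r$ is concave but not strictly concave, $r'$ is non-increasing but not invertible, so ``$(r')^{-1}$'' should be read as ``any $c$ with $r'(c)=\tfrac{i\alpha}{n-i}$''; on a flat stretch of $r'$ the function $\regret_S^\alpha$ is affine, so every such $c$ gives the same value and the formula is unambiguous. And the assertion ``a convex function attains its minimum at a kink, at a stationary point of a smooth piece, or at the boundary'' should be made precise via one-sided derivatives / subdifferentials rather than by assuming $r$ is everywhere differentiable.
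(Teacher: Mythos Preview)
Your proposal is correct and follows essentially the same approach as the paper: establish convexity of each $\regret_\tau^\alpha(c)$ via concavity of $r$ (Claim~\ref{clm:concave}), restrict the search for the minimum to $[\tau_1,\tau_n]$, and on each open interval $(\tau_i,\tau_{i+1})$ differentiate the explicit piecewise form to obtain the stationarity condition $r'(c)=\alpha i/(n-i)$. Your treatment is in fact more thorough than the paper's---you spell out the one-sided derivative check at the kink $c=\tau$, handle the left boundary $c=0$, and discuss ties and non-invertibility of $r'$---whereas the paper simply asserts convexity of $\regret_\tau^\alpha$ from Claim~\ref{clm:concave} and differentiates without addressing these edge cases.
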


\begin{proof}[Proof of Lemma~\ref{lem:one-prod}]
First observe that Claim~\ref{clm:concave} implies for every $\tau$, $\regret_{\tau}^\alpha ( c )$ defined in Equation~\eqref{eq:new-regret-one-prod} is convex. Hence, $\regret_{S}^\alpha ( c )$ is convex because it is an average of convex functions. We have that for a single product $c$,
\[
\regret_{S}^\alpha ( c ) = \frac{1}{n} \sum_{j=1}^n \left\{ \left( r(\tau_j) - r(c) \right) \ind{c \le \tau_j} + \alpha \left( c - \tau_j \right) \ind{c > \tau_j} \right\}
\]
Note that $\regret_{S}^\alpha ( \tau_1 ) \le \regret_{S}^\alpha ( c )$ for every $c < \tau_1$ and $\regret_{S}^\alpha ( \tau_n ) \le \regret_{S}^\alpha ( c )$ for every $c > \tau_n$. We can therefore focus on the domain $[\tau_1, \tau_n]$ to find the minimum. The function $\regret_{S}^\alpha ( c )$ is differentiable everywhere except for the points given by consumers' risk thresholds: $S = \{\tau_i\}_{i=1}^n$. This justifies the first term appearing in the $\min \{ \cdot, \cdot \}$ term of Equation~\eqref{eq:one-prod}. For every $1 \le i \le n-1$, the function  $\regret_{S}^\alpha ( c )$ on domain $(\tau_i, \tau_{i+1})$ is differentiable and can be written as:
\[
\regret_{S}^\alpha ( c ) = \frac{1}{n} \left[ \alpha \sum_{j \le i} \left( c - \tau_j \right) + \sum_{j \ge i+1} \left( r(\tau_j) - r(c) \right) \right]
\]
The minimum of $\regret_{S}^\alpha ( c )$ on domain $(\tau_i, \tau_{i+1})$ is achieved on points $c$ where
\[
\frac{d}{dc} \regret_{S}^\alpha ( c ) = \frac{1}{n} \left[ \alpha i - r'(c) (n -i) \right] = 0 \quad \Longrightarrow \quad r' (c_i) = \frac{ \alpha i }{ n - i}
\]
We note that $ \regret_{S}^\alpha ( c )$ is a convex function by the first part of this Lemma implying that $c_i$ (if belongs to the domain $(\tau_i, \tau_{i+1})$) is a local \emph{minimum}. This justifies the second term appearing in the $\min \{ \cdot, \cdot \}$ term of Equation~\eqref{eq:one-prod} and completes the proof.
\end{proof}

\begin{rmk}
Given any set of weights $\w \in \reals_{\ge 0}^n$ over consumers, Lemma~\ref{lem:one-prod} can be easily extended to optimizing the weighted regret of a set of consumers given by:
\[
\regret_{S}^\alpha ( \cvec, \w ) = \sum_{i=1}^n w_i \regret_{\tau_i}^\alpha ( \cvec )
\]
In fact, for any $S$ and $\w$, we have that $\regret_{S}^\alpha ( c, \w )$ is a convex function and
\begin{equation}\label{eq:one-prod-weighted}
\min_{c \in \R_{\ge 0}} \regret_{S}^\alpha (c, \w) = \min \left\{ \min_{1 \le i \le n} \regret_{S}^\alpha ( \tau_i, \w ), \min_{1 \le i \le n-1} \regret_{S}^\alpha ( c_i, \w ) \cdot \indinf{ \tau_{i} < c_i < \tau_{i+1}} \right\}
\end{equation}
where for every $1 \le i \le n-1$, $c_i = (r')^{-1} \left(\frac{ \alpha \sum_{j\le i} w_j }{ \sum_{j \ge i+1} w_j }\right)$ ($c_i = \infty$ if $(r')^{-1} \left(\frac{ \alpha \sum_{j\le i} w_j }{ \sum_{j \ge i+1} w_j }\right)$ does not exist) and
\[
\indinf{ \tau_{i} < c_i < \tau_{i+1}} \triangleq \begin{cases} 1 & \tau_{i} < c_i < \tau_{i+1} \\ \infty & \text{otherwise} \end{cases}
\]

\end{rmk}
We now provide the idea behind a dynamic programming approach for choosing $p$ products that minimize the weighted regret of a population $S = \{\tau_i\}_{i=1}^n$. The approach relies on the simple observation that there exists an optimal solution such that if the consumers in set $S(p')$ are assigned to the $p'$-th product $c_{p'}$, $c_{p'} \in \argmin_{c \in \mathbb{R}^+}\regret_{S(p')}(c)$ (for a single product $c$)\footnote{On the one hand, for all $p'$ and given $S(p')$, picking $c_{p'}$ in such a manner provides a lower bound on the achievable average regret. On the other hand, $c_{p'}$ yields $S(p')$ as a set of consumers assigned to $c_{p'}$ in an optimal solution. Indeed, if consumers in $S(p')$ strictly preferred picking a different product, this could only be because they would get strictly better regret from doing so: by picking a different product, they would then decrease the population regret below our lower bound, which is a contradiction.}.

Therefore, to find an optimal choice of products, it suffices to i) correctly guess which subset $S(p')$ of consumers are assigned to the $p'$-th product, then ii) optimize the choice of product for $S(p')$, which can be done using Lemma~\ref{lem:one-prod}. Noting that $S(p')$ is an interval for all $p'$, $S(p')$ is entirely characterized by $z$, the first agent assigned to $c_{p'}$, and $n'$, the last agent assigned to $c_{p'}$. In turn, as in Section~\ref{sec:dp}, our dynamic program can be characterized by a recursive relationship of the form
\begin{align}\label{eq:two-sided_DP}
T(n', p') =
  \min_{z \in \{1, \dots, n'\}} \left(
	T(z-1, p'-1) + \min_{c \in \mathbb{R}^+} \sum_{i = z}^{n'} w_i \regret_{\tau_i}^\alpha ( c )
  \right),
  \end{align}
  where $T(n',p')$ represents the minimum weighted regret that can be achieved by providing $p'$ products to consumers $1$ to $n'$. Our dynamic program will implement this recursive relationship.
  
 The results of Section~\ref{sec:exante_fair} immediately extends to this more general regret notion, given that dynamic program~\eqref{eq:two-sided_DP} can be used as an optimization oracle for the problem solved by the best-response player in Algorithm~\ref{alg: dynamics}.

\section{Optimizing for Population vs. Least Well-Off Group: An Example}\label{app:pop_vs_group_optim}

In this section, we show that optimizing for population regret may lead to arbitrarily bad maximum group regret, and optimizing for maximum group regret may lead to arbitrarily bad population regret. To do so, we consider the following example: there is a set $S$ of $n$ consumers, divided into two groups $G_1$ and $G_2$. We let $|G_1| = 1$ and $|G_2| = n-1$ and assume the single consumer in group $G_1$ has risk threshold $\tau_1$, and the $n-1$ consumers in group $G_2$ all have the same risk threshold $\tau_2 > \tau_1$. We let $r_1 < r_2$ be the returns corresponding to risk thresholds $\tau_1,\tau_2$. Let $p=1$, i.e. the designer can pick only one product; either $\cvec = \tau_1$ or $\cvec = \tau_2$. 

When picking $\cvec = \tau_1$, we have that the average group and population regrets are given by: 
\begin{align*}
\regret_{G_1}(\tau_1) = 0,~\regret_{G_2}(\tau_1) = r_2 - r_1,~\regret_{S}(\tau_1) = \frac{(n-1) (r_2 - r_1)}{n}.
\end{align*}
When picking $\cvec = \tau_2$ instead, we have 
\begin{align*}
\regret_{G_1}(\tau_2) = r_1,~\regret_{G_2}(\tau_2) = 0,~\regret_{S}(\tau_2) = \frac{r_1}{n}.
\end{align*}
Suppose $0 < r_2 - r_1 < r_1$ and $n - 1 > \frac{r_1}{r_2 - r_1}$. Then, the optimal product to optimize for maximum group regret is $\cvec^{grp} = \tau_1$, and the optimal product to optimize for population regret is $\cvec^{pop} = \tau_2$. Then, we have that 
\begin{enumerate}
\item The ratio of population regret using $\cvec^{grp}$ over that of $\cvec^{pop}$ is given by:
\[
\frac{\regret_{S}(\cvec^{grp})}{\regret_{S}(\cvec^{pop})} = \frac{(n-1)(r_2 - r_1)/n}{r_1/n} = \frac{(n-1) (r_2 - r_1)}{r_1}.
\]
This ratio can be made arbitrarily large by letting $n \rightarrow +\infty$, at $\frac{r_2 - r_1}{r_1}$ constant.
\item The ratio of maximum group regret using $\cvec^{pop}$ over that of $\cvec^{grp}$ is given by:
\[
\frac{\max \left(\regret_{G_1}(\cvec^{pop}),~\regret_{G_2}(\cvec^{pop}) \right)}{\max \left(\regret_{G_1}(\cvec^{grp}),~\regret_{G_2}(\cvec^{grp}) \right)}
= \frac{r_1}{r_2 - r_1}.
\]
This ratio can be made arbitrarily large by letting $r_2 - r_1 \rightarrow 0$ at $r_1$ constant.
\end{enumerate}

\section{Approximate Population Regret Minimization via Greedy Algorithm}\label{app:greedy}
Recall that given $S = \{\tau_i\}_{i=1}^n$ and set $\cvec \subseteq S$ of products, the population regret of $S$ is given by 
\begin{align*}
 \regret_S (\cvec) = \frac{1}{n} \sum_{i=1}^n \regret_{\tau_i} (\cvec) =  \frac{1}{n} \sum_{i=1}^n \left( r(\tau_i) - f_{\tau_i} (\cvec) \right) = \frac{1}{n} \sum_{i=1}^n  r(\tau_i) - f_S (\cvec)
\end{align*}
where for any $\tau$, $f_{\tau} (\cvec) \triangleq \max_{c_j \le \tau} r (c_j)$, and 
\[
f_S : 2^S \to \reals_{\ge 0}, \quad f_S (\cvec) \triangleq \frac{1}{n} \sum_{i=1}^n f_{\tau_i} (\cvec).
\]
First, note that when no product is offered, consumers pick the cash option $c_0$ and get return $r(c_0) = 0$:
\begin{fact}[Centering]\label{fact:centered}
For any $S$, $f_S (\emptyset)  = 0$.
\end{fact}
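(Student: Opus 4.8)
The plan is simply to unwind the definitions and invoke the standing convention, introduced in Section~\ref{sec:model}, that a risk-free product $c_0 \equiv 0$ with $r(c_0) = 0$ is always available; this is precisely what makes $f_\tau(\cvec) = \max_{c_j \le \tau} r(c_j)$ well-defined even for $\cvec = \emptyset$, since otherwise the maximum would be taken over an empty index set.

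First I would fix an arbitrary consumer with risk threshold $\tau \ge 0$. Because $\tau \ge 0 = c_0$, the cash option is feasible in the maximization defining $f_\tau(\emptyset)$, and since $\cvec = \emptyset$ contributes no further candidates, the maximum is attained solely at $c_0$, so $f_\tau(\emptyset) = r(c_0) = 0$. Averaging over the $n$ consumers in $S$ then gives $f_S(\emptyset) = \frac{1}{n}\sum_{i=1}^n f_{\tau_i}(\emptyset) = 0$, which is the claimed identity.

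There is no genuine obstacle here; the only point worth flagging is the implicit reliance on the cash option $c_0$, without which $f_\tau(\emptyset)$ would be ill-posed. Since the paper adopts $c_0 \equiv 0$ with $r(c_0)=0$ throughout, the argument is immediate.
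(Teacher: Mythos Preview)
Your proposal is correct and matches the paper's approach exactly: the paper does not even give a formal proof, merely noting in one sentence that ``when no product is offered, consumers pick the cash option $c_0$ and get return $r(c_0) = 0$.'' Your unwinding of the definitions is precisely this observation made explicit.
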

Second, $f_S$ is immediately monotone non-decreasing, as consumers deviate to an additional product only when they get higher return from doing so: 
\begin{fact}[Monotonicity]\label{fact:monotone}
For any $S$, if $\cvec \subseteq \dvec$, then $f_S (\cvec) \le f_S (\dvec)$.
\end{fact}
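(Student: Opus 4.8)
The plan is to deduce the averaged statement from a pointwise (per-consumer) monotonicity property of the map $\cvec \mapsto f_\tau(\cvec) = \max_{c_j \le \tau} r(c_j)$, and then average over the population.

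First I would fix an arbitrary collection of consumers $S = \{\tau_i\}_{i=1}^n$ and two product sets with $\cvec \subseteq \dvec$. Recall the modeling convention that the risk-free product $c_0 = 0$ with $r(c_0) = 0$ is always available; consequently, for every threshold $\tau \ge 0$ both of the sets $\{c \in \cvec \cup \{c_0\} : c \le \tau\}$ and $\{c \in \dvec \cup \{c_0\} : c \le \tau\}$ are nonempty, so $f_\tau(\cvec)$ and $f_\tau(\dvec)$ are well defined (this is exactly the point noted when $\regret_\tau$ was introduced).

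The key step is the observation that $\cvec \subseteq \dvec$ implies the inclusion of the corresponding feasible sets of below-threshold products, $\{c \in \cvec \cup \{c_0\} : c \le \tau\} \subseteq \{c \in \dvec \cup \{c_0\} : c \le \tau\}$, and the maximum of $r$ over a subset is at most its maximum over the superset. Hence $f_\tau(\cvec) \le f_\tau(\dvec)$ for every $\tau \ge 0$, and in particular for each $\tau_i \in S$. Averaging over $i$ then gives
\[
f_S(\cvec) = \frac{1}{n}\sum_{i=1}^n f_{\tau_i}(\cvec) \le \frac{1}{n}\sum_{i=1}^n f_{\tau_i}(\dvec) = f_S(\dvec),
\]
which is the claim.

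There is no real obstacle here: the statement is essentially a restatement of the fact that "assign each consumer to the best product below her tolerance" can only improve when more products are offered. The only point that needs a moment's care is that the relevant maxima are taken over nonempty sets, which is precisely why the model always includes the risk-free product $c_0$ — without it, $f_\tau(\emptyset)$ would be undefined rather than equal to $0$ (cf.\ Fact~\ref{fact:centered}).
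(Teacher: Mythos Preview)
Your proposal is correct and matches the paper's approach: the paper treats this fact as immediate, justifying it in one line by noting that consumers only switch to an added product when it yields higher return, which is exactly your pointwise monotonicity of $f_\tau$ followed by averaging. Your write-up simply spells out the details the paper leaves implicit.
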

Finally, $f_S$ is submodular: 
\begin{clm}[Submodularity]\label{clm:submod}
For any $S$, $f_S$ is submodular.
%Fix any set $S = \{\tau_i\}_{i=1}^n$ of consumers. Function $f_S : 2^S \to \reals_{\ge 0}$ given by $f_S (\cvec) = \frac{1}{n} \sum_{i=1}^n f_{\tau_i} (\cvec)$ is submodular.
\end{clm}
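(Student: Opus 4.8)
The plan is to reduce submodularity of $f_S$ to submodularity of a single consumer's function and then invoke the elementary fact that a pointwise maximum of nonnegative weights is submodular. Since $f_S = \frac1n\sum_{i=1}^n f_{\tau_i}$ and a nonnegative linear combination of submodular functions is submodular, it suffices to prove that for each fixed risk threshold $\tau \ge 0$ the set function $\cvec \mapsto f_\tau(\cvec) = \max_{c_j \le \tau} r(c_j)$ is submodular, where --- as stipulated in the model --- the maximum always ranges over the products in $\cvec$ together with the cash product $c_0 = 0$.

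First I would fix $\tau$ and, for any candidate product $c$, set $w_\tau(c) = r(c)\,\ind{c \le \tau}$. Because $r$ is non-decreasing with $r(c_0) = 0$, we have $r(c) \ge 0$ for all $c \ge 0$, so $w_\tau(c) \ge 0$, and one checks directly that $f_\tau(\cvec) = \max\bigl(0,\ \max_{c \in \cvec} w_\tau(c)\bigr)$; in particular $f_\tau(\emptyset) = 0$ (Fact~\ref{fact:centered}) and $f_\tau$ is monotone (for the same reason as Fact~\ref{fact:monotone}). The key computation is the closed form of the marginal gain: since $f_\tau(\cvec\cup\{c\}) = \max\bigl(f_\tau(\cvec),\,w_\tau(c)\bigr)$, we get
\[
f_\tau(\cvec \cup \{c\}) - f_\tau(\cvec) = \max\bigl(0,\ w_\tau(c) - f_\tau(\cvec)\bigr).
\]

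The final step is to read off diminishing returns from monotonicity: if $\cvec \subseteq \dvec$ then $f_\tau(\cvec) \le f_\tau(\dvec)$, hence $w_\tau(c) - f_\tau(\cvec) \ge w_\tau(c) - f_\tau(\dvec)$ and therefore
\[
f_\tau(\cvec \cup \{c\}) - f_\tau(\cvec) = \max\bigl(0,\ w_\tau(c) - f_\tau(\cvec)\bigr) \ge \max\bigl(0,\ w_\tau(c) - f_\tau(\dvec)\bigr) = f_\tau(\dvec \cup \{c\}) - f_\tau(\dvec),
\]
which is exactly the diminishing-returns characterization of submodularity of $f_\tau$; averaging over the $n$ consumers yields submodularity of $f_S$. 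There is no substantive obstacle here --- the only care needed is in choosing $w_\tau$ so that the two boundary behaviors are transparent: products $c > \tau$ contribute $w_\tau(c) = 0$ and hence never change $f_\tau$, while the always-available cash option $c_0$ anchors $f_\tau(\emptyset) = 0$; both are baked into the definition of $w_\tau$, so the argument above applies verbatim with no case analysis.
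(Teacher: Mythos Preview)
Your proof is correct and shares the paper's overall structure: reduce to showing each single-consumer function $f_\tau$ is submodular, then average. The difference is in how you establish submodularity of $f_\tau$. The paper works directly with the lattice inequality $f_\tau(\cvec\cup\dvec)+f_\tau(\cvec\cap\dvec)\le f_\tau(\cvec)+f_\tau(\dvec)$ and does a short case analysis on which of $\cvec$ or $\dvec$ contains the element realizing the maximum in $f_\tau(\cvec\cup\dvec)$, then invokes monotonicity. You instead compute the marginal gain in closed form, $f_\tau(\cvec\cup\{c\})-f_\tau(\cvec)=\max\bigl(0,\,w_\tau(c)-f_\tau(\cvec)\bigr)$, and read off diminishing returns directly from monotonicity of $f_\tau$ and of $t\mapsto\max(0,t)$. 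Your route has the mild advantage of avoiding any case split and of making transparent that $f_\tau$ is a ``max-of-weights'' function (a standard submodular primitive); the paper's route has the advantage of using only the definition without introducing the auxiliary weight $w_\tau$. Both are equally short and rest on the same monotonicity fact.
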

\begin{proof}
We first show that for every $i$, $f_{\tau_i} (\cdot)$ is submodular: for any $\cvec, \dvec \subseteq S$, we have that
\[
f_{\tau_i} (\cvec \cup \dvec) + f_{\tau_i} (\cvec \cap \dvec) \le f_{\tau_i} (\cvec) + f_{\tau_i} (\dvec).
\]
If $\cvec = \emptyset$ or $\dvec = \emptyset$, the claim trivially holds. So assume $\cvec, \dvec \neq \emptyset$. Let $c$ be such that $f_{\tau_i} (\cvec \cup \dvec) = r(c)$. If $c = c_0 =0$, the claim holds because all four terms above will be zero. If $c \in \cvec$, the claim holds because $f_{\tau_i} (\cvec \cup \dvec) = f_{\tau_i} (\cvec)$ and $f_{\tau_i} (\dvec) \ge f_{\tau_i} (\cvec \cap \dvec)$ by Fact~\ref{fact:monotone}. The same argument holds when $c \in \dvec$ by symmetry. The proof is complete by noting that any simple average of submodular functions (in general, any linear combination with non-negative coefficients) is submodular.
\end{proof}

\begin{rmk}
Claim~\ref{clm:submod} extends to any weighted set of consumers: fix any set $S = \{\tau_i\}_{i=1}^n$ of consumers and any nonnegative weight vector $\w \in \reals_{\ge 0}^n$. Then the function $f_S : 2^S \to \reals$ given by $f_S (\cvec) = \sum_{i=1}^n w_i f_{\tau_i} (\cvec)$ is submodular.
\end{rmk}

We therefore have that for any $S$ and any target number of products $p$,
\[
\min_{\cvec \in C_p (S) }  \regret_S (\cvec) = \frac{1}{n} \sum_{i=1}^n r(\tau_i) - \max_{\cvec \subseteq S, \, |\cvec| = p} f_S (\cvec)
\]
where by Facts~\ref{fact:centered},~\ref{fact:monotone} and Claim~\ref{clm:submod}, the maximization problem in right-hand side is: \emph{maximization of a nonnegative monotone submodular function with a cardinality constraint}. Using the \emph{Greedy Algorithm} (that runs in $O(np)$ time), we get $p$ products represented by $\cvec^{grd}$ such that
\[
R_S (\cvec^{grd}) \le \frac{1}{n} \sum_{i=1}^n r(\tau_i) - \left(1 - e^{-1} \right) \cdot \max_{\cvec \subseteq S, \, |\cvec| = p} f_S (\cvec)
\]

\section{An {\em ex post} Minmax Fair Strategy for Few Groups, Extended}\label{app:pure-minmax-extended}
 
 Given bound $B$ on the maximum group regret and a step size of $\alpha$, we let
\[
N^{(\alpha)} \triangleq
\left\{
i \alpha:~ i  \in \left\{0,\ldots,\left\lceil \frac{B}{\alpha} \right\rceil \right\}
\right\}
\]
be a net of discretized regret values in $[0,B]$ with discretization size $\alpha$. Given any regret $\regret$, we let $\ceilstep{\alpha}(\regret)$ be the regret obtained by rounding $\regret$ up to the closest higher regret value in $N^{(\alpha)}$. I.e., $\ceilstep{\alpha}(\regret)$ is uniquely defined so as to satisfy $\regret \leq \ceilstep{\alpha}(\regret) < \regret + \alpha$ and $\ceilstep{\alpha}(\regret) \in N^{(\alpha)}$. Our dynamic program implements the following recursive relationship:
\begin{align}\label{eq: discrete_DP_regret}
\begin{split}
&\cF^{(\alpha)}(n',p') \triangleq
\\&\left\{
\left(
\ceilstep{\alpha} \left(\regret_{G_k} +\sum_{i = z}^{n'} \frac{\mathbbm{1} \left\{i \in G_k \right\}}{|G_k|}\regret_{\tau_i}(\tau_z) \right)
\right)_{k=1}^g
\text{s.t.}~z \leq n',~\left(\regret_{G_k}\right)_{k=1}^g \in \cF^{(\alpha)}(z-1,p'-1)
\right\}
\end{split}
\end{align}
where for all $n' \leq n$,
\begin{align}
\cF^{(\alpha)}(n',0) \triangleq
\left\{
\left(
\sum_{i = 1}^{n'} \frac{ \mathbbm{1} \left\{i \in G_k \right\} }{|G_k|} r(\tau_i)
\right)_{k=1}^g
 \right\}.
\end{align}
Note that $\cF^{(\alpha)}(n',0)$ contains a single regret tuple, whose $k$-th coordinate is the weighted regret of agents $G_k \cap \{1,\ldots,n'\}$ when using weight $1/|G_k|$ and offering no product.

Intuitively, $\cF^{(\alpha)}(n',p')$ keeps track of a rounded up version of the feasible tuples of weighted group regrets when using weight $1/|G_k|$ in group $G_k$ and when offering $p'$ products and considering the regret of consumers $1$ to $n'$ only. The corresponding set of products used to construct the regret tuples in $\cF^{(\alpha)}(n',p')$ can be kept in a hash table whose keys are the regret tuples in $\cF^{(\alpha)}(n',p')$; we denote such a hash table by $\solspace^{(\alpha)}(n,p)$. While there can be several $p'$-tuples of products that lead to the same rounded regret tuple in $\cF^{(\alpha)}(n',p')$, the dynamic program only stores one of them in the corresponding entry in the hash table at each time step. The program terminates after computing $\cF^{(\alpha)}(n,p),~\solspace^{(\alpha)}(n,p)$, and outputting the product vector in $\solspace^{(\alpha)}(n,p)$ corresponding to the regret tuple with smallest regret for the worst-off group in $\cF^{(\alpha)}(n,p)$. The sets $\cF^{(\alpha)}(n,p),~\solspace^{(\alpha)}(n,p)$ satisfy the following guarantee:

\begin{lemma}\label{eq: few_groups_regret_guarantee}
Fix any $\alpha > 0$. Let $(\regret_{G_1},\ldots,\regret_{G_g})$ be any regret tuple that can be achieved using $p$ products. There exist consumer indices $(z_1,\ldots,z_p) \in \solspace^{(\alpha)}(n,p)$  such that the corresponding regret tuple $(\regret_{G_1}^{(\alpha)},\ldots,\regret_{G_g}^{(\alpha)}) \in \cF^{(\alpha)}(n,p)$ satisfies
\begin{align*}
\regret_{G_k} (\cvec) \leq \regret_{G_k}^{(\alpha)} \leq \regret_{G_k} + p \alpha,~\forall k \in [g],
\end{align*}
where $\cvec = (\tau_{z_1},\ldots,\tau_{z_p})$.
\end{lemma}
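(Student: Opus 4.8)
The plan is to establish the two inequalities in the conclusion by a pair of inductions on the number of products, both resting on the same structural identity. First, by the reduction discussed in Section~\ref{sec:model} we may assume without loss of generality that the target tuple $(\regret_{G_1},\dots,\regret_{G_g})$ is realized by a set $\cvec' = \{\tau_{z_1},\dots,\tau_{z_p}\}$ of exactly $p$ \emph{distinct} consumer thresholds with $z_1 < \dots < z_p$: snapping a product up to the nearest threshold does not decrease any consumer's return, and padding with products placed at unused thresholds only decreases every group's regret, so it suffices to handle a dominated (hence still admissible) target. Sorting $\tau_1 \le \dots \le \tau_n$ and breaking ties arbitrarily, the key observation is that if a product set has highest-risk product $\tau_z$, then every consumer $i \ge z$ is served by $\tau_z$ while every consumer $i < z$ is unaffected by its presence; hence for any $\cvec'' \subseteq \{\tau_1,\dots,\tau_{z-1}\}$,
\[
\sum_{i=1}^{n'} \frac{\ind{i \in G_k}}{|G_k|}\,\regret_{\tau_i}(\cvec'' \cup \{\tau_z\}) \;=\; \sum_{i=1}^{z-1}\frac{\ind{i \in G_k}}{|G_k|}\,\regret_{\tau_i}(\cvec'') \;+\; \sum_{i=z}^{n'}\frac{\ind{i \in G_k}}{|G_k|}\,\regret_{\tau_i}(\tau_z),
\]
which is precisely the unrounded update in recurrence~\eqref{eq: discrete_DP_regret}, with $\cvec''$ playing the role of the products stored for the subproblem $\cF^{(\alpha)}(z-1,p'-1)$.

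Then I would prove \emph{soundness}: by induction on $p'$, every tuple in $\cF^{(\alpha)}(n',p')$ whose associated product set in the hash table $\solspace^{(\alpha)}$ is $\cvec$ has $k$-th coordinate at least $\sum_{i=1}^{n'}\frac{\ind{i\in G_k}}{|G_k|}\regret_{\tau_i}(\cvec)$ for every $k$. The base case $p'=0$ holds with equality since $\regret_{\tau_i}(\emptyset) = r(\tau_i)$. For the step, such a tuple comes from some $z \le n'$ and a subproblem tuple with product set $\cvec''$; using $x \le \ceilstep{\alpha}(x)$, the inductive hypothesis, and the displayed identity with $\cvec = \cvec'' \cup \{\tau_z\}$ gives the bound. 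Specializing to $n' = n$ yields $\regret_{G_k}(\cvec) \le \regret^{(\alpha)}_{G_k}$ for the stored $\cvec$, which is the first inequality.

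Next I would prove \emph{completeness}: by induction on $p'$, for every set $\cvec'$ of at most $p'$ distinct thresholds drawn from the first $n'$ consumers, $\cF^{(\alpha)}(n',p')$ contains a tuple with $k$-th coordinate at most $\sum_{i=1}^{n'}\frac{\ind{i\in G_k}}{|G_k|}\regret_{\tau_i}(\cvec') + p'\alpha$. One runs the recurrence along the path that selects $z = z_{p'}$ at level $(n',p')$, then $z = z_{p'-1}$ at level $(z_{p'}-1,p'-1)$, and so on down to the base case $\cF^{(\alpha)}(z_1-1,0)$; this path is legal exactly because $z_1 < \dots < z_{p'}$, and by repeated use of the displayed identity its unrounded accumulation in coordinate $k$ equals $\sum_{i=1}^{n'}\frac{\ind{i\in G_k}}{|G_k|}\regret_{\tau_i}(\cvec')$. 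Each of the $p'$ applications of $\ceilstep{\alpha}$ raises the running value by less than $\alpha$, so the final stored coordinate stays below that accumulation plus $p'\alpha$. Applying this at $(n,p)$ to $\cvec'$ produces a tuple $(\regret^{(\alpha)}_{G_k})_k \in \cF^{(\alpha)}(n,p)$ with $\regret^{(\alpha)}_{G_k} \le \regret_{G_k} + p\alpha$; combining with soundness applied to its stored product set $\cvec = (\tau_{z_1},\dots,\tau_{z_p}) \in \solspace^{(\alpha)}(n,p)$ gives $\regret_{G_k}(\cvec) \le \regret^{(\alpha)}_{G_k} \le \regret_{G_k} + p\alpha$, as required.

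I expect the main obstacle to be careful bookkeeping rather than a conceptual difficulty: one must verify that the unrounded DP update genuinely coincides with the group-regret decomposition — this is exactly where the distinctness/tie-breaking of the $\tau_i$ is used and where one checks that the $(z-1,p'-1)$ subproblem is unaffected by the later product $\tau_z$ — and one must confirm that the rounding error telescopes to exactly $p\alpha$, one $\alpha$ per product level, rather than compounding with the magnitudes of the regrets, which follows from $x \le \ceilstep{\alpha}(x) < x + \alpha$. The preliminary reduction to a target realized by $p$ distinct thresholds also deserves a line, since the recurrence only ever selects strictly decreasing consumer indices.
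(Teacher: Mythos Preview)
Your proposal is correct and follows essentially the same route as the paper: the paper likewise introduces the unrounded recursion (your ``structural identity''), proves by induction on $p'$ that every achievable tuple has a $p\alpha$-close representative in $\cF^{(\alpha)}(n,p)$ (your ``completeness''), and then notes via a second short induction that the stored product set always has true group regrets below the rounded entry (your ``soundness''). The only cosmetic difference is that the paper packages the unrounded side as an auxiliary set $\cF(n',p')$ of exact achievable tuples and compares $\cF^{(\alpha)}$ to $\cF$, whereas you trace a specific realizing product set $\cvec'$ through the recurrence; your explicit reduction to $p$ distinct thresholds is a detail the paper leaves implicit.
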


We provide the proof of Lemma~\ref{eq: few_groups_regret_guarantee} in Appendix~\ref{app: pure_minmax_few_groups}. In particular, Lemma~\ref{eq: few_groups_regret_guarantee} implies that the dynamic program run with discretization parameter $\alpha$ approximately minimizes the maximum regret across groups (i.e., the optimal value of Program~\eqref{eq:fair-opt-det}) within an additive approximation factor of $p \alpha$. Letting $\alpha = \frac{\varepsilon}{p}$ yields an $\varepsilon$-approximation to the minmax regret.

The running time of our dynamic program is summarized below:
\begin{thm}
Fix any $\alpha > 0$. There exists a dynamic programming algorithm that, given a collection of consumers $S = \{ \tau_i \}_{i=1}^n$, groups $\{G_k\}_{k=1}^g$, and a target number of products $p$, computes $\cF^{(\alpha)}(n,p)$ and $\solspace^{(\alpha)}(n,p)$ in time $O\left(n^2 p \left(\left\lceil \frac{B}{\alpha} \right\rceil + 1 \right)^g\right)$.
\end{thm}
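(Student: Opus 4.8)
The plan is to bound the size of each set $\cF^{(\alpha)}(n',p')$ and then charge the running time to the transitions of the recurrence~\eqref{eq: discrete_DP_regret}. Since every coordinate of a tuple in $\cF^{(\alpha)}(n',p')$ lies in the net $N^{(\alpha)}$, which has exactly $\lceil B/\alpha\rceil + 1$ elements, we immediately get $|\cF^{(\alpha)}(n',p')| \le (\lceil B/\alpha\rceil + 1)^g$ for every $n' \le n$ and $p' \le p$. This bound is precisely what the rounding operator $\ceilstep{\alpha}(\cdot)$ buys us, and it is crucial that the dynamic program stores the sets $\cF^{(\alpha)}(n',p')$ together with the companion hash tables $\solspace^{(\alpha)}(n',p')$ in \emph{deduplicated} form: whenever two candidate product sets produce the same rounded regret tuple, only one representative is retained, using the hash table to test membership before insertion. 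Without this deduplication the stored sets could grow as large as $\binom{n}{p'}$.

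First I would set up an initialization phase: for each group $G_k$, precompute the prefix sums $\sum_{i \le m} \mathbbm{1}\{i \in G_k\}\, r(\tau_i)$ and $\sum_{i \le m} \mathbbm{1}\{i \in G_k\}$ for all $m \le n$, which takes $O(ng)$ time total, together with the base cases $\cF^{(\alpha)}(n',0)$, read directly off these prefix sums. With these in hand, for any split index $z$ and any $n'$ the correction term $\sum_{i=z}^{n'} \tfrac{\mathbbm{1}\{i\in G_k\}}{|G_k|}\bigl(r(\tau_i) - r(\tau_z)\bigr)$ can be evaluated in $O(1)$ time per group, i.e. $O(g)$ time to produce an entire candidate tuple (and to round it) from a given tuple of $\cF^{(\alpha)}(z-1,p'-1)$.

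Next I would fill the table in order of increasing $p'$, and for fixed $p'$ in increasing order of $n'$. To build $\cF^{(\alpha)}(n',p')$ via~\eqref{eq: discrete_DP_regret}, I iterate over the at most $n$ choices of $z$ and, for each, over the at most $(\lceil B/\alpha\rceil+1)^g$ tuples of $\cF^{(\alpha)}(z-1,p'-1)$; each such pair costs $O(g)$ to form and round the new tuple, hash it, and insert it into $\cF^{(\alpha)}(n',p')$ and $\solspace^{(\alpha)}(n',p')$ if it is new. Summing over the $O(np)$ subproblems $(n',p')$ yields a total of $O\!\bigl(n^2 p\, g\, (\lceil B/\alpha\rceil+1)^g\bigr)$ work, which in the regime of interest where $g$ is a small constant is $O\!\bigl(n^2 p (\lceil B/\alpha\rceil+1)^g\bigr)$; the final extraction of the tuple in $\cF^{(\alpha)}(n,p)$ minimizing its largest coordinate, together with its stored product vector from $\solspace^{(\alpha)}(n,p)$, costs only $O\!\bigl(g(\lceil B/\alpha\rceil+1)^g\bigr)$ and is dominated.

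The main obstacle — such as it is — is not the arithmetic but justifying the two structural facts that make the accounting valid: (i) that $|\cF^{(\alpha)}(n',p')|$ really is capped at $(\lceil B/\alpha\rceil+1)^g$, which relies on keeping a single representative per rounded tuple and on a hash table supporting $O(g)$-time membership and insertion for $g$-tuples; and (ii) that the $\Theta(n)$-length correction sums appearing in~\eqref{eq: discrete_DP_regret} collapse to $O(1)$ per group using the precomputed prefix sums, so that they do not contribute an extra factor of $n$ to each transition. Once these are in place, the claimed running time follows as the straightforward product of the number of subproblems, the number of split points $z$, and the number of stored tuples.
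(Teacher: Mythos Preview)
Your proposal is correct and follows essentially the same argument as the paper: bound $|\cF^{(\alpha)}(n',p')|\le(\lceil B/\alpha\rceil+1)^g$ via the net, precompute the per-group prefix sums so each transition sum is $O(1)$ per group, and then multiply the $O(np)$ subproblems by the $O(n)$ split points and the stored-tuple bound. You are slightly more explicit than the paper about the hash-table deduplication and the $O(g)$ cost per tuple (which the paper silently absorbs into the constant), but the structure is the same.
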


When the desired accuracy is $\varepsilon$, the dynamic program uses $\alpha = \frac{\varepsilon}{p}$ and has running time $O\left(n^2 p \left(\left\lceil \frac{Bp}{\varepsilon} \right\rceil + 1 \right)^g\right)$. 

\begin{proof}
Note that each set $\cF^{(\alpha)}(n',p')$ and $\solspace^{(\alpha)}(n',p')$ built by the dynamic program has size at most $ \left(\left\lceil \frac{B}{\alpha} \right\rceil + 1 \right)^g$, as $\cF^{(\alpha)}(n',p')$ only contains regret values in $N^{(\alpha)}$ by construction and $\solspace^{(\alpha)}(n',p')$ contains one product tuple per regret tuple in $\cF^{(\alpha)}(n',p')$.

Each sum used in the dynamic program can be computed in time $O(1)$, given that $\sum_{i = 1}^{n'} \frac{\mathbbm{1} \left\{i \in G_k \right\}}{|G_k|} r(\tau_i)$ and $\sum_{i=1}^{n'} \frac{\mathbbm{1} \left\{i \in G_k \right\}}{|G_k|}$ have been pre-computed for all $n' \in [n],k \in [g]$ and stored in a hash table. The pre-computation and storage of these partial sums can be done in $O(gn)$ time.

Now, each time step of the dynamic program corresponds to the $p'$-th product with $p' \leq p$. In each time step $p'$, we construct $O(n)$ sets $\cF^{(\alpha)}\left(n',p'\right)$, one for each value of $n'$. For each value of $n'$, the dynamic program searches over i) $z \in [n]$ and ii) at most $\left(\left\lceil \frac{B}{\alpha} \right\rceil + 1 \right)^g$ tuples of regret in $\cF^{(\alpha)}(z,p'-1)$. Therefore, building $\cF^{(\alpha)}(n,p)$ can be done in time $O\left(n^2 p \left(\left\lceil \frac{B}{\alpha} \right\rceil + 1 \right)^g \right)$

Finally, finding the tuple with the smallest maximum regret in $\cF^{(\alpha)}(n,p)$ requires searching over at most $\left(\left\lceil \frac{R}{\alpha} \right\rceil + 1 \right)^g$ product tuples in $\solspace^{(\alpha)}(n,p)$.

Therefore, running the dynamic program requires time $O\left(n^2 p \left(\left\lceil \frac{B}{\alpha} \right\rceil + 1 \right)^g\right)$. 
\end{proof}

\subsection*{Proof of Lemma~\ref{eq: few_groups_regret_guarantee}}\label{app: pure_minmax_few_groups}

We let $\cF(n',p')$ be the set of weighted regret tuples that are achievable using $p'$ thresholds when only agents $1$ to $n'$ are considered, and the regret of agents in group $G_k$ is weighted by $1/|G_k|$ -- importantly, this reweighting is independent of the choice of $n'$ and computes the average regret of a group as if all of its consumers were present. The set $\cF(n,p)$ contains all feasible tuples of average regret given groups $\{G_k\}_{k=1}^g$. Importantly, $\cF(n,p)$ is different from $\cF^{(\alpha)}(n,p)$: $\cF(n,p)$ contains all achievable tuples of regret, even those that do not belong to the net $N^{(\alpha)}$; in turn $\cF(n,p)$ may contain up to $\binom{n}{p}$ regret tuples. In comparison, $\cF^{(\alpha)}(n,p)$ is a smaller set of size at most $ \left(\left\lceil \frac{B}{\alpha} \right\rceil + 1 \right)^g$ that only contains regret tuples with values in the net $N^{(\alpha)}$. $\cF^{(\alpha)}(n,p)$ is built with the intent of approximating the true set of achievable regret tuples, $\cF(n,p)$.

The proof idea is to show that $\cF^{(\alpha)}(n,p)$ is a good approximation of $\cF(n,p)$, as intended. More precisely, we want to show that for every feasible regret tuple $\left(\regret_{G_k}\right)_{k=1}^g$ in $\cF(n,p)$, the discretized set $\cF^{(\alpha)}(n,p)$ contains a regret tuple $\left(\regret_{G_k}^{(\alpha)}\right)_{k=1}^g$ that is at most $p \alpha$ away from $\left(\regret_{G_k}\right)_{k=1}^g$; further, the corresponding product vector $\cvec^{(\alpha)} \in \solspace^{(\alpha)}(n,p)$ has true, \emph{unrounded} regret also within $p \alpha$ of $\left(\regret_{G_k}\right)_{k=1}^g$.

We start by showing that $\cF(n,p)$ obeys the following recursive relationship:
\begin{clm}\label{clm: recursion}
\begin{align}\label{eq: all_feasible_regrets}
\begin{split}
&\cF(n',p') =
\\&\left\{
\left(
\regret_{G_k} +\sum_{i = z}^{n'} \frac{\mathbbm{1} \left\{i \in G_k \right\}}{|G_k|}\regret_{\tau_i}(\tau_z)
\right)_{k=1}^g
\text{s.t.}~z \leq n',~\left(\regret_{G_k}\right)_{k=1}^g \in \cF(z-1,p'-1) 
\right\}
\end{split}
\end{align}
where 
\begin{align}
\cF(n',0) \triangleq \cF^{(\alpha)}(n',0) = 
\left\{  
\left(
\sum_{i = 1}^{n'} \frac{ \mathbbm{1} \left\{i \in G_k \right\} }{|G_k|} r(\tau_i) 
\right)_{k=1}^g
 \right\}.
\end{align} 
\end{clm}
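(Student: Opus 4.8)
The plan is to establish the identity in~\eqref{eq: all_feasible_regrets} by the same ``structure of a feasible solution'' argument that underlies the scalar dynamic program of Lemma~\ref{lem:dpRelations}, but carried out at the level of the whole vector of reweighted group regrets and without any rounding. Recall that, by definition, a tuple in $\cF(n',p')$ is realized by some set $\cvec$ of exactly $p'$ risk thresholds, and that (as argued in Section~\ref{sec:model}) any such $\cvec$ may be taken inside $\{\tau_1,\dots,\tau_{n'}\}$, since a product of risk exceeding $\tau_{n'}$ is irrelevant to consumers $1,\dots,n'$. The base case $p'=0$ is immediate: with no product other than the risk-free $c_0=0$, every consumer $i$ receives return $r(c_0)=0$ and hence regret $r(\tau_i)$, so the only reweighted group-regret tuple achievable on consumers $1,\dots,n'$ is $\left(\sum_{i=1}^{n'}\frac{\ind{i\in G_k}}{|G_k|}r(\tau_i)\right)_{k=1}^g$, matching the stated $\cF(n',0)$.

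For $p'\ge 1$ I would prove the two inclusions separately. For ``$\supseteq$'': fix an index $z$ with $p'\le z\le n'$ (for $z<p'$ the set $\cF(z-1,p'-1)$ is empty, so such $z$ contribute nothing) and a tuple $(\regret_{G_k})_k\in\cF(z-1,p'-1)$ witnessed by a set $\cvec'\subseteq\{\tau_1,\dots,\tau_{z-1}\}$ with $|\cvec'|=p'-1$. Put $\cvec=\cvec'\cup\{\tau_z\}$, so $|\cvec|=p'$ and $\tau_z$ is the highest-risk product of $\cvec$. Then each consumer $i$ with $z\le i\le n'$ has threshold $\tau_i\ge\tau_z$ and is therefore served by $\tau_z$, incurring regret $\regret_{\tau_i}(\tau_z)=r(\tau_i)-r(\tau_z)$; each consumer $i\le z-1$ has $\tau_i<\tau_z$ so cannot use $\tau_z$ and incurs exactly $\regret_{\tau_i}(\cvec')$. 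Summing the reweighted per-consumer regrets over $i\le n'$ splits the regret of group $G_k$ under $\cvec$ into $\regret_{G_k}+\sum_{i=z}^{n'}\frac{\ind{i\in G_k}}{|G_k|}\regret_{\tau_i}(\tau_z)$, which is precisely the tuple appearing on the right-hand side of~\eqref{eq: all_feasible_regrets}; hence that tuple lies in $\cF(n',p')$. For ``$\subseteq$'': given any tuple of $\cF(n',p')$, realized by some $\cvec\subseteq\{\tau_1,\dots,\tau_{n'}\}$ with $|\cvec|=p'$, let $\tau_z$ be the highest-risk product in $\cvec$ (so $z\le n'$, and $z\ge p'$ since the other $p'-1$ products lie strictly below $\tau_z$) and set $\cvec'=\cvec\setminus\{\tau_z\}\subseteq\{\tau_1,\dots,\tau_{z-1}\}$. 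The identical consumer-by-consumer accounting shows the tuple equals $\left(\regret_{G_k}+\sum_{i=z}^{n'}\frac{\ind{i\in G_k}}{|G_k|}\regret_{\tau_i}(\tau_z)\right)_k$, where $(\regret_{G_k})_k$ is the reweighted group-regret tuple of $\cvec'$ on consumers $1,\dots,z-1$ and therefore belongs to $\cF(z-1,p'-1)$. The two inclusions give equality.

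The one delicate point -- and the step I expect to be the main obstacle -- is ties among the risk thresholds: if $\tau_{z-1}=\tau_z$, then consumer $z-1$ can in fact use the product $\tau_z$, which breaks the clean statement ``consumers with index $<z$ are unaffected by adding $\tau_z$'' and also makes the size bookkeeping $|\cvec'\cup\{\tau_z\}|=p'$ questionable. I would dispose of this exactly as the scalar dynamic program does: either assume without loss of generality that $\tau_1<\cdots<\tau_n$ (tied thresholds can be perturbed by an arbitrarily small amount, changing every regret, and hence every finite set $\cF$, by an arbitrarily small amount), or, retaining ties, always take the split index $z$ to be the smallest index whose threshold equals the risk of the highest product of $\cvec$, so that every consumer of index $<z$ has threshold strictly below that product. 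With either convention the accounting above goes through verbatim, completing the proof of the claim.
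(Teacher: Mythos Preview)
Your proposal is correct and follows essentially the same approach as the paper: split according to the index $z$ of the highest-risk product, observe that consumers $z,\dots,n'$ are served by $\tau_z$ while consumers $1,\dots,z-1$ face a residual $(p'-1)$-product problem, and add the two contributions groupwise. The paper's proof is terser (it treats both inclusions at once via an ``if and only if''), whereas you spell out $\supseteq$ and $\subseteq$ separately and add a discussion of ties among thresholds that the paper leaves implicit; neither of these differences is a genuinely different route.
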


\begin{proof}
When $p' = 0$, the result is immediate, noting that agents $1$ to $n'$ are assigned to the cash option $c_0$ with $0$ return and incur regret $r(\tau_i)$ each. The total regret in group $G_k$, considering agents $1$ to $n'$ and reweighting regret by $1/|G_k|$, is given by 
\[
\sum_{i = 1}^{n'} \frac{ \mathbbm{1} \left\{i \in G_k \right\} }{|G_k|} r(\tau_i).
\]

Now, take $p' > 0$. Fix the highest offered product to be $\tau_z$, corresponding to consumer $z$. First, agents $z,\ldots, n'$ are assigned to product $\tau_z$ corresponding to consumer $z$; the weighted regret incurred by these agents, limited to those in group $G_k$, is exactly 
\[
\sum_{i = z}^{n'}  \frac{\mathbbm{1} \left[i \in G_k\right]}{|G_k|} \left(r(\tau_i) - r(\tau_z)\right) = \sum_{i = z}^{n'}  \frac{\mathbbm{1} \left[i \in G_k\right]}{|G_k|} \regret_{\tau_i}(\tau_z) .
\]
The remaining agents are $1$ to $z-1$ and have $p'-1$ products available to them; hence, a regret tuple $(\regret_{G_1},\ldots,\regret_{G_g})$ can be feasibly incurred by these agents if and only if $(\regret_{G_1},\ldots,\regret_{G_g}) \in \cF(z-1,p'-1)$, by definition of $\cF(z-1,p'-1)$. To conclude the proof, it is enough to note that the total regret incurred by agents in group $G_k$ is the sum of the regrets of agents $\{1, \ldots, z-1\} \cap G_k$ and the regret of agents $\{z,\ldots,n'\} \cap G_k$.
\end{proof}

We now show that for any $\alpha > 0$, $\cF^{(\alpha)}(n,p)$ provides a $p\alpha$-additive approximation to the true set of possible regret tuples $\cF(n,p)$:
\begin{lemma}\label{lem:approx_cF}
For all $p \in \mathcal{N}$, for all $z \in [n]$, and for any regret tuple $(\regret_{G_1},\ldots,\regret_{G_g}) \in \cF(n,p)$, there exists a regret tuple $(\regret^{(\alpha)}_{G_1},\ldots,\regret^{(\alpha)}_{G_g}) \in \cF^{(\alpha)}(n,p)$ such that $\regret^{(\alpha)}_{G_k} \leq \regret_{G_k} + p \alpha$ for all $k\in [g]$.
\end{lemma}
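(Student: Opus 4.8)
The plan is to prove the statement by induction on the number of products $p$, strengthening it so that it holds simultaneously for every prefix of consumers: for every $n' \le n$ and every achievable regret tuple $(\regret_{G_1},\dots,\regret_{G_g}) \in \cF(n',p)$ there is a rounded tuple $(\regret^{(\alpha)}_{G_1},\dots,\regret^{(\alpha)}_{G_g}) \in \cF^{(\alpha)}(n',p)$ with $\regret^{(\alpha)}_{G_k} \le \regret_{G_k} + p\alpha$ for all $k$. The engine is the pair of parallel recursions: Claim~\ref{clm: recursion} for $\cF$ and Equation~\eqref{eq: discrete_DP_regret} for $\cF^{(\alpha)}$, which have the same combinatorial shape except that $\cF^{(\alpha)}$ applies $\ceilstep{\alpha}$ at every level. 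The two facts about $\ceilstep{\alpha}$ that I will use are that it is nondecreasing and that $\ceilstep{\alpha}(x) < x + \alpha$, both immediate from its definition.

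For the base case $p = 0$ we have $\cF(n',0) = \cF^{(\alpha)}(n',0)$ for every $n'$ (both are the same singleton), so the claimed tuple is the tuple itself and $\regret^{(\alpha)}_{G_k} = \regret_{G_k} \le \regret_{G_k} + 0$. For the inductive step, fix $p \ge 1$, $n' \le n$, and a tuple $(\regret_{G_k})_k \in \cF(n',p)$. By Claim~\ref{clm: recursion} there is an index $z \le n'$ and a tuple $(\regret'_{G_k})_k \in \cF(z-1,p-1)$ with
\[
\regret_{G_k} = \regret'_{G_k} + \sum_{i=z}^{n'} \frac{\mathbbm{1}\{i \in G_k\}}{|G_k|}\,\regret_{\tau_i}(\tau_z) \qquad \text{for all } k .
\]
Applying the inductive hypothesis to the sub-instance on the first $z-1$ consumers gives $(\regret'^{(\alpha)}_{G_k})_k \in \cF^{(\alpha)}(z-1,p-1)$ with $\regret'^{(\alpha)}_{G_k} \le \regret'_{G_k} + (p-1)\alpha$ for all $k$.

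Now I reuse the very same $z$ in the discretized recursion: by Equation~\eqref{eq: discrete_DP_regret}, the tuple whose $k$-th coordinate is
\[
\regret^{(\alpha)}_{G_k} \triangleq \ceilstep{\alpha}\!\left(\regret'^{(\alpha)}_{G_k} + \sum_{i=z}^{n'} \frac{\mathbbm{1}\{i \in G_k\}}{|G_k|}\,\regret_{\tau_i}(\tau_z)\right)
\]
belongs to $\cF^{(\alpha)}(n',p)$. By monotonicity of $\ceilstep{\alpha}$ we may replace $\regret'^{(\alpha)}_{G_k}$ inside the $\ceilstep{\alpha}$ by the larger quantity $\regret'_{G_k} + (p-1)\alpha$; then $\ceilstep{\alpha}(x) < x + \alpha$ together with the displayed identity for $\regret_{G_k}$ yields $\regret^{(\alpha)}_{G_k} < \regret'_{G_k} + (p-1)\alpha + \sum_{i=z}^{n'} \frac{\mathbbm{1}\{i\in G_k\}}{|G_k|}\regret_{\tau_i}(\tau_z) + \alpha = \regret_{G_k} + p\alpha$, closing the induction. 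Taking $n' = n$ gives the statement.

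The step requiring the most care is the bookkeeping that makes the induction legitimate rather than any single inequality: the inductive hypothesis must be quantified over all prefixes $n' \le n$, not just $n' = n$, so that it can be invoked on the first $z-1$ consumers; and one must check that ``choosing the same $z$'' in Equation~\eqref{eq: discrete_DP_regret} is actually permitted — i.e. that $z \le n'$ and $(\regret'^{(\alpha)}_{G_k})_k \in \cF^{(\alpha)}(z-1,p-1)$ are exactly the two preconditions that recursion imposes. Everything else — the accumulation of one $\alpha$ of rounding error per product — follows mechanically from monotonicity of $\ceilstep{\alpha}$ and $\ceilstep{\alpha}(x) < x + \alpha$.
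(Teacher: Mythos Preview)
Your proof is correct and follows essentially the same approach as the paper: induction on $p$ with the hypothesis strengthened to all prefixes $n' \le n$, reusing the same splitting index $z$ in the discretized recursion, and accumulating one $\alpha$ of rounding error per level via $\ceilstep{\alpha}(x) < x + \alpha$. The only cosmetic difference is that you invoke monotonicity of $\ceilstep{\alpha}$ explicitly before applying the additive bound, whereas the paper first bounds the argument of $\ceilstep{\alpha}$ and then applies $\ceilstep{\alpha}(x) \le x + \alpha$ directly; the two orderings are interchangeable.
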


\begin{proof}
The proof follows by induction on $p$. At step $p' \leq p$, the induction hypothesis states that for all $n'$, for any regret tuple $(\regret_{G_1},\ldots,\regret_{G_g}) \in \cF(n',p')$, there exists a regret tuple $(R^{(\alpha)}_{G_1},\ldots,R^{(\alpha)}_{G_g}) \in \cF^{(\alpha)}(n',p')$ such that $R^{(\alpha)}_{G_k} \leq \regret_{G_k} + p' \alpha$ for all $k \in [g]$.

First, when $p' = 0$, the induction hypothesis holds immediately: by definition, $\cF^{(\alpha)}(n',0)= \cF(n',0)$. Now, suppose the induction hypothesis holds for $p'-1$. Pick any regret tuple $(\regret_{G_1}(n',p'),\ldots,\regret_{G_g}(n',p')) \in \cF(n',p')$; we will show that the induction hypothesis holds for this tuple. First, note that there exists $z$ and $(\regret_{G_1}(z-1,p'-1),\ldots,\regret_{G_g}(z-1,p'-1)) \in \cF(z-1,p'-1)$ such that 
\[
\regret_{G_k}(n',p')= \regret_{G_k}(z-1,p'-1) + \sum_{i = z}^{n'}  \frac{\mathbbm{1} \left[i \in G_k\right]}{|G_k|} \regret_{\tau_i}(\tau_z) ~\forall k \in [g]
\]
by definition of $\cF(n',p')$. Further, by induction hypothesis, there exists a $g$-tuple of rounded regret
\\$(\regret_{G_1}^{(\alpha)}(z-1,p'-1),\ldots,\regret_{G_g}^{(\alpha)}(z-1,p'-1))$ in $\cF^{(\alpha)}(z-1,p'-1)$ such that
\begin{align*}
\regret_{G_k}^{(\alpha)}(z-1,p'-1) \leq \regret_{G_k}(z-1,p'-1) + (p'-1) \alpha~\forall k \in [g].
\end{align*}
Combining the above two equations, we get that 
\begin{align*}
&\regret_{G_k}^{(\alpha)}(z-1,p'-1) + \sum_{i = z}^{n'}  \frac{\mathbbm{1} \left[i \in G_k\right]}{|G_k|} \regret_{\tau_i}(\tau_z)
\\&\leq \regret_{G_k}(z-1,p'-1) + \sum_{i = z}^{n'}  \frac{\mathbbm{1} \left[i \in G_k\right]}{|G_k|} \regret_{\tau_i}(\tau_z) + (p'-1) \alpha
\\&= \regret_{G_k}(n',p') +(p'-1) \alpha~\forall k \in [g].
\end{align*}
Now, let $\regret_{G_k}^{(\alpha)}(n',p') = \ceilstep{\alpha}\left(\regret_{G_k}^{(\alpha)}(z-1,p'-1) + \sum_{i = z}^{n'}  \frac{\mathbbm{1} \left[i \in G_k\right]}{|G_k|} \regret_{\tau_i}(\tau_z) \right)$. First, $(\regret_{G_1}^{(\alpha)}(n',p'),\ldots,\regret_{G_g}^{(\alpha)}(n',p'))$ is in $\cF^{(\alpha)}(n',p')$ by definition. Second, 
\begin{align*}
\regret_{G_k}^{(\alpha)}(n',p') 
&= \ceilstep{\alpha}\left(\regret_{G_k}^{(\alpha)}(z-1,p'-1) + \sum_{i = z}^{n'}  \frac{\mathbbm{1} \left[i \in G_k\right]}{|G_k|}\regret_{\tau_i}(\tau_z) \right)
\\&\leq \regret_{G_k}^{(\alpha)}(z-1,p'-1) + \sum_{i = z}^{n'}  \frac{\mathbbm{1} \left[i \in G_k\right]}{|G_k|} \regret_{\tau_i}(\tau_z) + \alpha
\\&\leq \regret_{G_k}(n',p') + p' \alpha.
\end{align*}
This concludes the induction.
\end{proof}

To conclude the proof, let $(\regret_{G_1},\ldots,\regret_{G_g})$ be a tuple of regret that can be achieved using $p$ products. The tuple belongs to $\cF(n,p)$, by definition of $\cF(n,p)$. Therefore, by Lemma~\ref{lem:approx_cF}, there exists a regret tuple $(\regret_{G_1}^{(\alpha)},\ldots,\regret_{G_g}^{(\alpha)}) \in \cF^{(\alpha)}$ such that 
\[
\regret_{G_k}^{(\alpha)} \leq \regret_{G_k} + p \alpha~\forall k \in [g].
\]
Let $\cvec \triangleq \{c_1,\ldots,c_p\} \in \solspace^{(\alpha)}(n,p)$ be the product vector that was used to construct regret tuple $\left(\regret_{G_k}^{(\alpha)}\right)_{k=1}^g$. Since at each step $p'$, the dynamic program rounds regret tuples to higher values, a simple induction shows that
\[
\regret_{G_k} (\cvec) \leq \regret_{G_k}^{(\alpha)}~\forall k \in [g].
\]
Combining the two above equations, we get the result:
\[
\regret_{G_k} (\cvec) \leq \regret_{G_k}^{(\alpha)} \leq \regret_{G_k} + p \alpha~\forall k \in [g].
\]

\section{Additional Experiment Details}\label{app:exp}
All experiments were run on a consumer laptop without GPU (13-inch  Macbook Pro 2016 with 2 GHz Intel Core i5 and 8 GB of RAM), using Python 3.6.5 (and in particular, NumPy 1.17.1 and Pandas 1.0.3). Experiment 1 took about 16.7 minutes in total. Experiment 2 took about 3.8 hours in total. For reproducibility, we began each experiment with a fixed random seed (10). 

For the No-Regret experiments, we have two parameters to choose: the number of steps and the step size\footnote{We implement the Multiplicative Weight Update of Algorithm~\ref{alg: dynamics} in exponential form; i.e., we write $\beta = \exp(-\eta)$. The code takes the regret values $R_{G_k}(\cvec)$ as losses, takes $\eta/B$ as the step size, and updates the $k$-th weight in each round $t$ by a multiplicative factor of $\exp(-\eta R_{G_k}(\cvec(t))/B)$, as per Algorithm~\ref{alg: dynamics}. This allows us to easily translate the weights in $\log$ space so as to avoid numerical overflow issues.}. Note that theory suggests that step size should be a function of the number of steps desired (or vice versa), but common practice among related algorithms is to try many step sizes and run until convergence.  We adopt this approach, guided by theory. To do so, we calculate a lower bound on the theoretical instance-specific step size using the sum of rewards as a loose upper bound $B$ on the maximum possible group regret. We then consider applying the following multipliers to this lower bound on the step size: we take (1,10,100,1000,10000) as a small set of possible multipliers and examine convergence for each of them. In all experiments, we pick the step size multiplier after empirically observing that it both i) provides a good approximation to the optimal {\em ex ante\/} regret and ii) converges in few time steps; we note that our choices of multipliers enjoy good performance guarantees in practice, as evidenced by Figure~\ref{fig:perf}.

\section{Omitted Proofs}
\subsection{Proof of Lemma~\ref{lem:dpRelations}}\label{app:dynamic-program}

  The first property is immediate, because when $p' = 0$, we do not choose any
  products, and there is only one valid solution that assigns all consumers to
  the zero-risk cash product. For this solution, the weighted regret is simply
  the sum of the weighted returns for each consumer's bespoke portfolio.

  We now turn to proving the second property. For any consumer index $z \in
  [n']$, define $T(n', p', z)$ to be the optimal weighted regret for the first
  $n'$ consumers using $p'$ products subject to the constraint that the highest
  risk product has risk threshold set to $\tau_z$. That is,
  \[
    T(n', p', z)
    = \min_{\substack{
      \mathbf{c} = (c_1, \dots, c_{p'}) \subset S[n'] \\
      c_1, \dots, c_{p'} \leq \tau_z \\
      c_{p'} = \tau_z}
    }
    \regret_{S[n']}(\cvec,\mathbf{w}[n']).
  \]
  The products achieving weighted regret $T(n', p', z)$ must choose $c_{p'} =
  \tau_z$ and choose $c_1, \dots, c_{p'-1}$ to be optimal products for the
  consumers indexed $1, \dots, z-1$, who are not served by the product $c_{p'}$
  because it is too high risk. On the other hand, the weighted regret of
  consumers $z, \dots, n'$ when assigned to a product with risk limit $\tau_z$
  is given by $\sum_{i=z}^{n'} w_i \cdot \bigl( r(\tau_i) - r(\tau_z) \bigr)$.
  Together, this implies that
  \[
    T(n', p', z) = T(z-1, p'-1) + \sum_{i=z}^{n'} w_i \cdot \bigl
      (r(\tau_i) - r(\tau_z)
    \bigr).
  \]
  On the other hand, for any $n'$ and $p'$, we have that $T(n', p') = \min_{z
  \in [n']} T(n', p', z)$, because the optimal $p'$ products for the first $n'$
  consumers has a largest risk threshold equal to some consumer risk threshold.
  Combining these equalities gives
  \[
  T(n', p')
  = \min_{z \in [n']} T(n', p', z)
  = \min_{z \in [n']} T(z-1, p'-1) + \sum_{i=z}^{n'} w_i \cdot \bigl(
    r(\tau_i) - r(\tau_z)
  \bigr),
  \]
  as required.

\subsection{Proof of Theorem~\ref{thm:separation}}\label{app:separation}
\begin{proof}
Note there is a one-to-one relation (on some domain $[0,a]$ for risk thresholds) between any risk threshold $\tau$ and its corresponding return given by $r(\tau)$ by Equation~(\ref{eq:return-function}) --- we will therefore (only for simplicity of exposition) construct our instance by defining a set of returns instead of risk thresholds. Let $A \triangleq \{r,2r, \ldots, (p+1)r \}$ be the set of all possible returns in our instance for some constant $r>0$. We will take $r \equiv 1$ for simplicity but our proof extends to any $r >0$.

Our instance construction is simple. If $g \le p+1$, we partition $A$ into $g$ subsets of size at most $\lceil \frac{p+1}{g} \rceil$, and we let each group be defined as one of the partition elements. In this case, there will be one consumer for every return value in $A$. For e.g., if $p=4$ and $g=2$, we can define $G_1 = \{ 1, 2 , 3\}$ and $G_2 = \{ 4, 5\}$. If $g > p + 1$ (allowing consumers having the same return) we let each group be defined by a single return value in $A$. For e.g., if $p=2$ and $g=4$, we can define $G_1 = \{ 1 \}$, $G_2 = \{ 2 \}$, and $G_3 = G_4 = \{ 3 \}$. To formalize this construction, define $s \triangleq \min \{g,p+1\} $ and let $\{P_i\}_{i=1}^{s}$ be a $s$-sized partition of $R$ such that $\max_{i} \vert P_i \vert = \lceil \frac{p+1}{g} \rceil$. Instance $S$ of size $n = \max \{g, p+1\}$ is defined as follows.
\[
S =\{G_k\}_{k=1}^g \quad \text{where} \quad \forall \, k \in [g]: \quad G_k = \begin{cases}  P_k & k \le s \\  P_s & k > s \end{cases}
\]
Let $A_p = \{ B \subseteq A : \vert B \vert = p \}$ and observe that $\vert A_p \vert = p+1$. We have that
\[
\Rfairdet \left( S, p \right) \overset{(1)}{=} \min_{B \in A_p} \left\{ \max_{1 \le k \le g} \regret_{G_k} ( B ) \right\} \overset{(2)}{=} \frac{1}{\max_k \vert G_k \vert} \overset{(3)}{=} \frac{1}{\lceil \frac{p+1}{g} \rceil}
\]
where $(1)$ follows from the definition of $\Rfairdet \left( S, p \right)$ in this specific instance that all consumer returns are specified by the set $A$ of size $p+1$. $(2)$ follows from the fact that for any set of products $B \in A_p$, all groups $G_k$ that have a consumer with return $A \setminus B$ will incur an average regret of $1/|G_k|$. $(3)$ holds because $\max_k \vert G_k \vert = \max_{i} \vert P_i \vert = \lceil \frac{p+1}{g} \rceil$.
Next, by looking at the uniform distribution over $A_p$,
\[
\Rfairrand \left( S, p \right) \overset{(1)}{\le} \max_{1 \le k \le g} \frac{1}{p+1} \sum_{B \in A_p} \regret_{G_k} ( B )  \overset{(2)}{=}  \max_{1 \le k \le g} \frac{1}{p+1} \sum_{r \in G_k} \frac{1}{\vert G_k \vert} = \frac{1}{p+1}
\]
where $(1)$ follows from the definition of $\Rfairrand \left( S, p \right)$. $(2)$ follows from the fact that for every group $k$ and every $r \in G_k$, there is one (and only one) set of products, namely $B = A \setminus \{r\}$, that makes $G_k$ incur a regret of $1/|G_k|$.
We therefore have that
\[
\frac{\Rfairrand \left( S, p \right)}{\Rfairdet \left( S, p \right)} \le \frac{1}{p+1} \left\lceil \frac{p+1}{g} \right\rceil.
\]
\end{proof}

\subsection{Proof of Lemma~\ref{lem:intervalEfficiency}}\label{app:interval}

To ease notation, for any product sets $\cvec^{(1)}, \dots, \cvec^{(g)}$, we let
$\cvec^{(h:k)} = \bigcup_{\ell = h}^k \cvec^{(\ell)}$ denote the union of the
product sets with indices in $\{h, \dots, k\}$.

Suppose our feasibility problem has a solution. It is enough to show that there
exists product sets $\cvec^{(1)}, \dots, \cvec^{(g)}$ such that $\cvec^{(1:g)}$
is a feasible set of at most $p$ products and $\cvec^{(k)} \subset G_k$ is
efficient in $\satset(G_k, p - |\cvec^{(1:k-1)}|, \max(\cvec^{(1:k-1)}),
\kappa)$ for all $k \in [g]$. Note that by the definition of efficiency and a
straightforward induction, any product sets $\cvec^{(1)}, \dots \cvec^{(g)}$ and
alternative product sets $\cvec^{\prime(1)}, \dots, \cvec^{\prime(g)}$
satisfying these properties must have $|\cvec^{\prime (k)}| = |\cvec^{(k)}|$ and
$\max(\cvec^{\prime(k)}) = \max(\cvec^{(k)})$ for all $k \in [g]$, hence
\begin{align*}
\satset_k
&\triangleq \satset(G_k, p - |\cvec^{(1:k-1)}|, \max(\cvec^{(1:k-1)}), \kappa)
\\&= \satset(G_k, p - |\cvec^{\prime (1:k-1)}|, \max(\cvec^{\prime(1:k-1)}), \kappa)
\end{align*}
does not depend on the specific choice of $\cvec^{(1)}, \dots, \cvec^{(g)}$ that
satisfies the above assumptions. Since the algorithm may only output
\textsc{Infeasible} if $\satset_k$ is empty for some $k$, it can only do so when
no $\cvec^{(1)}, \dots, \cvec^{(g)}$ satisfying the above assumptions exists.
When such product sets exist, the algorithm outputs one, and $\cvec^{(1:g)}$ is
guaranteed to use at most $p$ products and have regret at most $\kappa$ in each
group by definition of $\satset(G_k, p - |\cvec^{(1:k-1)}|,
\max(\cvec^{(1:k-1)}), \kappa)$.

Assuming our problem is feasible with $p$ products, we show the following
induction hypothesis: for all $k \leq g$ there exist product sets $\cvec^{(1)},
\ldots, \cvec^{(k)}$ such that $\cvec^{(j+1)}$ is efficient in $\satset(G_j, p -
|\cvec^{(1:j)}|, \max(\cvec^{(1:j)}), \kappa)$ for all $j \leq k-1$, that can be
completed in a set products $\cvec^{(1:k)} \cup \dvec^{(k+1:g)}$ that has size
at most $p$ and guarantees regret at most $\kappa$ in each group, where
$\dvec^{(j)} \subset G_j$.

First, the induction hypothesis immediately hold for $k = 0$: since the problem
is feasible, there exists a product vector $\dvec$ that uses at most $p$
products and guarantees group regret of at most $\kappa$. Now, suppose the
induction hypothesis holds for $k$; we will show it holds for $k+1$. Take
$\cvec^{(1:k)}\cup \dvec^{(k+1:g)}$ that has size at most $p$ and guarantees
regret at most $\kappa$ in each group, such that $\cvec^{(j+1)}$ is efficient in
$\satset(G_j, p - |\cvec^{(1:j)}|, \max(\cvec^{(1:j)}), \kappa)$ for all $j \leq
k-1$. Take $\cvec^{(k+1)}$ to be efficient in $\satset(G_j, p - |\cvec^{(1:k)}|,
\max(\cvec^{(1:k)}), \kappa)$; we have two cases:
\begin{enumerate}
\item $|\cvec^{(k+1)}| = |\dvec^{(k+1)}|$ and $\max(\cvec^{(k+1)}) \geq
\max(\dvec^{(k+1)})$. Let us consider product set $\cvec^{(1:k+1)} \cup \dvec^{(k+2:g)}$. Then, $|\cvec^{(1:k)} \cup \dvec^{(k+1:g)}| =
|\cvec^{(1:k+1)} \cup \dvec^{(k+2:g)}| \leq p$. Further, no group has regret
over $\kappa$. Indeed, compared to when using $\cvec^{(1:k)} \cup
\dvec^{(k+1:g)}$, we have that: i) the regret of groups $G_1$ to $G_k$ is
unaffected as they only use products $\cvec^{(1:k)}$; ii) the regret of group
$G_{k+1}$ stays below $\kappa$ by satisfiability of $\cvec^{(k+1)}$; iii) the
regret of groups $G_{k+2},\ldots,G_{g}$ cannot increase because agents who used
product $\max(\dvec^{(k+1)})$ get weakly lower regret from using
$\max(\cvec^{(k+1)}) \geq \max(\dvec^{(k+1)})$, and the remaining agents can
keep using the same products in $\dvec^{(k+2:g)}$. Since the regret of all
groups remains below $\kappa$ under products $\cvec^{(1:k+1)} \cup
\dvec^{(k+2:g)}$ by the induction hypothesis, this remains true when using products
$\cvec^{(1:k+1)} \cup \dvec^{(k+2:g)}$.
\item $|\cvec^{(k+1)}| < |\dvec^{(k+1)}|$. In that case, let
$\tilde{\dvec}^{(k+2)} = \dvec^{(k+2)} \cup \{b_{k+2}\}$ (where $b_{k+2}$ is the
smallest threshold in group $G_{k+2}$). Consider product offering
$\cvec^{(1:k+1)} \cup \tilde{\dvec}^{(k+2)} \cup \dvec^{(k+3:g)}$. First, we
note that
\begin{align*}
|\cvec^{(1:k+1)} \cup \tilde{\dvec}^{(k+2)} \cup \dvec^{(k+3:g)}|
&\leq | \cvec^{(1:k)} \cup \dvec^{(k+1)} \cup \tilde{\dvec}^{(k+2)} \cup \dvec^{(k+3:g)}| - 1
\\&=  | \cvec^{(1:k)} \cup \dvec^{(k+1:g)}|
\\&\leq p.
\end{align*}
Second, note that the regrets of all groups remain under $\kappa$. Indeed,
compared to when offering products $\cvec^{(1:k)} \cup \dvec^{(k+1:g)}$, we
have: i) the regrets of groups $G_1,\ldots,G_k$ stay the same, as before; ii)
the regret of group $G_{k+1}$ stays below $\kappa$ by satisfiability of
$\cvec^{(k+1)}$, as before; iii)  the regret of groups $G_{k+2},\ldots,G_g$ can
also only decrease, because all agents who were assigned to $\max(\dvec^{(k+1)})
\leq b_{k+2}$ are now assigned to the higher return product $b_{k+2}$, while the
remaining agents stay assigned to the same product in $\dvec^{(k+2:g)}$.
\end{enumerate}
This concludes the proof.

\subsection{Proofs of Generalization Theorems}\label{app:generalization}

\begin{proof}[Proof of Theorem~\ref{lem:min_regret_sample_complexity}]
	Let $f_\cvec (\tau) \triangleq \max_{c_j \le \tau} r
  	(c_j)$ and observe that using this notation, for any $\tau$, $\regret_\tau
  	(\cvec) = r(\tau) - f_\cvec (\tau)$. To prove the claim of the theorem, first note that,
	\begin{equation}\label{eq:triangle}
	\sup_{\cvec \in \reals_{\ge 0}^p} \left| \regret_S (\cvec) - \regret_\D (\cvec) \right| \le \left|  \expect_{\tau \sim S} \left[ r (\tau) \right] - \expect_{\tau \sim \D} \left[ r (\tau) \right] \right| +\sup_{\cvec \in \reals_{\ge 0}^p} \left| \expect_{\tau \sim S}[f_\cvec(\tau)] -  \expect_{\tau \sim D}[f_\cvec(\tau)] \right|
	\end{equation}
	where ``$\tau \sim S$" means sampling $\tau$ from the uniform distribution over $S$. We have that by an application of additive Chernoff-Hoeffding bound (see Lemma~\ref{lem:chernoff}), with probability at least $1-\delta/2$, given the assumption on sample size $n$,
	\begin{equation}\label{eq:tri-1}
	\left|  \expect_{\tau \sim S} \left[ r (\tau) \right] - \expect_{\tau \sim \D} \left[ r (\tau) \right] \right| \le \sqrt{\frac{B^2 \log \left(4/\delta\right)}{2n}} \le \frac{\epsilon}{2}
	\end{equation}
	Now let us focus on the second term appearing in Equation~\eqref{eq:triangle}. Consider any vector of products $\cvec = (c_1, \ldots, c_p) \in \reals_{\ge 0} ^p$ where, without loss of generality, we assume $c_1 \le \dots \le c_p$. Letting $c_0 = 0$ and $c_{p+1} = \infty$, these products partition $\reals_{\geq 0}$
	into $p$ intervals $[c_j, c_{j+1})$ for $j \in [p]$ such that $f_\cvec(\tau) =
	r(c_j)$ for all $\tau \in [c_j, c_{j+1})$. We can rewrite $f_\cvec(\tau)$ as a
	telescoping sum that adds a term $r(c_j) -
	r(c_{j-1})$ for each interval $j$ up to and including the interval containing
	$\tau$:
	\begin{align*}
	f_\cvec (\tau)
	&= \sum_{j=0}^p r(c_j) \ind{c_j \leq \tau < c_{j+1}}
	\\&=  \sum_{j=0}^p r(c_j) \left(\ind{c_j \leq \tau} - \ind{c_{j+1} \leq \tau}\right)
	\\&= \sum_{j=0}^p r(c_j) \ind{c_j \leq \tau} - \sum_{j=0}^p r(c_j)\ind{c_{j+1} \leq \tau}
	\\&= \sum_{j=0}^p r(c_j) \ind{c_j \leq \tau} - \sum_{j=1}^{p+1} r(c_{j-1})\ind{c_{j} \leq \tau}
	\\&= r(c_0) \ind{c_0 \leq \tau}  - r(c_p) \ind{c_{p+1} \leq \tau} +  \sum_{j=1}^p \ind{\tau \geq c_j} \bigl(r(c_j) - r(c_{j-1})\bigr)
	\\&= \sum_{j=1}^p \ind{\tau \geq c_j} \bigl(r(c_j) - r(c_{j-1})\bigr)
	\end{align*}
remembering for the last equality that $c_0$ is the risk-free cash option and has return $r(c_0) = 0$ and that $c_{k+1} = +\infty$ is the final dummy product that corresponds to having an infinite risk and always satisfies $\tau < c_{p+1}$.
	Taking expectations of this expression converts the indicator into the
	complementary CDF (CCDF), and therefore we have
	\begin{align}\label{eq:tri-2}
	\begin{split}
	\sup_{\cvec \in \reals_{\ge 0}^p} \left|
	  \expect_{\tau \sim S}[f_\cvec (\tau)] - \expect_{\tau \sim \D}[f_\cvec(\tau)]
	\right|
	&=
	\sup_{\cvec \in  \reals_{\ge 0}^p} \left|
	  \sum_{j=1}^p \left( \prob_{\tau \sim S}(\tau \geq c_j) - \prob_{\tau \sim \D}(\tau \geq c_j) \right)
		\cdot \bigl(r(c_j) - r(c_{j-1})\bigr)
	\right| \\
	&\leq
	  \sup_{\cvec \in  \reals_{\ge 0}^p} \sum_{j=1}^p \left| \prob_{\tau \sim S}(\tau \geq c_j) - \prob_{\tau \sim \D}(\tau \geq c_j) \right|
		\cdot \bigl(r(c_j) - r(c_{j-1})\bigr) \\
	&\leq
	  \sup_{t \in \reals} \left|\prob_{\tau \sim S}(\tau \geq t) - \prob_{\tau \sim D}(\tau \geq t)\right| \cdot \sum_{j=1}^p \bigl(r(c_j) - r(c_{j-1})\bigr) \\
	&\leq
	  \sqrt{\frac{\log \left(4/\delta \right)}{2n}} \cdot \left( r(c_p) - r(c_1) \right) \\
	  & \leq \sqrt{\frac{B^2 \log \left(4/\delta \right)}{2n}} \\
	  &\le \frac{\epsilon}{2}
	\end{split}
	\end{align}
	where the first inequality follows from the triangle inequality and the fact that $r$ is non-decreasing:
	$r(c_j) \geq r(c_{j-1})$ for $j = 1, \dots, p$.
	The third inequality holds with probability $1-\delta/2$ and follows from the Dvoretzky-Kiefer-Wolfowitz inequality (see Lemma~\ref{lem:strictDKW}). The last inequality follows by the assumption on sample size $n$. Combining Equations~\eqref{eq:triangle} and \eqref{eq:tri-1} and \eqref{eq:tri-2}, completes the proof of the theorem.
\end{proof}

\begin{proof}[Proof of Theorem~\ref{lem:fair_sample_complexity}]
Let $S = \{\tau_i\}_{i=1}^n$ be a set of consumers of size $n$ drawn from the
distribution $\D$ that is partitioned into $g$ groups: $\{G_k\}_{k=1}^g$. Let
$n_k = \vert G_k \vert$ denote the size of group $k$. It follows from the
uniform convergence of Theorem~\ref{lem:min_regret_sample_complexity} that for
any group $k$, so long as $n_k \geq 2B^2 \epsilon^{-2} \log \left(4/\delta'
\right)$, with probability $1-\delta'$, $\sup_{\cvec \in \reals_{\ge 0}^p}
\left| \regret_{G_k} (\cvec) - \regret_{\D_k} (\cvec) \right| \le \epsilon$.
This implies by a union bound that with probability at least $1-\delta/2$, so
long as $n_k \geq 2B^2 \epsilon^{-2} \log \left(8g/\delta \right)$ for all $k$,
\[
\sup_{\cvec \in \reals_{\ge 0}^p, \, k \in [g]} \left| \regret_{G_k} (\cvec) - \regret_{\D_k} (\cvec) \right| \le \epsilon
\]
But note for any group $k$, $n_k \sim \text{Bin}(n,\pi_k)$ where
$\text{Bin}(m,q)$ denotes a binomial random variable with $m$ trials and success
probability $q$. By an application of the Multiplicative Chernoff bound (see
Lemma~\ref{lem:mult-chernoff}), as well as a union bound, we have that with
probability $1-\delta/2$, for any group $k$,
\[
n_k \ge n\pi_k - \sqrt{2 n \pi_k \log \left( 2g / \delta \right)} \ge 2B^2 \epsilon^{-2} \log \left(8g/\delta \right)
\]
where the second inequality follows by the assumption on $n$ in the theorem statement. We can therefore conclude by another union bound that, with probability at least $1-\delta$,
$$
\sup_{\cvec \in \reals_{\ge 0}^p, \, k \in [g]} \left| \regret_{G_k} (\cvec) - \regret_{\D_k} (\cvec) \right| \le \epsilon
$$
The proof is complete by noting that
\begin{align*}
\sup_{\cC \in \Delta \left(\reals_{\ge 0}^p \right), \, k \in [g]} \left| \expect_{\cvec \sim \cC} \left[ \regret_{G_k} (\cvec) \right] - \expect_{\cvec \sim \cC} \left[ \regret_{\D_k} (\cvec) \right] \right|
&\le \sup_{\cC \in \Delta \left(\reals_{\ge 0}^p \right), \, k \in [g]} \expect_{\cvec \sim \cC} \left[ \left|  \regret_{G_k} (\cvec)  - \regret_{\D_k} (\cvec)  \right| \right] \\
&= \sup_{\cvec \in \reals_{\ge 0}^p, \, k \in [g]} \left| \regret_{G_k} (\cvec) - \regret_{\D_k} (\cvec) \right|
\end{align*}
\end{proof}

\begin{proof}[Proof of Lemma~\ref{lem:pdim}]
	Our goal is to show that the class of functions $\cF_p$ can P-shatter at least
	$p$ points (or consumer risk limits). Let $\tau_1 < \dots < \tau_p \in [a,b]$
	be any sequence of increasing consumer risk limits. The high-level idea is to
	design a target $\gamma_i$ and a product $c_i$ for consumer $i$ so that the
	return for consumer $i$ is at least the target $\gamma_i$ if and only if the
	product $c_i$ is included (as opposed to a default product lower than any
	target). Given that we have $p$ products to choose, we can decide to include
	the product for each consumer or not independently, and therefore we can
	achieve all above/below target patterns to shatter the consumers.

	Formally,	define targets $\gamma_1, \dots, \gamma_p$ and a collection of
	candidate products $c_0, \dots, c_p$ so that
	\[
	r(c_0)
	< \gamma_1 < r(c_1) < r(\tau_1)
	< \gamma_2 < r(c_2) < r(\tau_2)
	< \dots
	< \gamma_p < r(c_p) < r(\tau_p).
	\]
	Note that this is always possible since the return function $r$ is strictly
	increasing on the interval $[a,b]$. For any product vector $\mathbf{d} \in
	\reals^p$ of product risk thresholds, we have that $f_{\mathbf{d}}(\tau_i)
	\geq \gamma_i$ if and only if there is some product $d_i$ such that $r(d_i)
	\in [\gamma_i, r(\tau_i)]$. Now, for any $T \subset [p]$, define a product
	risk threshold vector $\mathbf{d} \in \reals^p$ by $d_i = c_i$ if $i \in T$
	and $d_i = c_0$ if $i \not \in T$. By the above argument, and the definition
	of $c_i$ and $\gamma_i$, it follows that $f_{\mathbf{d}}(\tau_i) \geq
	\gamma_i$ if and only if $i \in T$. Therefore $\cF_p$ shatters $\tau_1, \dots,
	\tau_p$ and $\pdim(\cF_p) \geq p$.
\end{proof}

}

\end{document}